\documentclass[]{fairmeta}

\usepackage[all]{hypcap}
\usepackage{hyperref}[citecolor=magenta]

\hypersetup{
    colorlinks = true,
    citecolor = {magenta},
}

\usepackage{microtype}
\usepackage{graphicx}
\usepackage{booktabs}
\usepackage{float}

\usepackage{amsmath}
\usepackage{amssymb}
\usepackage{mathtools}
\usepackage{amsthm}
\usepackage{mathrsfs}
\usepackage{nicefrac}
\usepackage{dsfont}
\usepackage{enumitem}
\usepackage{tikz}
\usepackage{wrapfig}
\usepackage{float}
\usepackage[subtle]{savetrees}

\setlength\parindent{0pt}

\usepackage{xspace}

\usepackage{amsmath}
\usepackage{amssymb}
\usepackage{mathtools}
\usepackage{amsthm}
\usepackage{bbm}
\usepackage{fleqn, tabularx}
\usepackage{multirow}
\usepackage[export]{adjustbox}
\usepackage{colortbl}

\usepackage{algpseudocode}
\usepackage{setspace}

\usepackage{color}
\definecolor{deepblue}{rgb}{0,0,0.5}
\definecolor{deepred}{rgb}{0.6,0,0}
\definecolor{deepgreen}{rgb}{0,0.5,0}
\definecolor{boost_correct_to_correct}{HTML}{66C2A5}
\definecolor{default_correct_to_correct}{HTML}{fc8d62}
\definecolor{dup_correct_to_correct}{HTML}{8da0cb}
\definecolor{new_correct_to_correct}{HTML}{e78ac3}

\newcommand\pythonstyle{\lstset{
basicstyle=\ttfamily\footnotesize,
language=Python,
morekeywords={self, clip, exp, mse_loss, uniform_sample, concatenate, logsumexp},              
keywordstyle=\color{deepblue},
emph={MyClass,__init__},          
emphstyle=\color{deepred},   
stringstyle=\color{deepgreen},
frame=single,                       
showstringspaces=false
}}

\lstnewenvironment{python}[1][]
{
\pythonstyle
\lstset{#1}
}
{}

\newcommand\pythoninline[1]{{\pythonstyle\lstinline!#1!}}

\makeatletter
\def\mathcolor#1#{\@mathcolor{#1}}
\def\@mathcolor#1#2#3{%
  \protect\leavevmode
  \begingroup
    \color#1{#2}#3%
  \endgroup
}
\makeatother

\usepackage[textsize=tiny]{todonotes}
\usepackage{algorithm}
\usepackage{amssymb}

\usepackage{crossreftools}
\pdfstringdefDisableCommands{
    \let\Cref\crtCref
    \let\cref\crtcref
}

\usepackage[suppress]{color-edits}

\usepackage{dsfont}
\usepackage{nicefrac}

\newtcolorbox{analysisbox}[1][]{
    enhanced jigsaw,
    colback=white,
    colframe=blue!75!black,
    fonttitle=\bfseries,
    boxsep=5pt,
    left=5pt,
    right=5pt,
    top=5pt,
    bottom=5pt,
    title=#1,
}
\newtcolorbox{promptbox}[2][]{  
listing only,
enhanced,
breakable,
colback=rliableolive!13!white,
colframe=black,
fontupper=\ttfamily,
title=#2,
#1}

\newcommand{\bcirc}[1]{%
  \tikz[baseline=(char.base)]{
    \node[
      circle,
      inner sep=0.4pt, %
      fill=black
    ] (char) {\color{white}\bfseries #1};
  }%
}

\definecolor{rliableolive}{HTML}{BBCC33}
\definecolor{rliableblue}{HTML}{77AADD}
\definecolor{rliablered}{HTML}{EE8866}

\tcbset{
  aibox/.style={
    width=\linewidth,
    top=8pt,
    bottom=4pt,
    colback=rliableolive!13!white,
    colframe=black,
    colbacktitle=black,
    enhanced,
    center,
    attach boxed title to top left={yshift=-0.1in,xshift=0.15in},
    boxed title style={boxrule=0pt,colframe=white,},
  }
}
\newtcolorbox{AIbox}[2][]{aibox,title=#2,#1}
\definecolor{lightblue}{rgb}{0.22,0.45,0.70}%

\usepackage{pifont}
\usepackage{soul}
\definecolor{highlightmistake}{RGB}{255, 179, 179}
\definecolor{highlightcorrect}{RGB}{179, 255, 179}

\newcommand{\highlight}[1]{\textbf{\textit{\textcolor{lightblue}{#1}}}}

\title{
Reuse your FLOPs: Scaling RL on Hard Problems by Conditioning on Very Off-Policy Prefixes
}

\author[1,2,*]{Amrith Setlur}
\author[1]{Zijian Wang}
\author[1, *]{Andrew Cohen}
\author[1,3, *,\dagger]{Paria Rashidinejad}
\author[1,*,\dagger]{Sang Michael Xie}

\affiliation[1]{Meta}
\affiliation[2]{Carnegie Mellon University}
\affiliation[3]{University of Southern California}
\contribution[*]{Work done at Meta}
\contribution[\dagger]{Authors jointly advised this work.}

\newcommand{\E}{\mathbb{E}}

\newcommand{\cF}{\mathcal{F}}

\newcommand{\by}{\mathbf{y}}
\newcommand{\bx}{\mathbf{x}}

\newcommand{\bs}{\mathbf{s}}

\newcommand{\pit}{\pi^{t}}

\newcommand{\methodname}{PrefixRL}

\newcommand{\Doff}{\mathcal{D}_\mathrm{off}}
\newcommand{\Dpre}{\mathcal{D}_\mathrm{pre}}
\definecolor{blanchedalmond}{rgb}{1.0, 0.92, 0.8}
\definecolor{carmine}{rgb}{0.59, 0.0, 0.09}
\definecolor{lightblue}{rgb}{0.22,0.45,0.70}%

\newtheorem{theorem}{Theorem}[section]
\newtheorem{lemma}[theorem]{Lemma}

\newtheorem{proposition}[theorem]{Proposition}

\newtheorem{assumption}[theorem]{Assumption}

\renewcommand{\mathbf}{\boldsymbol}

\makeatletter
\def\Ddots{\mathinner{\mkern1mu\raise\p@
\vbox{\kern7\p@\hbox{.}}\mkern2mu
\raise4\p@\hbox{.}\mkern2mu\raise7\p@\hbox{.}\mkern1mu}}
\makeatother

\newcommand{\paren}[1]{\left( #1 \right)}

\newcommand{\brck}[1]{\left [ #1 \right ] }
\newcommand{\yix}{\by_{i}^{\bx}}

\newcommand{\KL}{\mathrm{KL}}
\newcommand{\pibase}{\pi^{0}}

\numberwithin{equation}{section}

\definecolor{amaranth}{rgb}{0.9, 0.17, 0.31}
\definecolor{antiquebrass}{rgb}{0.8, 0.58, 0.46}
\definecolor{antiquefuchsia}{rgb}{0.57, 0.36, 0.51}
\definecolor{chromeyellow}{rgb}{0.31, 0.47, 0.26}

\usepackage[dvipsnames]{xcolor}
\definecolor{maj5}{HTML}{2b8cbe}
\definecolor{maj5Imp}{HTML}{084081}

\definecolor{seq5wo}{HTML}{d95f0e}
\definecolor{seq5woImp}{HTML}{662506}

\definecolor{seq5w}{HTML}{6a51a3}
\definecolor{seq5wImp}{HTML}{3f007d}

\definecolor{selfwo}{HTML}{d95f0e}
\definecolor{selfwoImp}{HTML}{662506}

\definecolor{selfw}{HTML}{6a51a3}
\definecolor{selfwImp}{HTML}{3f007d}

\definecolor{glorewo}{HTML}{d95f0e}
\definecolor{glorewoImp}{HTML}{662506}

\definecolor{glorew}{HTML}{6a51a3}
\definecolor{glorewImp}{HTML}{3f007d}

\definecolor{vstar}{HTML}{d95f0e}
\definecolor{vstarImp}{HTML}{662506}

\abstract{
Typical reinforcement learning (RL) methods for LLM reasoning waste compute on hard problems, where correct \emph{on-policy} traces are rare, policy gradients vanish, and learning stalls. To bootstrap more efficient RL, we consider reusing old sampling FLOPs (from prior inference or RL training) in the form of \emph{off-policy} traces. Standard off-policy methods supervise against off-policy data, causing instabilities during RL optimization. We introduce \textbf{PrefixRL}, where we \emph{condition} on the prefix of successful off-policy traces and run on-policy RL to complete them, side-stepping off-policy instabilities. PrefixRL boosts the learning signal on hard problems by modulating the difficulty of the problem through the off-policy prefix length. We prove that the PrefixRL objective is not only consistent with the standard RL objective but also more sample efficient. Empirically, we discover \textbf{back-generalization}:  training \emph{only} on prefixed problems generalizes to \emph{out-of-distribution} unprefixed performance, with learned strategies often differing from those in the prefix.  In our experiments, we source the off-policy traces by rejection sampling with the base model, creating a self-improvement loop. On hard reasoning problems, PrefixRL reaches the same training reward 2$\times$ faster than the strongest baseline (SFT on off-policy data then RL), even after accounting for the compute spent on the initial rejection sampling, and increases the final reward by 3$\times$. The gains transfer to held-out benchmarks, and PrefixRL is still effective when off-policy traces are derived from a different model family, validating its flexibility in practical settings.

}

\date{\today}
\correspondence{\email{asetlur@cs.cmu.edu}, \email{zijianwang@meta.com}}

\begin{document}

\maketitle

\vspace{-0.1in}
\section{Introduction}
\label{sec:introduction}
\vspace{-0.1in}

Reinforcement learning (RL) is the de facto method to boost large language model (LLMs) reasoning, especially for math and coding~\citep{Polaris2025,liu2025prorlprolongedreinforcementlearning,guo2025deepseek}.
Most successful RL recipes~\citep{ahmadian2024basicsrevisitingreinforcestyle,yu2025dapoopensourcellmreinforcement} are \emph{on-policy}: sample multiple reasoning traces (rollouts) from the current model and derive updates from correct (and incorrect) traces. This paradigm breaks down on \emph{hard problems} with low pass@$k$ (e.g., pass@2k $\approx$ 0), where the model rarely samples a correct trace.
In this regime, learning stalls as RL spends enormous amounts of sampling FLOPs without receiving any learning signal, and RL rewards plateau.

In practice, we are rarely solving these problems for the first time 
since earlier RL runs or inference on previous models may have spent compute on the same (or similar) hard problems.
The natural question now is how to reuse this ever-growing dataset of \emph{off-policy} traces, which often contains some correct traces even for very hard problems, in order to guide the online RL policy towards higher-rewarding states and accelerate on-policy RL.

A straightforward approach is to treat the off-policy traces as supervision: perform supervised fine-tuning (a.k.a., mid-training or continued pretraining) on the correct off-policy traces followed by standard on-policy RL~\citep{wang2025octothinkermidtrainingincentivizesreinforcement}. However, SFT on a small set of correct traces can lead to memorization~\citep{chu2025sft} and entropy collapse, which hurts exploration during subsequent RL~\citep{zhang2025onpolicyrlmeetsoffpolicy}.
Alternatively, we can use off-policy traces directly in RL via importance weighting, but this is often unstable due to high-variance gradient estimates~\citep{liuli2025,yan2025learningreasonoffpolicyguidance}. Both options use off-policy traces as target supervision, and since these off-policy traces are very low probability under the RL policy, this leads to suboptimal RL optimization.

\begin{figure}
    \centering
    \includegraphics[width=0.95\linewidth]{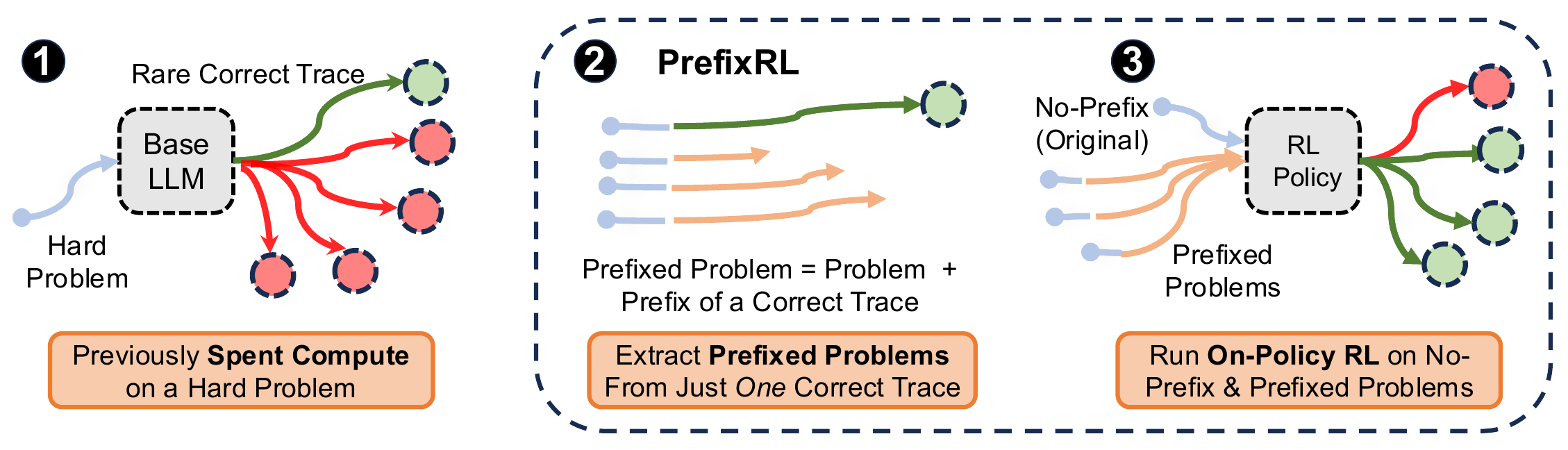}
    \vspace{-0.1in}
    \caption{\textbf{PrefixRL: On-Policy RL Conditioned on Off-Policy Prefixes.} 
    We leverage previously spent compute  (\protect\bcirc{1}) on hard problems in the form of correct off-policy traces rejection sampled from the base LLM we start RL from. Off-policy traces could also come from other model families or previous RL  runs.  We append prefixes of a single correct off-policy trace per problem to the original problem, creating prefixed problems (\protect\bcirc{2}). Then, we simply run on-policy RL on prefixed and no-prefix (original) problems (\protect\bcirc{3}). PrefixRL places the RL policy in higher-rewarding states, which boosts the learning signal. Performance transfers from the prefixed to no-prefix problems via a phenomenon we call back-generalization.    
    } 
    \label{fig:intro_figure}
\end{figure}

To avoid these pitfalls, we propose \highlight{PrefixRL}: run on-policy RL \emph{conditioned} on prefixes of correct off-policy traces instead of directly supervising on them (Figure~\ref{fig:intro_figure}). \emph{First}, we extract and fix a few off-policy prefixes and append them to the original problem to create \emph{prefixed problems}.
\emph{Second}, we run on-policy RL on both \emph{no-prefix} (original) problems and prefixed problems, where gradients are masked on the off-policy prefix. 
The prefixes extracted from correct off-policy traces place the current RL policy in states that are more likely to lead to a correct answer on hard problems, reducing gradient variance and increasing the strength of the learning signal.

\highlight{PrefixRL is consistent with and more sample-efficient than standard RL.} However, it is not immediately clear what the effect on the bias is.
In Section~\ref{subsec:theory}, we prove that when the prefixes are correct and realizable in the model class, (i) maximizers of the PrefixRL objective also maximize performance on the standard RL objective; and (ii) since the prefixes lessen the exploration burden, PrefixRL reduces suboptimality gap with less samples compared to standard RL (by a factor of context length). Overall, PrefixRL changes the on-policy RL objective by using off-policy prefixes solely to guide exploration and unblock training on hard problems.

\highlight{Back-generalization.} 
Beyond the theory, we empirically find an additional phenomenon behind the gains in PrefixRL we call \emph{back-generalization}, where on-policy RL on \emph{only} prefixed problems substantially boosts test performance on the original no-prefix problems, which were never trained on.
Beyond the generalization in the face of train/test mismatch, back-generalization is distinctive for two reasons.
First, back-generalization is a type of \textit{generalization via shared parameters} because it alters the next-token distribution on prefixes it was never trained on (impossible in the tabular RL setting).  
Second, we find that \emph{back-generalization can be even more powerful than standard generalization} in RL (transfer across related problems or environments).
We show this in an in-context learning setup, where we run RL on problems prefixed with another problem and reasoning trace in context. We find that training on a problem P1 conditioned on a related problem P2 in context improves generalization from P1 to P2 considerably more than directly running RL on the problem P1 (see Section~\ref{subsec:backgen_icl}).

\highlight{PrefixRL can discover and learn strategies beyond what is provided in the prefix.} 
Interestingly, the model does not simply back-generalize by imitating the off-policy prefix it is conditioned on.
Through controlled experiments, we find that PrefixRL is more compute-efficient than standard RL at amplifying successful strategies and rejecting suboptimal ones, \emph{even} when the suboptimal strategy is {explicitly present} in the off-policy prefix. As a result of observing non-zero rewards (and advantages) more often, we hypothesize that PrefixRL allows the model to more quickly identify the flaws in the suboptimal strategy and use this insight to find a better strategy (see Section~\ref{subsec:backgen_synthetic}).

\begin{figure}
    \centering
    \includegraphics[width=0.44\linewidth]{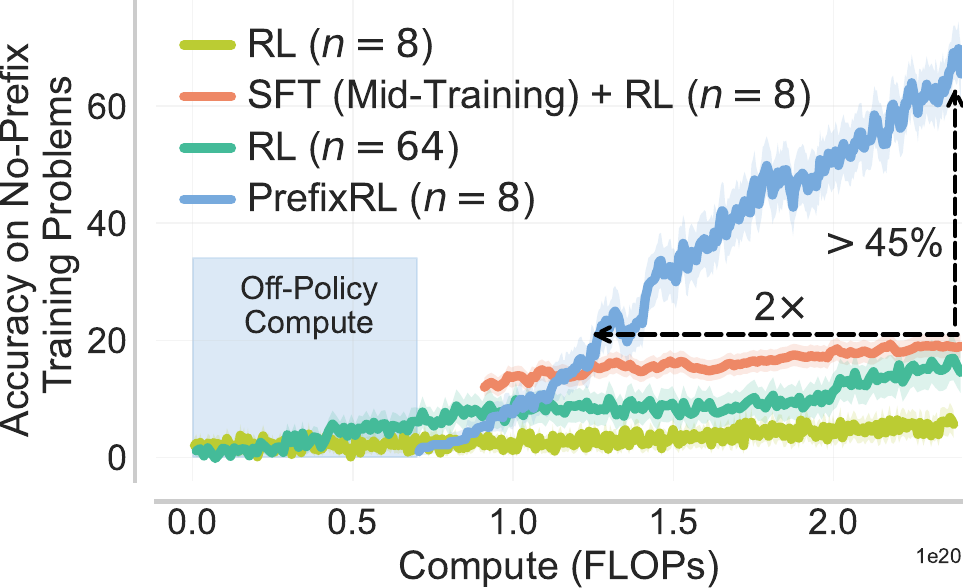}
    \hspace{0.5cm}
    \includegraphics[width=0.44\linewidth]{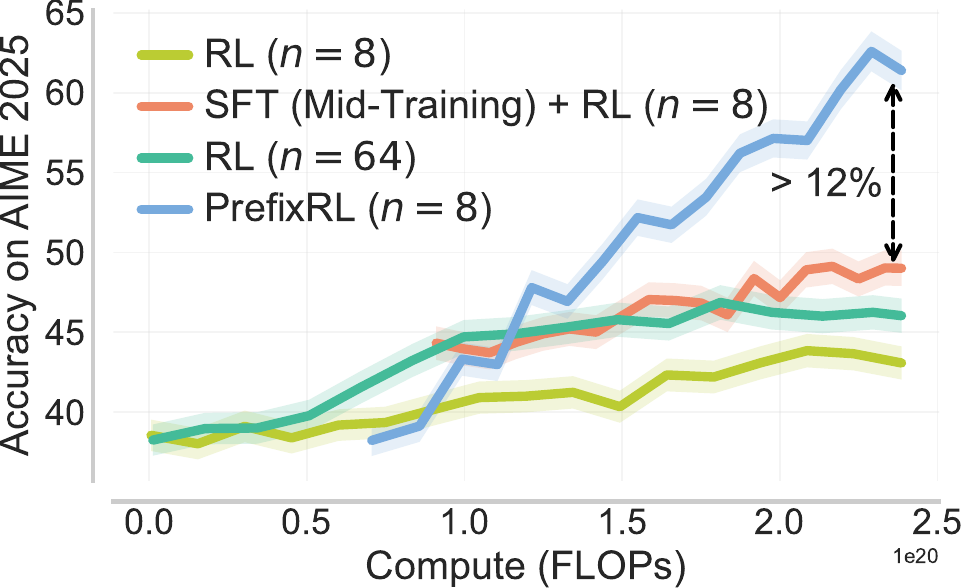}
    \vspace{-0.06in}
    \caption{\textbf{PrefixRL affords a self-improvement pipeline that recycles RL flops on hard problems.}  
    We instantiate PrefixRL for {self-improvement} by collecting a dataset of off-policy traces through large-scale rejection sampling on the base LLM (distilled Llama3.1-8B). 
    In FLOPs-matched training, PrefixRL outperforms the strongest baseline (SFT on rejection-sampled data + RL): 2$\times$ higher compute efficiency (including rejection-sampling cost) and $>$45$\%$ (over 3$\times$ relative)
    higher final training accuracy on no-prefix training problems \emph{(left)}, with gains transferring to standardized evals such as AIME ’25 \emph{(right)}.  
    } 
    \vspace{-0.15in}
    \label{fig:intro_results_figure}
\end{figure}

\highlight{PrefixRL improves both compute efficiency and final performance.} 
In our experiments, we instantiate PrefixRL in a \emph{self-improvement setting} by collecting a dataset of off-policy traces through large-scale rejection sampling on the base model.
On hard problems in training, PrefixRL improves compute-efficiency over the strongest mid-training baseline (SFT on the off-policy data followed by on-policy RL) by $2\times$, even when we account for the initial compute spent on collecting the off-policy traces via rejection sampling, and training accuracy by $>45\%$ (over 3$\times$ relative) on the original no-prefix problems (Figure~\ref{fig:intro_results_figure} \textit{(left)}).
These gains transfer to held-out benchmarks: for example, on AIME '25, PrefixRL improves pass@1 by 12\% over the strongest mid-training baseline in a compute-matched comparison.
Finally, we find that PrefixRL is still effective when off-policy prefixes are sourced from Qwen3-4B-instruct while the RL policy is a distilled Llama-3.1-8B-instruct. This setting provides similar compute and accuracy gains, demonstrating the flexibility of PrefixRL to the off-policy data source and model size.

\vspace{-0.1in}
\section{Preliminaries}
\label{sec:prelim}
\vspace{-0.1in}

\highlight{Setup.} We use $\bx$ to denote a problem and $\by=(y_1,\ldots,y_{H})$ for a response of $H$ tokens sampled auto-regressively from an LLM or policy $\pi$ in policy class $\Pi$, where $\by_{:h}$ refers to the prefix consisting of first $h$ tokens in $\by$. We use $\pibase$ to denote the base  (pre-trained) LLM that we want to post-train  on a dataset of hard problems $\mathcal{D} =: \{\bx_i\}_{i=1}^{|\mathcal{D}|}$ with verifiable rewards.  For each $\bx\in \mathcal{D}$, we have an outcome reward  $r(\bx_i, \by)$ which is 1 when $\by$ is correct and 0 when incorrect.
We say $\bx$ is a  \emph{hard problem} for policy $\pi^0$ if pass@512 under $\pi^0$ is $\approx 0$ for $\bx$.  We use $J(\pi) := \E_{\bx\sim \rho} \E_{\by \sim \pi(\cdot \mid \bx)} r(\bx, \by)$ to denote the performance of $\pi$ on $\rho$ which is the empirical distribution over $\cal{D}$. 
For full set of notations, see Appendix~\ref{sec:additional_notation}. Our goal is to train  $\pibase$ on $\cal{D}$,  
to  maximize the return $J(\pi)$  with access to previously spent inference compute on $\pibase$ (or on models fine-tuned from it), available  in the form of correct off-policy traces $\Doff$. 

\highlight{Source of off-policy traces.}
To situate ourselves in a \emph{self-improvement} setup, we mainly source the off-policy traces via rejection sampling done on top of the base model $\pibase$. Concretely, for each $\bx \in \mathcal{D}$, we collect a single correct trace by running rejection sampling on  $\pibase$ until we see a correct trace. 
Therefore, if the pass@1 under $\pibase$ is $p_{\bx}$ on  $\bx$, then in expectation we will sample $\nicefrac{1}{p_{\bx}}$ traces to get a correct one. Doing this for every $\bx \in \cal{D}$ gives us $\Doff$, where $|\Doff| = |\cal{D}|$.   
In theory, we assume that the empirical distribution defined by $\Doff$ can be \emph{realized} (perfectly fitted) by some  $\mu \in \Pi$. 
In practice, $\Doff$ can also be curated with sophisticated inference algorithms that may not be representable by using greater compute depth than the architecture (by scaffolding sequential and parallel compute) or having oracle access. We show the flexibility of the off-policy source in  Section~\ref{sec:experiments}.

\highlight{RL algorithms.} For RL that does not directly train on $\Doff$, we use the REINFORCE~\citep{ahmadian2024basicsrevisitingreinforcestyle} update. For others like importance-weighted off-policy RL see CISPO update in Appendix~\ref{app:additional_baselines}. Starting from $\pi^0$, REINFORCE iteratively updates $\pi^t$ by ascending $J(\pi)$ using the gradient: $\E_{\bx \sim \rho} \E_{\by \sim \pit(\cdot \mid \bx)} \brck{A_{\pit}(\bx, \by) \cdot \log\; \pit(\by \mid \bx)}$.
Following \cite{guo2025deepseek}, we sample $n$ \textit{i.i.d.} traces from $\pit(\by \mid \bx)$ to estimate the expectation in the policy gradient, and use the group baseline $\hat A(\bx,\by_i)=r(\bx,\by_i)-\tfrac{1}{n}\cdot \sum_{j=1}^n r(\bx,\by_j)$ for the advantage. 
On hard problems with pass@$n\approx 0$, all $n$ samples typically fail, so $\hat A$ (and the gradient) is usually zero, yielding the \emph{stalling regime}.

\vspace{-0.1in}
\section{PrefixRL: On-Policy RL on Very Off-policy Prefixes}
\label{sec:method}
\vspace{-0.1in}

\begin{figure}[!t]
    \centering
    \includegraphics[width=0.9\linewidth]{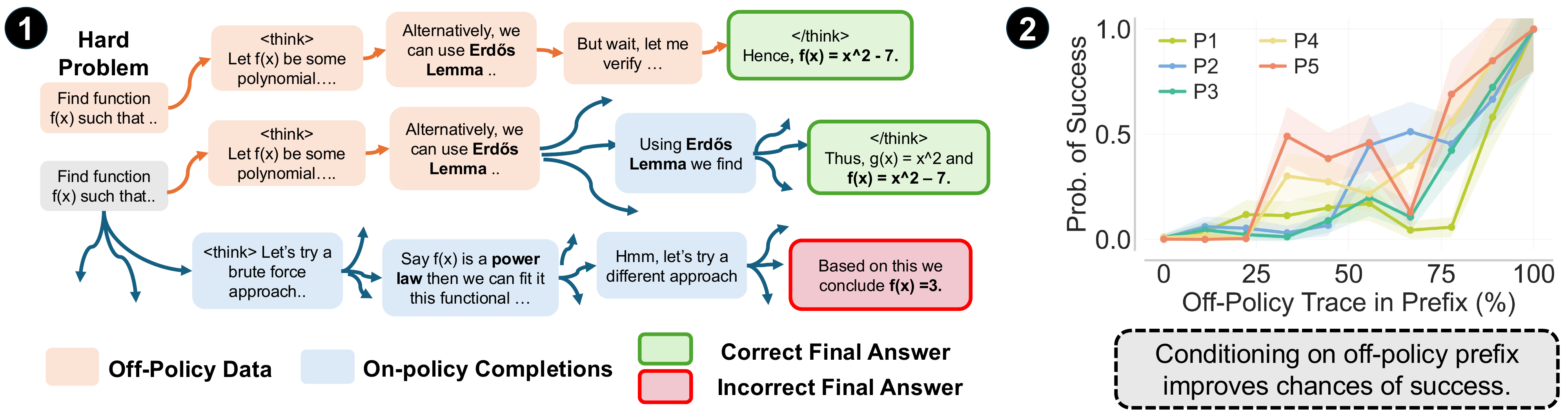}
    \vspace{-0.1in}
    \caption{\textbf{Prefixing on off-policy trace improves probability of future success.} 
    When we condition on prefixed problems, we increase the accuracy by placing the policy at key strategy-revealing states (Erd\"os Lemma in example  \protect\bcirc{1}).
    For five problems (P1-P5) we plot accuracy when conditioning on prefixes of varying lengths, as a proportion of the full off-policy prefix length (\protect\bcirc{2}). 
    The accuracy is near zero until a key state is visited, after which it jumps sharply.
    }
    \label{fig:prefixrl-fig}
    \vspace{-0.2in}
\end{figure}

In this section, we introduce the \highlight{PrefixRL} framework,
which conditions on prefixes from $\Doff$ to guide on-policy exploration, boosting success on prefixed problems and transferring to the original no-prefix problems. We also prove some theoretical properties of the PrefixRL trained model.  Before that, we discuss a more direct approach first. 

\highlight{Directly training on $\Doff$.}
Using $\Doff$ directly via (i) SFT (mid-training) or (ii) importance-weighted off-policy RL~\citep{degris2012off} can degrade exploration and stability in our setting. Warm-starting the RL run with SFT can boost pass@1 but quickly collapses entropy, leading to a drop in pass@64 after $\sim$100 RL iterations (Figure~\ref{fig:midtrain_and_offpolicyrl}(a,b) in Appendix~\ref{app:offpolicy_as_supervision}). While early-stopping SFT may preserve randomness~\citep{wang2025octothinkermidtrainingincentivizesreinforcement}, for these issues to go away we would eventually need a much larger $|\Doff|$ which would raise upfront cost. Off-policy RL on $\Doff$ is often unstable due to large-variance or clipped importance weights, yielding biased gradients~\citep{agarwal2021theory} and potential reward collapse/memorization (Figure~\ref{fig:midtrain_and_offpolicyrl}(c,d))~\citep{kang2024learning,setlur2024rl}.

\highlight{Creating prefixed problems in \methodname{}.} 
Instead of imitating off-policy traces, PrefixRL runs on-policy RL conditioned on off-policy prefixes (in addition to the original no-prefix problems).
Crucially, the gradients are always masked on the off-policy prefix, avoiding the instability of policy gradients on very off-policy tokens.
We create a dataset of prefixed problems $\Dpre$ by taking prefixes from a correct trace $\by^{\bx} \in \Doff$ (for a problem $\bx$) and appending the first $h$ tokens ($\by^{\bx}_{1: h}$) in $\by^{\bx}$ to the original problem $\bx$, creating the prefixed problem $\mathrm{concat}(\bx, \by^{\bx}_{1:h})$ (see Appendix~\ref{subsec:example_prefixed} for an example).
We create multiple prefixed problems for every original problem by choosing multiple values of the prefix lengths $h$. 
We choose the values of $h$ such that, conditioned on the prefix, the base LLM has a reasonable accuracy under base LLM (see Section~\ref{sec:experiments}).
Typically these are states revealing a high-level problem-solving strategy which the base LLM has little probability of sampling on its own (see Figure~\ref{fig:prefixrl-fig}).

\highlight{PrefixRL training objective.}
The \methodname{} objective in \eqref{eq:prefix-rl-obj} optimizes rewards on both prefixed problems in $\cal{D}_\mathrm{pre}$ and no-prefix ones in $\mathcal{D}$ within a maximum context length of $H$ tokens.
Note that the reward $r(\bx, \cdot)$ for any prefixed problem $\bx_\mathrm{pre}$ is identical to its no-prefix counterpart.
{
\setlength{\abovedisplayskip}{4pt}
\setlength{\belowdisplayskip}{4pt}
\begin{align}
    \textbf{\textcolor{lightblue}{(PrefixRL)}} \quad \;\; \max_{\pi} \;\bigg( \underbrace{\sum_{ \bx_\mathrm{pre} \in \Dpre} \;  \E_{\by \sim \pi(\cdot \mid \bx_\mathrm{pre})} \left[r(\bx_\mathrm{pre}, \by)\right]}_{\textbf{prefixed problems}} \;\; + \;\; \underbrace{\sum_{\bx \in \cal{D}} \; \E_{\by \sim \pi(\cdot \mid \bx)} \left[r(\bx, \by)\right]}_{ \textbf{\highlight{Standard RL}: No-Prefix Problems}} \bigg)   
    \label{eq:prefix-rl-obj}
\end{align}
}

\begin{AIbox}{Takeaway: PrefixRL conditions on off-policy traces instead of using them as supervision.}
The correct traces in $\Doff$ are very off-policy and serve as poor supervision targets for SFT or off-policy RL. PrefixRL never trains the RL policy on $\Doff$ and instead runs on-policy RL  conditioned on prefixed problems (original problems appended with off-policy prefixes in $\Doff$) along with the original problems.
\end{AIbox}

\subsection{PrefixRL Objective-Consistency and Sample-Efficiency Guarantees}
\label{subsec:theory}

Prefixing problems with partial correct traces   should place the policy in a higher-rewarding state (Figure~\ref{fig:prefixrl-fig}).
The central theoretical question, however, is not whether learning the prefixed-problems are easier, but instead whether optimizing over such \emph{augmented} problem sets provably improves the performance on the \emph{original} RL objective $J(\pi)$. 
In general, training on an altered input distribution could change the objective away from maximizing $J(\pi)$.

We show that this is \emph{not} the case for \methodname{} as long as the prefixes come from correct traces generated by a realizable policy. Concretely, we prove: (i) \textit{objective consistency:} every maximizer of the PrefixRL objective is also a maximizer of $J(\pi)$; and (ii) \textit{sample complexity guarantees and improvement over online RL:} for a natural policy gradient variant, PrefixRL achieves a smaller suboptimality bound, which translates to a smaller number of on-policy samples required to reach a given reward $J(\pi)$. In other words, we formally show that \textbf{PrefixRL reuses your FLOPs}: it converts information already paid for in logged prefixes into sample-complexity advantages over standard RL.

\highlight{PrefixRL objective is consistent with standard RL.}
We make the following assumption that the prefixes are taken from correct traces and that they are realized by some policy $\mu$, \textit{i.e.}, there is a policy $\mu$ that can perfectly fit $\Doff$.
\begin{assumption}[\textbf{Realizability~and~ correctness~~of~~off-policy~~traces}]
\label{assm:realizable} For any $(\bx, \by) \in \Doff$: (i) the trace is correct: $r(\bx, \by) = 1$, and (ii) all traces are realized by an optimal $\mu$, \textit{i.e.}, there exists an optimal policy $\mu \in \Pi$ s.t. ${\mu}(\by\mid \bx) = 1$.
\end{assumption}
The next theorem states that as long as prefixes are taken from the correct traces realized by a policy $\mu \in \Pi$, optimizing the PrefixRL objective preserves optimality on $J(\pi)$ (performance on no-prefix problems only). 

\begin{theorem}[\textbf{Consistency~~~of~~~the~~~PrefixRL~~~objective}]
    \label{thm:consistency} Assume the realizability and correctness of off-policy traces (Assm. \ref{assm:realizable}). Then, the maximizer of the PrefixRL objective~\eqref{eq:prefix-rl-obj} also maximizes standard RL objective $J(\pi)$. 

\end{theorem}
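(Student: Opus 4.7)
The plan is to prove \Cref{thm:consistency} by a clean decoupling argument: show that the optimal policy $\mu$ from \Cref{assm:realizable} simultaneously maximizes both summands in \eqref{eq:prefix-rl-obj}, so the joint maximum equals the sum of the individual maxima. Any other joint maximizer must then attain the individual maximum on the no-prefix term, i.e., it must maximize $J(\pi)$.

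Concretely, the first step is to observe that the no-prefix term $\sum_{\bx \in \mathcal{D}} \E_{\by \sim \pi}[r(\bx,\by)]$ is exactly $|\mathcal{D}| \cdot J(\pi)$, so it shares maximizers with the standard RL objective. The second step is to show that $\mu$ achieves reward $1$ on every prefixed problem in $\Dpre$. For any $(\bx,\by^{\bx}) \in \Doff$ and any prefix length $h$, \Cref{assm:realizable} gives $\mu(\by^{\bx}\mid \bx)=1$, and by the autoregressive chain rule $\mu(\by^{\bx}\mid \bx) = \mu(\by^{\bx}_{1:h}\mid \bx)\cdot \mu(\by^{\bx}_{h+1:H}\mid \mathrm{concat}(\bx,\by^{\bx}_{1:h}))$. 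Since the product is $1$, each factor is $1$, so $\mu$ deterministically completes the prefix to the correct trace $\by^{\bx}$, which by Assumption~\ref{assm:realizable}(i) has reward $1$. Thus $\mu$ attains the pointwise upper bound of $1$ on every term in both sums.

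Combining these, if we write $V_1^\star := \max_\pi \sum_{\bx_\mathrm{pre} \in \Dpre}\E_{\by\sim\pi}[r(\bx_\mathrm{pre},\by)]$ and $V_2^\star := \max_\pi \sum_{\bx \in \mathcal{D}}\E_{\by\sim\pi}[r(\bx,\by)]$, then trivially the PrefixRL optimum is $\le V_1^\star + V_2^\star$, and since $\mu$ achieves both upper bounds simultaneously, equality holds. Therefore, any PrefixRL maximizer $\pi^\star$ must saturate each summand individually; in particular, the second summand equals $V_2^\star$, so $\pi^\star$ maximizes $J(\pi)$, concluding the proof.

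The main subtlety I anticipate is justifying that $\mu$ remains optimal after conditioning on a prefix of its own correct trace; this is essentially a consistency-of-conditionals argument and relies crucially on $\mu$ being \emph{deterministic} on $\Doff$ (as stated in Assumption~\ref{assm:realizable}(ii)), since otherwise $\mu(\by^{\bx}\mid \bx)<1$ and the chain-rule factorization would only guarantee positive, not unit, completion probability. All other steps are direct: the decoupling into two independent maximization problems and the alignment of maximizers only require that the two terms share a common optimizer, which $\mu$ provides.
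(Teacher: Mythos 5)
Your proof is correct and follows essentially the same route as the paper: both arguments hinge on showing that the realizing policy $\mu$ deterministically completes every prefix (via the chain-rule factorization of $\mu(\by^{\bx}\mid\bx)=1$) and therefore simultaneously attains the maximum value of $1$ on every prefixed and no-prefix term, so any maximizer of the sum must saturate the no-prefix term and hence maximize $J(\pi)$. The only cosmetic difference is that you bound the prefixed terms by the trivial pointwise bound $r\le 1$, whereas the paper routes the upper bound through an auxiliary policy $\tilde\pi$ that emits the prefix and then defers to $\pi$; under Assumption~\ref{assm:realizable} both yield the same constant.
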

Intuitively, a good policy (yields high $J(\pi)$) on the no-prefix problems should be able to complete its prefix counterparts and get the same reward.  This is because the prefixes in $\Doff$ are from correct traces and the realizability assumption tells us that there should exist at least one such good policy that can remain optimal on the prefixes  in $\Doff$; thus the two terms in the objective need not be conflicting at optima. 
Note that while \methodname{} does not change the global solution, it does not produce the same gradients as the standard RL objective.

\highlight{PrefixRL is more sample-efficient than standard RL.}
Having established that PrefixRL does not bias policy optimization, we now quantify the benefits of prefixing in terms of the number of on-policy samples needed to reach a near-optimal policy.
We analyze PrefixRL by instantiating the policy update to be natural policy gradient~\citep{kakade2001natural} (PrefixRL-NPG). See Algorithm~\ref{alg:npgalg} and discussion in Appendix~\ref{sec:sample_complexity_proof}.
Concretely, each RL iteration alternates between: (i) \emph{policy evaluation} by fitting critic or $Q$-function $\in \mathcal{F}$ using $N$ on-policy  completions conditioned on the off-policy prefix, and (ii) \emph{policy improvement} using the fitted critic via a mirror-descent update. 
In practice, algorithms like PPO, GRPO approximate the NPG update~\citep{schulman2017ppo}.  
Theorem~\ref{thm:suboptgap} bounds the suboptimality of the policy $\bar \pi_T$ returned by PrefixRL-NPG  (Algorithm~\ref{alg:npgalg}) 
in terms of the number of policy updates $T$, on-policy completions $N$, and a single distribution shift quantity between  base policy $\pi_0$ and policy $\mu$ that realizes $\Doff$.

\begin{theorem}[\textbf{Suboptimality gap of PrefixRL}]
\label{thm:suboptgap} 
Under Assumption~\ref{assm:realizable}, let $\Doff$ be realized by $\mu \in \Pi$. 
For any $\delta \in [0,1]$, with probability at least $1-\delta$, policy $\bar{\pi}_T$ returned by PrefixRL-NPG (Algorithm~\ref{alg:npgalg}) satisfies:
{
\setlength{\abovedisplayskip}{4pt}
\setlength{\belowdisplayskip}{4pt}
\begin{align*}
\max_{\pi \in \Pi} \;\; J(\pi) \; - \; J(\bar{\pi}_T) \;\; \leq \;\; \; \mathcal{O}\paren{ \sqrt{\frac{\mathrm{KL}(\mu ||  \pi_0) }{T}}  \; + \;  \sqrt{\frac{1}{N} \cdot \log\paren{\frac{T|\cal{F}|}{\delta}}}}.
\end{align*}}
\end{theorem}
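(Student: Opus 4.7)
The plan is to combine the standard mirror-descent regret analysis of NPG with a uniform-convergence bound for the fitted critic, where the off-policy prefixes serve to align the evaluation and sampling distributions and thereby eliminate the concentrability factor that usually appears in on-policy analyses. By Assumption~\ref{assm:realizable}, the comparator $\mu \in \Pi$ is optimal, so $\max_{\pi \in \Pi} J(\pi) = J(\mu)$. I would begin from the performance-difference lemma against $\mu$:
\begin{align*}
J(\mu) - J(\pi^t) \;=\; \sum_{h=1}^{H} \E_{\bx \sim \rho,\, \by_{:h-1} \sim \mu(\cdot \mid \bx)} \left[ \E_{y_h \sim \mu(\cdot \mid \bx, \by_{:h-1})} A^{\pi^t}(\bx, \by_{:h-1}, y_h) \right].
\end{align*}
Since $\Doff$ is realized by $\mu$, the distribution over partial traces $(\bx, \by_{:h-1})$ on the right is exactly the empirical distribution over prefixed problems in $\Dpre$ used by PrefixRL.

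Next, I would view PrefixRL-NPG as mirror descent on the policy simplex with KL as the Bregman divergence and the (estimated) advantage as the per-step gradient. Combining the three-point inequality for mirror descent with the decomposition above and choosing step size $\eta = \Theta(\sqrt{\mathrm{KL}(\mu \| \pi^0)/(T G^2)})$, where $G$ bounds the advantage magnitude, yields an optimization regret of the form
\begin{align*}
\frac{1}{T}\sum_{t=1}^{T}\bigl(J(\mu) - J(\pi^t)\bigr) \;\leq\; \mathcal{O}\left(\sqrt{\frac{\mathrm{KL}(\mu \| \pi^0)}{T}}\right) \;+\; \text{(critic-error terms)}.
\end{align*}
This is the first term in the theorem; the critic-error terms are controlled in the next step.

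At each iteration $t$, PrefixRL-NPG draws $N$ on-policy completions from $\pi^t$ conditioned on prefixed problems in $\Dpre$ and fits $\hat Q^t \in \mathcal{F}$ by empirical risk minimization. A standard uniform-convergence argument (Hoeffding plus a union bound over $T$ iterates and $|\mathcal{F}|$ critics) gives, with probability at least $1 - \delta$,
\begin{align*}
\max_{t \leq T}\; \E_{\bx_\mathrm{pre} \sim \Dpre,\, \by \sim \pi^t}\left[\bigl(\hat Q^t - Q^{\pi^t}\bigr)^2\right] \;\leq\; \mathcal{O}\left(\frac{\log(T|\mathcal{F}|/\delta)}{N}\right).
\end{align*}
Because the evaluation distribution appearing in the performance-difference lemma is exactly the sampling distribution used to fit $\hat Q^t$ (both are prefix-conditioned by $\mu$), the critic error translates directly into an additive $\mathcal{O}(\sqrt{\log(T|\mathcal{F}|/\delta)/N})$ term in the regret without incurring any density-ratio factor. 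Averaging the iterates (or taking the best one) yields the stated bound on $\bar\pi_T$.

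The main obstacle is rigorously arguing that the prefix conditioning eliminates the concentrability coefficient. In the standard online-NPG analysis, translating critic error on $\pi^t$'s state distribution into error on $\mu$'s visitation requires a factor $\|d^\mu/d^{\pi^t}\|_\infty$, which is precisely the quantity that blows up on hard problems where $\pi^t$ assigns vanishing mass to $\mu$'s trajectories. The payoff of PrefixRL is that prefix conditioning renders this change-of-measure trivial at every step; making this alignment precise across all prefix lengths $h \in \{1,\ldots,H\}$, and checking that the no-prefix term in the PrefixRL objective does not silently reintroduce such a factor when combined with the prefixed term, is the technical heart of the argument.
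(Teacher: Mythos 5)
Your overall skeleton matches the paper's proof: both arguments set $J^\star = J(\mu)$ via realizability, apply the performance-difference lemma against $\mu$ so that states are drawn from $d^\mu$ (which coincides with the prefix distribution induced by $\Doff$), run a KL-mirror-descent regret analysis with $\eta \asymp \sqrt{\KL(\mu\|\pi^0)/T}$ to get the first term, and invoke a finite-class least-squares generalization bound to get the second. However, there is a genuine gap in your critic step, and it sits exactly at the point you flag as ``the technical heart'' without resolving it. After adding and subtracting $\hat Q^t$ in the performance-difference identity, the critic-error term is
\begin{align*}
\E_{\bs\sim d^\mu}\Bigl[\bigl\langle Q^{\pi^t}(\bs,\cdot)-\hat Q^{t}(\bs,\cdot),\;\mu(\cdot\mid\bs)-\pi^t(\cdot\mid\bs)\bigr\rangle\Bigr],
\end{align*}
which requires the fitted critic to be accurate under \emph{two} action distributions at the prefix states: $\mu(\cdot\mid\bs)$ and $\pi^t(\cdot\mid\bs)$. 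Your proposed data-collection scheme (``$N$ on-policy completions from $\pi^t$ conditioned on prefixed problems'') only controls the error under $(d^\mu,\pi^t)$. Transferring that guarantee to the $(d^\mu,\mu)$ marginal would reintroduce an action-level density ratio $\|\mu(\cdot\mid\bs)/\pi^t(\cdot\mid\bs)\|_\infty$, which is precisely the quantity that blows up on hard problems where $\pi^t$ assigns vanishing mass to $\mu$'s actions. Prefix conditioning aligns the \emph{state} distributions, but it does not by itself align the \emph{action} distributions at those states.

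The paper closes this gap algorithmically, not analytically: in Algorithm~\ref{alg:npgalg} (step 6), the action at the sampled prefix state is taken to be the logged off-policy action $a_h^{\mathrm{off}}$ with probability $\nicefrac{1}{2}$ and drawn from $\pi^t(\cdot\mid\bs_h)$ otherwise, so the regression distribution is the mixture $\rho^t = \tfrac12\mu + \tfrac12\pi^t$. This gives the pointwise domination $\mu(a\mid\bs)\le 2\rho^t(a\mid\bs)$ and $\pi^t(a\mid\bs)\le 2\rho^t(a\mid\bs)$, so the single least-squares MSE bound under $\rho^t$ transfers to both needed marginals at the cost of a constant factor of $2$ rather than a concentrability coefficient. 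Without this (or an equivalent device), your argument does not go through as stated. The rest of your outline --- the Hoeffding-plus-union-bound critic rate of $\mathcal{O}(\log(T|\cF|/\delta)/N)$, the three-point KL identity, the telescoping to $\KL(\mu\|\pi^0)/(\eta T) + \eta/2$, and the final averaging over iterates --- agrees with the paper.
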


The bound decomposes into (i) an \emph{optimization} term giving a $1/\sqrt{T}$ convergence rate (with an oracle evaluator, $N\to\infty$), whose constant depends only on the initial shift between the base policy $\pi_0$ and the behavior policy $\mu$ realizing $\Doff$ and does not accumulate over iterations; in particular, if $\Doff$ comes from rejection sampling $\pi_0$ with at most $R$ attempts per problem, then $\mu=\pi_0$ conditioned on success and $\mathrm{KL}(\mu||\pi_0)=\mathcal{O}(\log R)$ so this dependence grows only logarithmically in $R$; and (ii) a \emph{critic approximation} term from fitting $Q_{\pi_t}\in\cF$ using $N$ on-policy traces, which in PrefixRL does not pay an extra distribution-shift penalty because sampling and evaluation use the same reset distribution induced by prefixes from $\Doff$ (unlike standard RL, which still suffers mismatch relative to $\pi^\star$). 

\begin{proposition}[Worst-case separation with standard RL]
    \label{prp:worstcase_gap}
    Let $\bar{\pi}_T^{\mathrm{pre}}$ be the policy returned after $T$ PrefixRL iterations of Algorithm~\ref{alg:npgalg} and $\bar{\pi}_T^{\mathrm{std}}$ be the policy after $T$ iterations of standard RL (states in Algorithm~\ref{alg:npgalg}  $\sim \pi_t$).  
    Then, there exists a reward function $r$ and base LLM $\pibase$ such that $J(\bar{\pi}_T^\mathrm{pre}) - J(\bar{\pi}_T^\mathrm{std}) \geq 1-\paren{TN\cdot e^{-H}}$ for $TN=o(e^{H})$.  
\end{proposition}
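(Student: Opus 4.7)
The plan is to exhibit a worst-case instance where standard on-policy NPG cannot see any positive reward in $TN = o(e^H)$ rollouts, while PrefixRL-NPG converges at the rate given by Theorem~\ref{thm:suboptgap}. I would take a single-problem instance $\mathcal{D} = \{\bx\}$ with horizon $H$, a token vocabulary of size at least $3$, and a unique correct trace $y^\star$, so that the base LLM $\pi^0$, taken uniform over tokens, satisfies $\pi^0(y^\star \mid \bx) \le e^{-H}$ and hence $J(\pi^0) \le e^{-H}$. Let $\Doff = \{(\bx, y^\star)\}$, realized by the deterministic policy $\mu \in \Pi$ with $\mu(y^\star \mid \bx) = 1$ (so Assumption~\ref{assm:realizable} holds), and pick the critic class $\mathcal{F}$ so the realizability hypotheses of Theorem~\ref{thm:suboptgap} are met along the PrefixRL trajectory (a tabular or sufficiently expressive linear class suffices on a single-problem instance).

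First I would argue that standard NPG stalls. A union bound over the $TN$ on-policy rollouts, each drawn from the empty prefix of $\bx$, shows that the probability of any trace equaling $y^\star$ is at most $TN \cdot e^{-H}$. On the complement of this event every sampled reward is $0$, the group-baseline advantages and the fitted critic vanish, the NPG multiplicative update $\pi_{t+1} \propto \pi_t \exp(\eta \hat{A}_t)$ reduces to the identity, and the iterate stays at $\pi_t = \pi^0$ throughout. Hence $\mathbb{E}[J(\bar{\pi}_T^{\mathrm{std}})] \le (TN + 1) \cdot e^{-H}$.

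Next I would invoke Theorem~\ref{thm:suboptgap} for PrefixRL-NPG on the same instance. The only distribution-shift term in its bound is $\mathrm{KL}(\mu \Vert \pi^0) = -\log \pi^0(y^\star \mid \bx) = \Theta(H)$, which does not depend on $TN$. Choosing $T, N$ polynomial in $H$, so that $TN = \mathrm{poly}(H) = o(e^H)$, drives both $\sqrt{\mathrm{KL}(\mu \Vert \pi^0) / T}$ and $\sqrt{N^{-1} \log(T |\mathcal{F}| / \delta)}$ to $o(1)$, yielding $J(\bar{\pi}_T^{\mathrm{pre}}) \ge 1 - o(1)$ with high probability. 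Subtracting the two bounds gives the claimed separation $J(\bar{\pi}_T^{\mathrm{pre}}) - J(\bar{\pi}_T^{\mathrm{std}}) \ge 1 - (TN \cdot e^{-H})$, up to lower-order terms that are absorbed into the leading constant.

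The main obstacle is certifying that an all-zero-reward batch in standard NPG really does produce a no-op critic fit and hence a no-op policy update in the finite-sample realization of Algorithm~\ref{alg:npgalg}. With a critic class containing the zero function (and a unique squared-loss minimizer there) this is immediate; in general one needs a $\mathcal{O}(\sqrt{N^{-1} \log(|\mathcal{F}| / \delta)})$ concentration bound on $\lVert \hat{Q}_t \rVert_\infty$ to control the residual drift of $\pi_t$ away from $\pi^0$, which perturbs the separation only by lower-order terms and does not wash out the exponential gap in $H$. Everything else is a direct combination of Theorem~\ref{thm:suboptgap} with the union bound above.
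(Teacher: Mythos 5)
Your construction (a needle-in-a-haystack instance with a unique rewarding length-$H$ trace, $\mu$ deterministic on it, and a uniform $\pibase$) matches the paper's instance, and your treatment of the PrefixRL side --- invoking Theorem~\ref{thm:suboptgap} with $\KL(\mu\|\pi^0)$ non-exponential in $H$ --- is essentially the paper's argument. The divergence, and the gap, is in the lower bound for standard RL. You fix the instance and argue that on the all-zero-reward event the critic fit is a no-op, the mirror-ascent update is the identity, and the iterates therefore remain at $\pi^0$, so every rollout is uniform and a union bound gives $TN\cdot e^{-H}$. But the least-squares critic $\hat Q^t \in \arg\min_{f\in\cF}\sum_{(\bs,a,r)\in\mathcal{D}_t}(f(\bs,a)-r)^2$ trained on all-zero targets is only constrained to vanish on the \emph{sampled} state--action pairs; off that support it can be an arbitrary element of $\cF$ achieving zero empirical loss, and the state-wise mirror-ascent step $\pi^{t+1}(\cdot\mid\bs)\propto \pi^t(\cdot\mid\bs)\exp(\eta\,\hat Q^t(\bs,\cdot))$ is applied at \emph{every} state. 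So the policy can drift away from $\pi^0$ even without observing any reward, and your union bound (which prices each rollout at the uniform policy's hit probability $e^{-H}$) no longer applies to later iterations. Your proposed patch --- a concentration bound on $\|\hat Q_t\|_\infty$ --- does not close this, because empirical-loss concentration controls nothing about the critic's values outside the sampled support; you would instead need to restrict $\cF$ (e.g., tabular with a canonical minimum-norm solution) or otherwise pin down the tie-breaking of the regression oracle, at which point the result only holds for that specific algorithmic variant.

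The paper avoids this entirely by randomizing the instance via Yao's minimax principle: with $\mathbf{b}\sim\mathrm{Unif}(\{0,1\}^H)$, each rollout --- \emph{whatever} distribution the (possibly drifted, possibly adversarially updated) algorithm samples it from --- hits the hidden string with probability exactly $2^{-H}$ conditioned on the history, since $\mathbf{b}$ remains uniform and independent of the algorithm's choices until a reward is observed. The union bound and the posterior argument (the returned policy cannot place more than $1/(2^H - TN)$ mass on the correct string after $TN$ failed guesses) then give a lower bound that holds for \emph{any} adaptive on-policy algorithm, not just a particular NPG instantiation, and in particular is immune to the critic-drift issue above. If you want to keep your fixed-instance route, you must either adopt this information-theoretic argument or add an explicit assumption forcing the regression oracle to return the zero function on all-zero data; as written, the step ``the iterate stays at $\pi_t=\pi^0$ throughout'' is unjustified.
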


\highlight{Worst-case separation with standard RL.} Proposition~\ref{prp:worstcase_gap} shows there exists a reward function and base LLM where on-policy NPG and PrefixRL-NPG separate sharply. Intuitively, this happens when NPG is bottlenecked by exponentially rare (in $H$) rewarding traces, whereas PrefixRL-NPG only needs polynomial (in $H$) samples to learn the sequence of optimal actions in reverse, starting from the longest prefix to the shortest one. Here the exploration needed to learn optimal actions on longer prefixes is outsourced to the off-policy prefix, and by the time the policy is presented with the shorter prefixes it has already learned to complete optimally on longer ones.

\begin{figure*}[!t]
    \centering
    \begin{subfigure}[t]{0.32\linewidth}
        \centering
        \includegraphics[width=\linewidth]{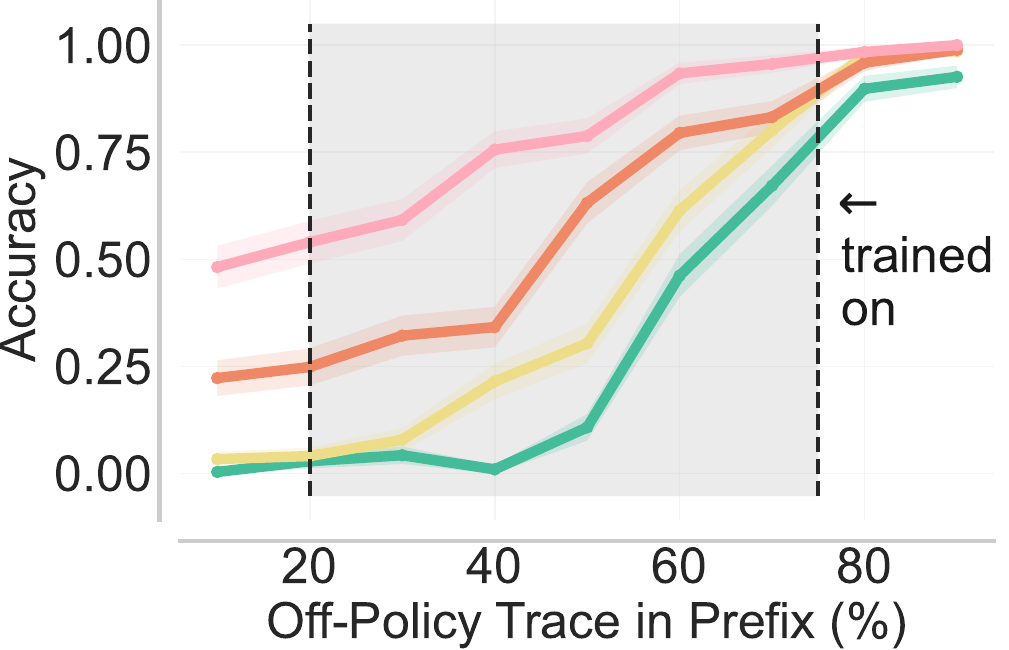}
        \caption*{(a)}
    \end{subfigure}
    \begin{subfigure}[t]{0.32\linewidth}
        \centering
        \includegraphics[width=\linewidth]{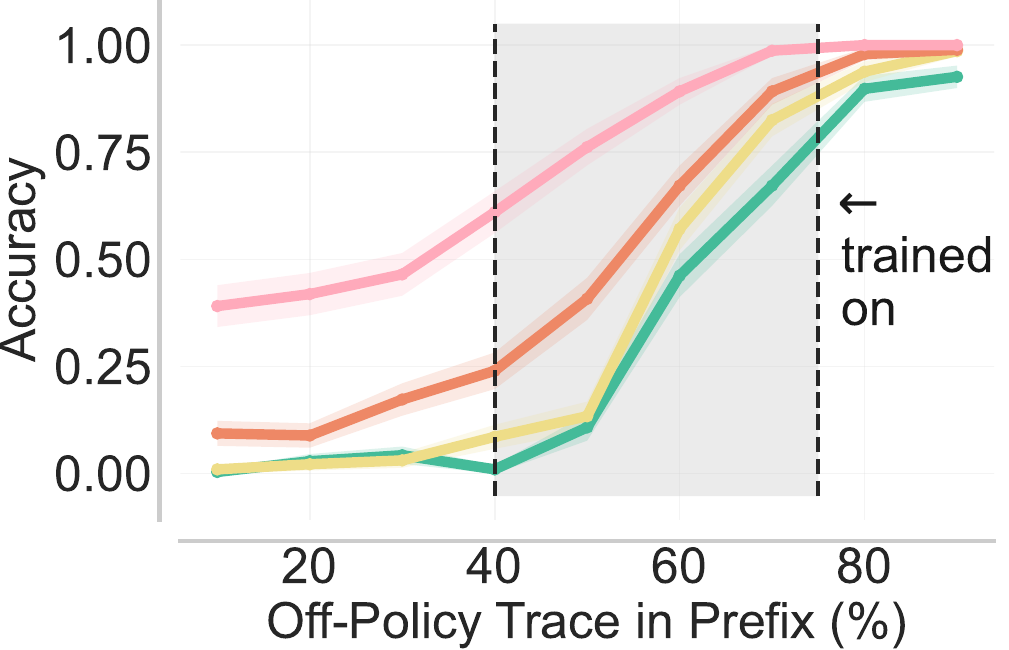}
        \caption*{(b)}
    \end{subfigure}
    \begin{subfigure}[t]{0.32\linewidth}
        \centering
        \includegraphics[width=\linewidth]{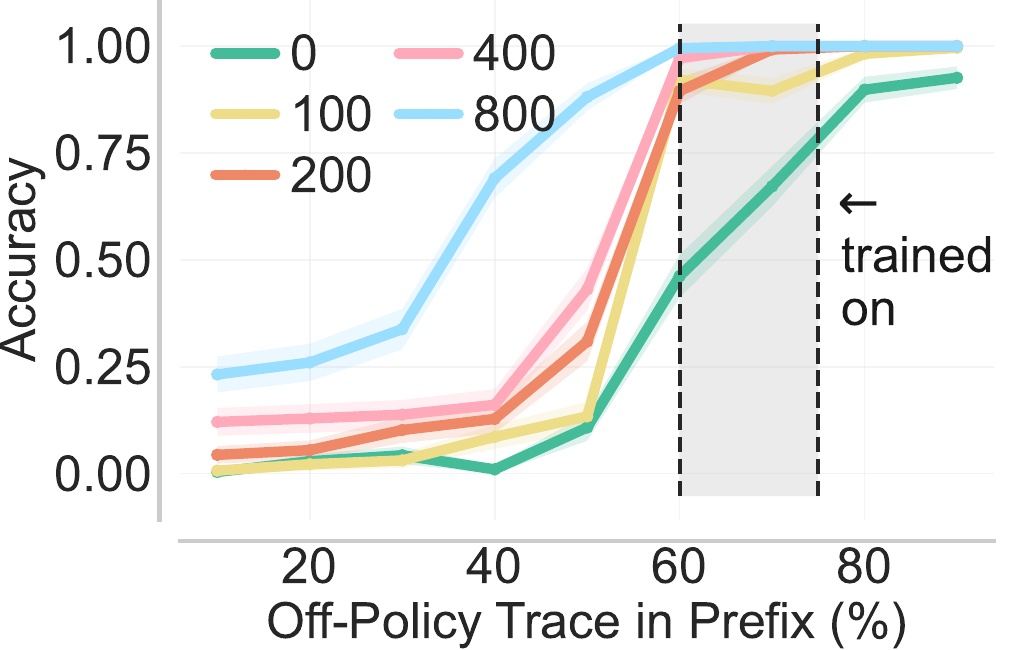}
        \caption*{(c)}
    \end{subfigure}
    \vspace{-0.1in}
    \caption{\textbf{Back-generalization (train/test mismatch):} We run RL \emph{only} on prefixed problems where the prefix length of the off-policy trace lies in the shaded interval. We evaluate different RL training checkpoints across the full range of prefix lengths, including no-prefix problems (0\% prefixing). Training on longer prefixes improves performance on shorter prefixes and can eventually improve no-prefix, indicating back-generalization (a,b). When training uses very long prefixes only there is a severe train/test mismatch, and back-generalization to no-prefix problems takes more training steps (c).}
    \label{fig:back-gen}
    \vspace{-0.15in}
\end{figure*}

\vspace{-0.1in}
\section{Back-Generalization Boosts the Learning Signal in PrefixRL} 
\label{sec:back_generalization}
\vspace{-0.05in}

In Section~\ref{subsec:theory}, we showed that PrefixRL is  consistent with standard RL except more sample-efficient. 
Now, we show that an empirical phenomenon we call \highlight{back-generalization} is a strong source of the gains behind PrefixRL and is unexplained by our theory.
Back-generalization is defined as the performance improvement on no-prefix problems when we train \emph{only} on their prefixed counterparts. Next, we study this with controlled experiments.

\vspace{-0.1in}
\subsection{PrefixRL Improves No-Prefix Performance Even When Training Only on Prefixed-Problems} 
\vspace{-0.05in}

We run {on-policy RL} only on prefixed-problems where the prefix lengths are distributed uniformly between a fixed band of token-length percentiles of the full off-policy trace, but we evaluate accuracy across the full spectrum of prefix lengths, including the no-prefix endpoint (0\% prefixing). 
In Figure~\ref{fig:back-gen}, we see generalization to no-prefix problems despite not having trained on them. This transfer from prefixed to no-prefix problems is particularly notable since the prefixes are highly unlikely under the base policy.
When the training mixture includes relatively short prefixes, the mismatch is moderate (Figure~\ref{fig:back-gen} (a,b)). In this case, performance increases first near the trained band and then progressively improves for shorter prefixes, eventually lifting no-prefix accuracy. 
When training is restricted to very long prefixes (Figure~\ref{fig:back-gen}(c)), the train/test mismatch with no-prefix problems is severe and the transfer is slower, but longer training (\textit{e.g.}, 800 steps) still yields measurable no-prefix gains.

\vspace{-0.05in}
\subsection{PrefixRL can Discover New Strategies Beyond What is Present in the Prefixed Problem}
\label{subsec:backgen_synthetic}
\vspace{-0.05in}

Clearly, back-generalization improves performance on unseen shorter prefixes, but the mechanism behind this is unclear.
To understand this better, we create a simplified setup where we run RL on the prefixes of a single problem:

\highlight{Setup.} On two hard problems, we run PrefixRL on the prefixes of a single off-policy trace in $\Doff$ (prefixed on $>$40\% of the tokens in the trace). 
We use a keyword heuristic to extract a salient ``strategy'' present in the off-policy prefix and track its presence in (i) the prefixed-problem (PP) itself, (ii) the model’s response when conditioned on that prefixed-problem (response for PP), and (iii) the early part of the trace on the original problem (untrained states in the model's response). Figure~\ref{fig:synthetic_expt} probes how strategy usage evolves when running PrefixRL, where the plots in (a) correspond to one hard problem, and (b,c) correspond to a different one. The prefix is sampled from a fixed pool, so frequency in PP is constant (dashed line). In contrast, the response patterns change over training and reveal:

\begin{figure*}[b]
    \centering
    \begin{subfigure}[t]{0.3\linewidth}
        \centering
        \includegraphics[width=\linewidth]{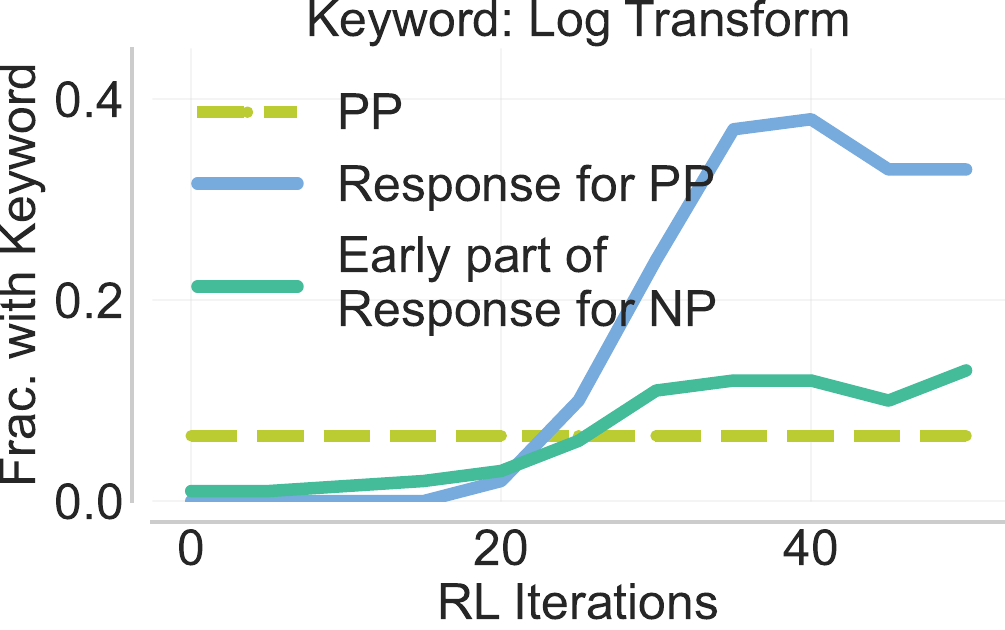}
        \caption*{\vspace{-0.05in}(a)}
    \end{subfigure}
    \begin{subfigure}[t]{0.3\linewidth}
        \centering
        \includegraphics[width=\linewidth]{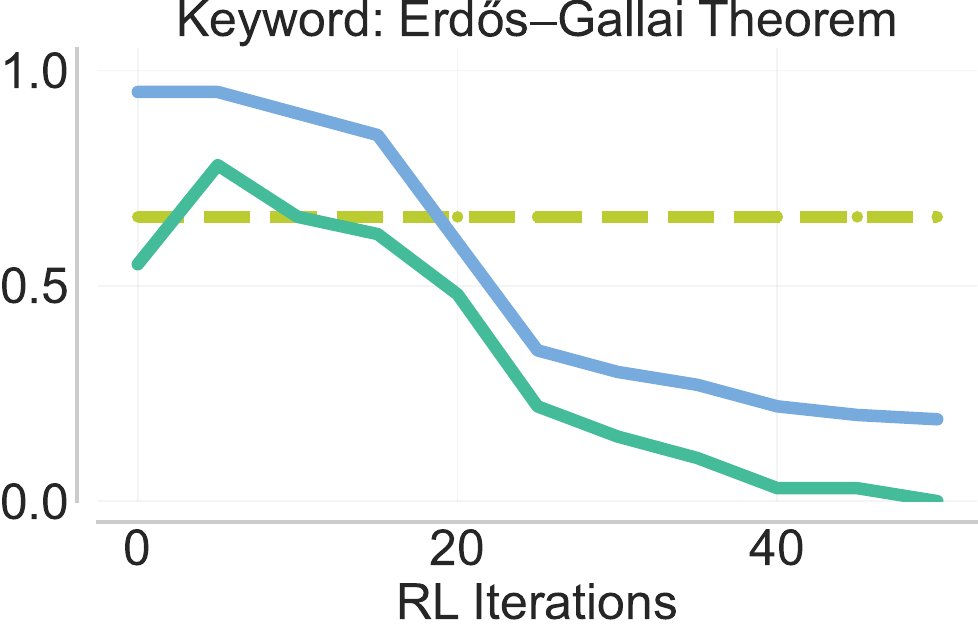}
        \caption*{\vspace{-0.05in}(b)}
    \end{subfigure}
    \begin{subfigure}[t]{0.3\linewidth}
        \centering
        \includegraphics[width=\linewidth]{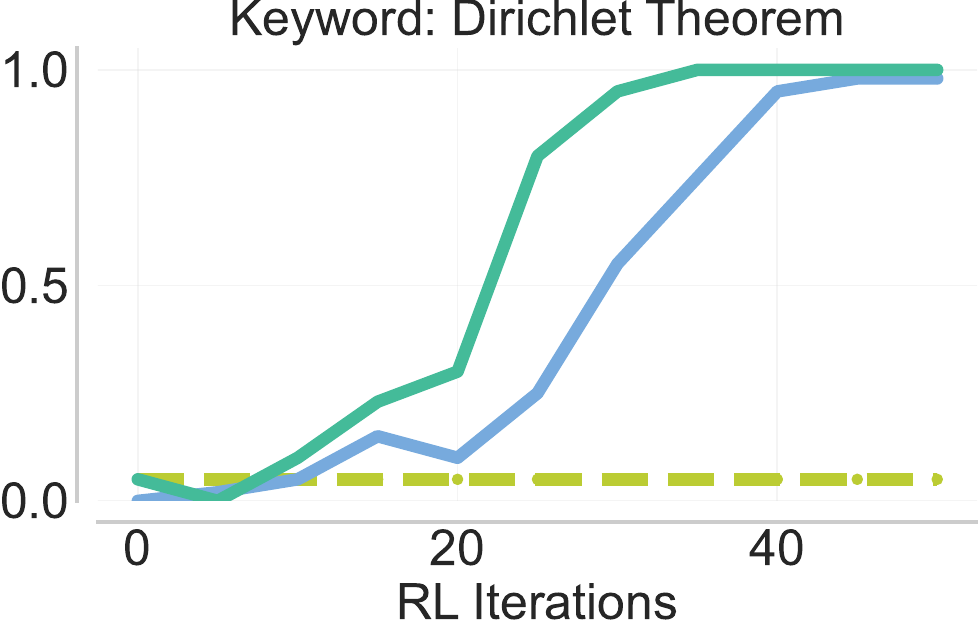}
        \caption*{\vspace{-0.05in}(c)}
    \end{subfigure}
    \vspace{-0.1in}
    \caption{\textbf{\textit{Strong coupling between responses for prefixed and no-prefix problems.}}: We train only on prefixed-problems (PP)  and track the frequency of a  strategy-indicating keyword  in: (i) PP derived from off-policy trace, (ii)   model’s response to PP, and (iii) and the early part (prefixes or states not trained on) of the response to the no-prefix (NP) problem. There is a tight coupling of the strategies present in the responses for PP and NP (througout RL), yet not purely imitative of the strategy explicitly provided in the prefixed-problem itself: the model can learn new strategies or suppress prefixed ones (\textit{e.g.}, Erdős–Gallai). The plots in (a) correspond to one hard problem, and (b,c) correspond to a different one. 
    }
    \label{fig:synthetic_expt}
    \vspace{-0.1in}
\end{figure*}

\highlight{Strategy usage is strongly correlated between prefixed and no-prefix responses.}
There is a tight coupling between strategy use on prefixed and no-prefix responses, which is difficult to explain since no-prefix problems are never trained on and many prefixed problems ($>$90\% in Figure~\ref{fig:synthetic_expt}(a)) may not even contain the keyword. This suggests that PrefixRL updates internal state representations shared across responses for the prefixed and no-prefix problems. This further helps show that back-generalization indeed arises from favorable function approximation in LLMs and is not simply because solutions learned from solving the prefixed problem can be ``stitched'' into the no-prefix version (more in Appendix~\ref{app:backgen_is_not_stitching}), as hypothesized by prior work that prefixes on    hints~\citep{zhang2025bread}.

\highlight{PrefixRL can \emph{unlearn} strategies in the prefixed-problem and discover new ones.} 
In Figure \ref{fig:synthetic_expt}(b), we see that the policy at initialization uses the ``Erdős–Gallai theorem'' close to 90\% of the time on the prefixed-problems since $>$50\% of the prefixed-problems contain references to it.
Throughout training, the frequency of traces mentioning ``Erdős–Gallai'' decreases steadily on responses to PP, indicating that \methodname{} can downweight suboptimal strategies in the off-policy prefixes. For the same hard problem as the one in Figure~\ref{fig:synthetic_expt}(b), in Figure~\ref{fig:synthetic_expt}(c) we note that despite being conditioned on ``Erdős–Gallai'', RL upweights the rare ($<$2\%) strategy at initialization (``Dirichlet Theorem''). This  reweighting also transfers  to the model's behavior on the no-prefix problem. 

\begin{AIbox}{Takeaway: Back-generalization can transfer strategies different from the one in the off-policy prefix.}
    PrefixRL can discover strategies beyond what is present in the prefixed-problems. 
    Benefitting from function approximation, PrefixRL alters the next-token distribution on unseen states; strategies learned and unlearned on the prefixed-problems are quickly reflected in the responses for the no-prefix ones.
\end{AIbox}

\begin{figure*}[t]
    \centering
    \begin{subfigure}[t]{0.42\linewidth}
        \centering
        \includegraphics[width=\linewidth]{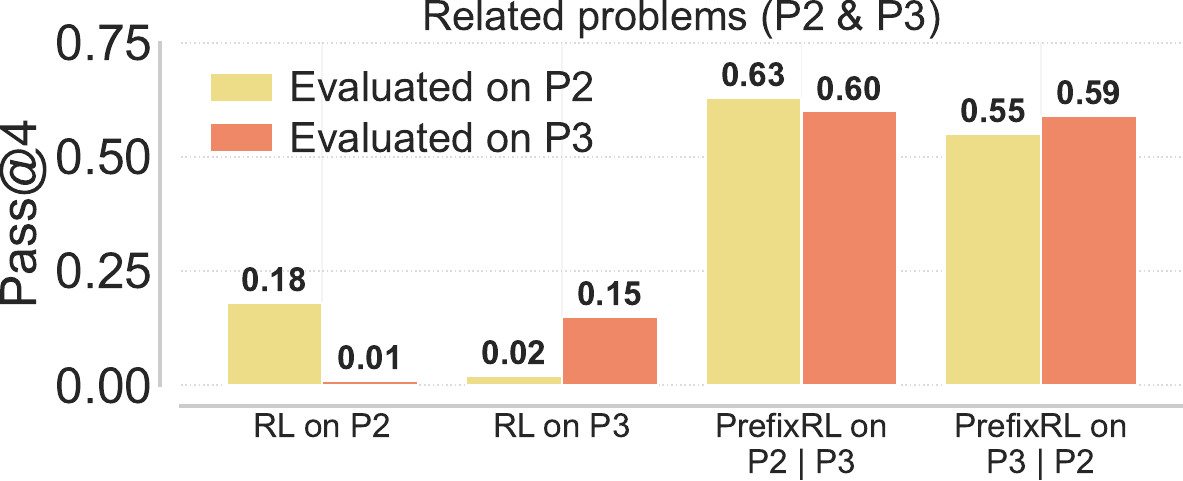}
        \caption*{ \vspace{-0.05in}  (a)}
    \end{subfigure}\hspace{0.6em}
    \begin{subfigure}[t]{0.42\linewidth}
        \centering
        \includegraphics[width=\linewidth]{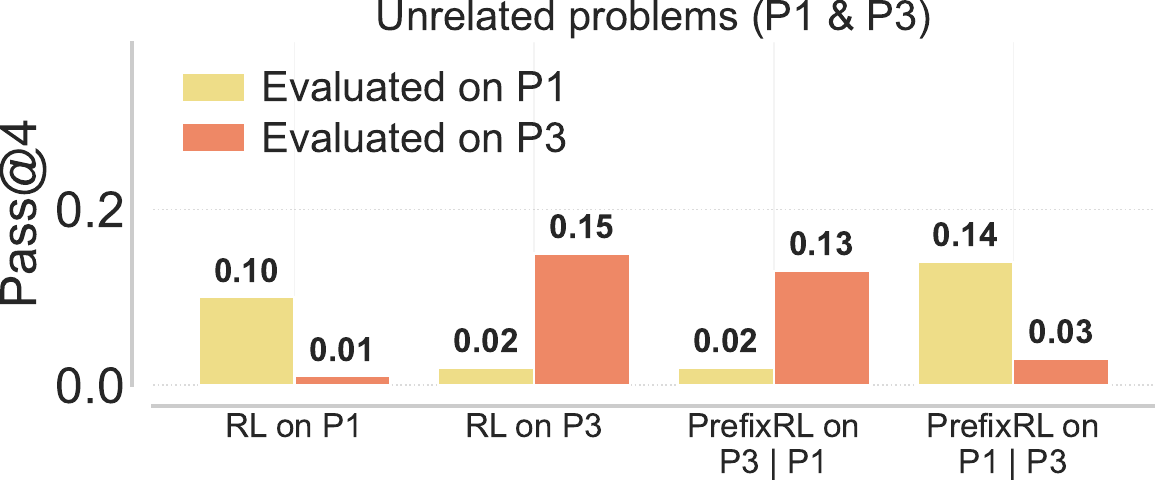}
        \caption*{ \vspace{-0.05in} (b)}
    \end{subfigure}
    \vspace{-0.1in}
    \caption{\textbf{Performance transfer via back-generalization can be stronger than typical generalization in RL:}  When we prefix on the problem and full solution trace of one (in-context) problem (P2), and run Prefix RL  to solve a different but related problem P3 | P2, we are able to improve performance on both P2 and P3 individually, and the performance is much higher compared to running RL either problem individually. 
    The same holds in the opposite direction, when we run PrefixRL on P2 | P3.
    We do not see these gains when the in-context problem is unrelated in the case of P1 and P3. 
    \vspace{-0.15in} 
    }
    \label{fig:icl_expt}
\end{figure*}

\vspace{-0.05in}
\subsection{Which Prefixes Back-Generalize the Most in PrefixRL? Analysis via In-Context Learning}
\label{subsec:backgen_icl}
\vspace{-0.05in}

To study when back-generalization is effective, we analyze it in \emph{in-context learning} setup. We run RL on problems prefixed with another in-context problem, letting us cleanly ablate the relationship between the off-policy prefix and the on-policy suffix based on how related the in-context problem is to the problem being asked to solve.

\highlight{Setup.} We run RL  on a given problem with an entirely different problem (and its solution trace) in  its context or prefix.  
We consider two problem sets: (P1, P3) where P1 and P3 are unrelated problems, and (P2, P3) where P2 and P3 are related and can be solved with the same high level strategy (see Appendix~\ref{app:additional_backgen} for details on P1, P2 and P3). \textit{E.g.}, when we say we run PrefixRL on P1 $\mid$ P3, this means that we prompt the model to solve the problem P1 with P3 along with P3's solution trace in its context.   
All three problems are hard for the base model, with $<$1\% pass@32.

{}

\highlight{Back-generalization occurs when the prefix and suffix are sufficiently related.}
From Figure~\ref{fig:icl_expt}, when the problems are related (P2 and P3), PrefixRL on P2 given P3 achieves 63\% pass@4 on P2 and 60\% pass@4 on the untrained in-context problem P3.
Running standard RL on P2 alone predictably improves the pass@4 of P2 to 18\% but the performance transfer to P3 is limited.
 In contrast, PrefixRL on unrelated problems (P3 $\mid$ P1 or P1 $\mid$ P2) performs similarly to doing RL on just P3 and P1 respectively. 
 This suggests that \emph{back-generalization is more effective when the components of the prefixed problem are related}, and in the in-context learning setting, back-generalization can be stronger than standard generalization across the two related problems. 
This also sheds light on a back-generalization mechanism in standard reasoning, where LLMs often make $k$ attempts at the problem in a single trace. Since the internal representations can be shared across related in-context problems so as to enable back-generalization, this could also be the case across attempts at a problem (see Appendix~\ref{app:mental_model} for a mental model).  

\vspace{-0.02in}
\begin{AIbox}{Takeaway: Back-generalization can be stronger than standard generalization in RL.}
 PrefixRL conditioned on a related problem and its trace transfers performance more effectively than running RL directly on the related problem, perhaps due to responses for prefixed  problems backtracking into states in the off-policy prefix, visiting states closer to those visited when solving the original problem. 
\end{AIbox}

\vspace{-0.1in}
\section{Experiments and Results}
\label{sec:experiments}
\vspace{-0.05in}

\begin{figure}[!t]
    \centering
    \begin{subfigure}[t]{0.42\linewidth}
        \centering
        \includegraphics[width=\linewidth]{figures/compute_plot_llama_training.pdf}
        \vspace{-0.25in}
    \end{subfigure}\hspace{1em}
    \begin{subfigure}[t]{0.42\linewidth}
        \centering
        \includegraphics[width=\linewidth]{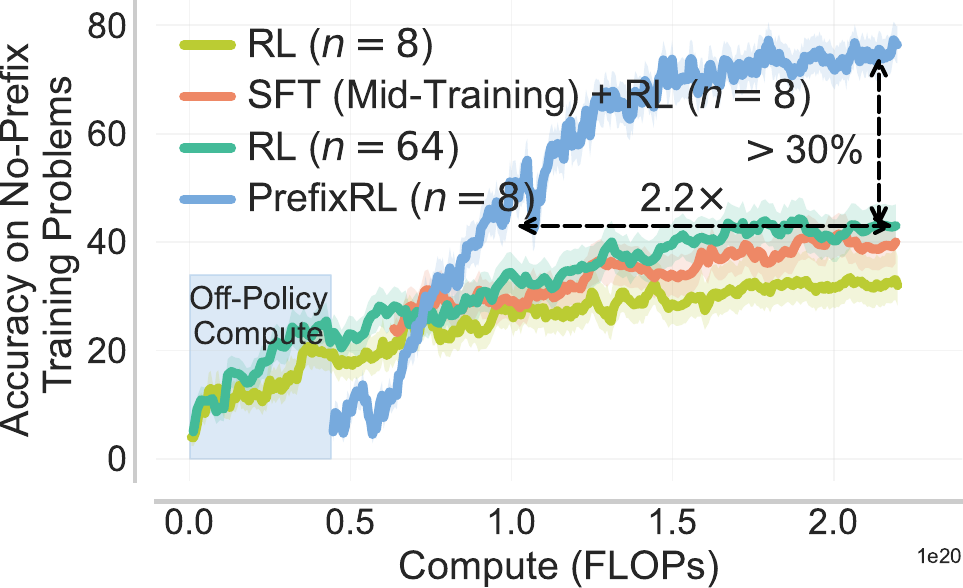}
    \end{subfigure}
    \caption{\textbf{Training compute matched evaluation of \methodname{}.}  
    Using prefixed-problems sourced by rejection sampling the base model, we run \methodname{} on the dataset of hard training problems (low pass rate under the base model) and measure accuracy on the no-prefix training problems for  (\textbf{Left}) Llama-3.1-8b-instruct and (\textbf{Right}) Qwen-3-4b-instruct. 
    Even accounting for the initial compute spent on rejection sampling (shaded blue region), 
    \methodname{} improves compute-efficiency by 2$\times$ over the strongest baseline with and an absolute gain of 45\% with Llama and 30\% with Qwen.}
    \vspace{-0.15in}
    \label{fig:compute_matched_training}
\end{figure}

We evaluate \methodname{} on math reasoning benchmarks: in compute-matched hard-problem training it beats RL and SFT+RL baselines, and these gains transfer to standardized evals. We ablate off-policy data sources, showing robustness to prefixes from different model families, and analyze training dynamics versus  off-policy RL.

\highlight{Experimental setup.}
We experiment with two thinking models: {Llama-3.1-8B-instruct} and {Qwen3-4B-instruct}. 
Since Llama-3.1-8B-instruct is not a thinking model, we distill it on  OpenThoughtsV3 before all experiments, but still refer to it as Llama-3.1-8B ~\citep{guha2025openthoughtsdatarecipesreasoning}.
For training, we select 1k hard problems from {DAPO}~\citep{yu2025dapoopensourcellmreinforcement}  and {OMNI-MATH} (levels 6-8)~\citep{gao2024omnimathuniversalolympiadlevel}, where pass@512 of Llama-3.1-8B is zero.
We compare against on-policy RL~\citep{ahmadian2024basicsrevisitingreinforcestyle} and off-policy baselines that use $\Doff$: SFT (mid-training) on $\Doff$ followed by standard RL (SFT+RL), 
importance-weighted off-policy RL~\citep{mahmood2014weighted}, and LUFFY~\citep{yan2025learningreasonoffpolicyguidance}. All evaluation results, unless noted, are on no-prefix problems.  
See Appendix~\ref{app:additional_experiments} for implementation details. Details on FLOPs accounting for our compute-matched plots are in Appendix~\ref{app:additional_compute_accounting}.

\highlight{Off-policy Dataset $\Doff$ and prefixed-problems.} 
For each base LLM, we produce $\Doff$ using large-scale rejection sampling until there is one correct off-policy trace per problem in $\mathcal{D}$.
For each problem, we generate three prefixed variants by cutting three prefixes from its off-policy trace at random cut points chosen uniformly between 40\% and 80\% of the tokens in the trace. The 3k prefixed and 1k no-prefix problems constitute the training data for \methodname{}.

\vspace{-0.05in}
\subsection{\methodname{} is More Compute-Efficient on Hard Problems Than Standard RL}
\label{subsec:experiments_compute_matched}

\highlight{\methodname{} is $2\times$ more compute-efficient and achieves higher training accuracy.} Figure~\ref{fig:compute_matched_training} shows that with the same compute, \methodname{} achieves higher accuracy on no-prefix problems compared to baselines for Llama-3.1-8B ($45\%$ greater)  and Qwen-3-4B ($30\%$ greater) respectively. Even after accounting for the initial rejection-sampling cost, \methodname{} improves compute-efficiency by roughly $2\times$ over the strongest baseline (SFT+RL). 
In contrast, standard RL and SFT+RL only slowly improve accuracy despite increasing the number of parallel samples per problem $n$  from 8 to 64. Thus, \methodname{} effectively reallocates wasted sampling FLOPs in standard RL towards productively improving training rewards.
Our iteration-matched plots (Figure~\ref{fig:compute_matched_llama}(c)) show that our baselines have stable training curves, and increasing samples per problem $n$ unsurprisingly attains higher accuracy in the iteration matched plot. Thus, \methodname{}'s gains are not explained by degenerate baseline runs.

\begin{figure}[!t]
    \centering
    \begin{subfigure}[t]{0.32\linewidth}
        \centering
        \includegraphics[width=\linewidth]{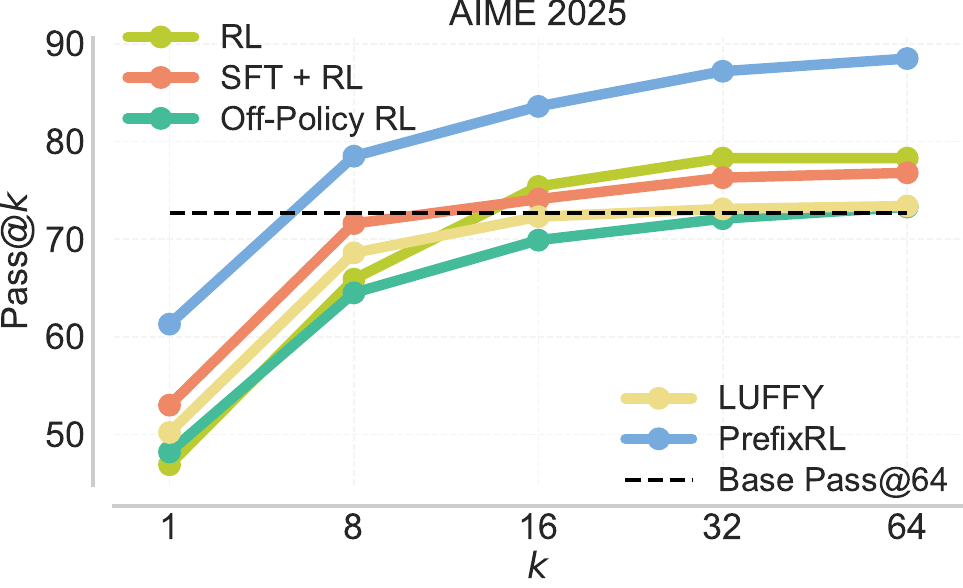}
    \end{subfigure}\hfill
    \begin{subfigure}[t]{0.32\linewidth}
        \centering
        \includegraphics[width=\linewidth]{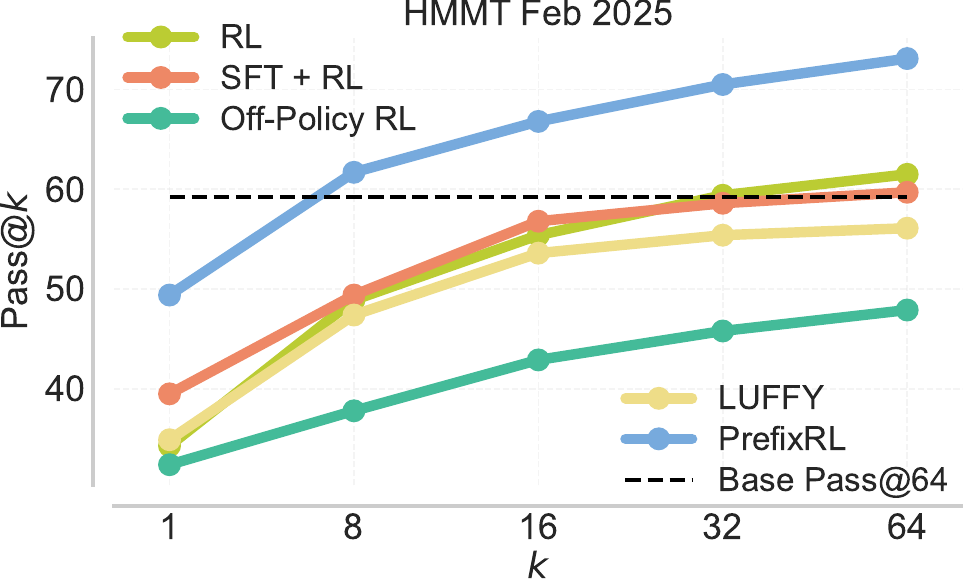}
    \end{subfigure}\hfill
    \begin{subfigure}[t]{0.32\linewidth}
        \centering
        \includegraphics[width=\linewidth]{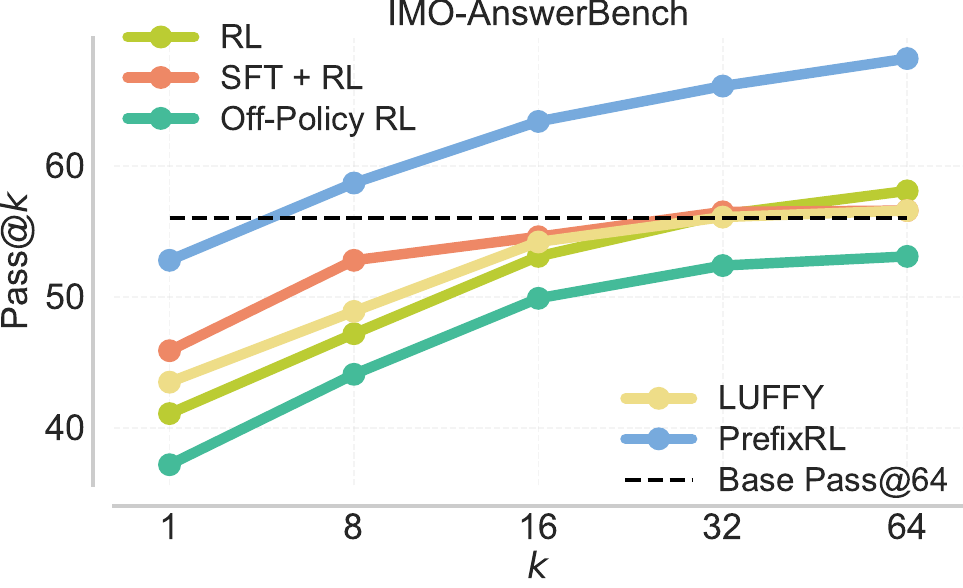}
    \end{subfigure}
    \vspace{-0.05in}
    \caption{\textbf{Held-out pass@$k$ of \methodname{}  trained Llama models}: We measure \texttt{pass@k} (\%) on AIME'25, HMMT'25 and 200 problems from IMO-AnswerBench for our distilled Llama-3.1-8B (base model) finetuned with standard on-policy RL,  and \methodname{} (with and without off-policy suffix injection). Note that IMO-AnswerBench is sampled from a similar distribution to our training problems, but is still held-out. The horizontal dashed line marks the base model's pass@64.}
    \label{fig:passk}
\end{figure}

\highlight{\methodname{} improves both pass@1 and pass@$k$ on held-out benchmarks.} Figure~\ref{fig:passk} shows that across AIME'25, HMMT'25, and IMO-AnswerBench, \methodname{} consistently improves \texttt{pass@k} (for $k\leq64$) over the baselines by over 10\% absolute, including off-policy RL methods such as importance-weighting (off-Policy RL) and LUFFY. The gains are visible already at $k{=}1$ and widen as $k$ increases.
This gain is notable since we only train on hard problems, so there is substantial transfer from the better training accuracy on hard problems to solving both easy and hard problems. On AIME'25 with Llama-3.1-8B, \methodname{} improves \texttt{pass@1} from 38.2 to 61.3, a sizable absolute gain given the benchmark’s difficulty. We observe a similar effect on HMMT'25, where \texttt{pass@1} increases from 29.2 to 49.4. These small-$k$ improvements are diagnostically important: they suggest that the model is more likely to instantiate the right high-level plan earlier in the trajectory, aligning with our “backward generalization” hypothesis (Section~\ref{sec:back_generalization}).
As $k$ grows, the performance gap generally widens (\textit{e.g.}, on AIME' 25: $+18$ points at $k{=}8$ and $+28$ at $k{=}64$). This pattern indicates that \methodname{} improves the diversity of the overall search distribution: additional samples explore more promising subspaces rather than repeating low-value trajectories. In other words, \methodname{} enhances both the \emph{mean} performance (seen in $k{=}1$) and the \emph{tail} (seen as $k$ increases) of the trace distribution.

\highlight{\methodname{} increases the support of solvable problems over the course of training.} 
Now we ask how much of the gains in Figures~\ref{fig:compute_matched_training},~\ref{fig:passk} come from ``sharpening'' the model's output distribution on problems that were already solvable with more samples (pass@$k$) versus an actual expansion in the support of solvable problems.
Figure~\ref{fig:compute_matched_llama}(b) shows that \methodname{} not only improves pass@1 but also steadily improves compute-matched pass@32, while competing baselines largely saturate over the course of training. This suggests that \methodname{} increases the set of problems with non-trivial success probability rather than only converting pass@$k$ into higher pass@1. 

\highlight{Uniform improvement across training problems.} The above pass@32 trend is mirrored by the pass@1 histograms in Figure~\ref{fig:compute_matched_llama}(a): vanilla RL improves a narrow subset of problems (likely those that were lucky enough to see positive samples early in the training) and stagnates elsewhere, while increasing $n$ helps by raising the chance of observing reward across the batch. Prior work attributes such non-uniform progress to training plateaus via \emph{ray interference}~\citep{schaul2019ray}. In contrast, PrefixRL improves pass@1 on a broader fraction of problems by collecting reward on prefixed versions and leveraging back-generalization to advance the no-prefix tasks.

\begin{figure}[b]
    \centering

    \begin{subfigure}[t]{0.32\linewidth}
        \centering
        \includegraphics[width=\linewidth]{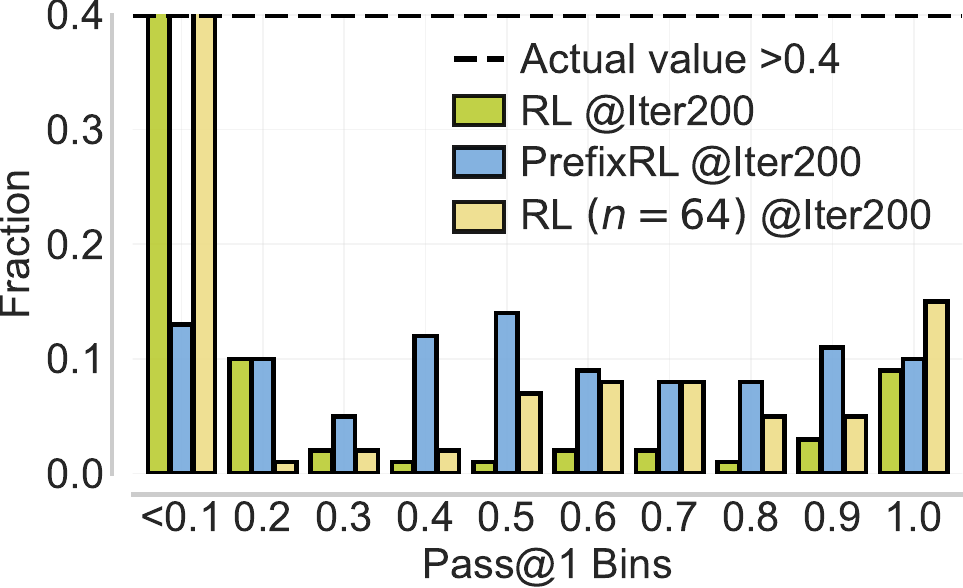}
        \caption*{(a)\vspace{-0.1in}}
    \end{subfigure}
    \begin{subfigure}[t]{0.32\linewidth}
        \centering
        \includegraphics[width=\linewidth]{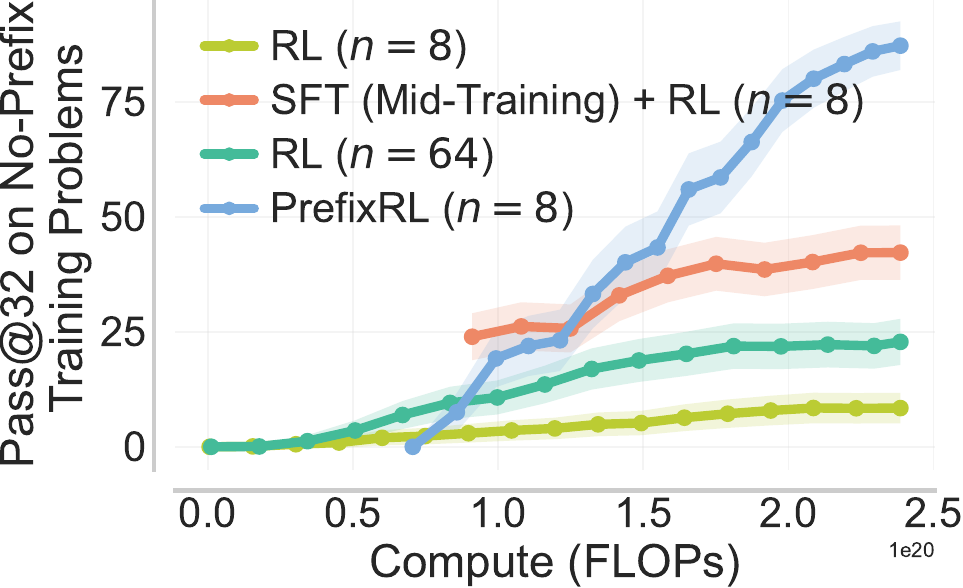}
        \caption*{(b)\vspace{-0.1in}}
    \end{subfigure}
    \begin{subfigure}[t]{0.32\linewidth}
        \centering
        \includegraphics[width=\linewidth]{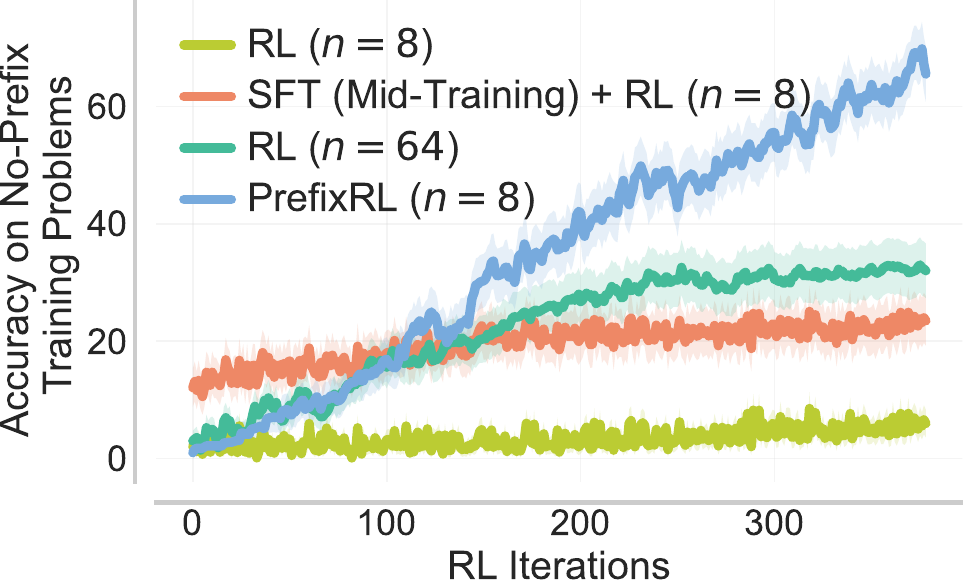}
        \caption*{(c)\vspace{-0.1in}}
    \end{subfigure}
    \caption{\textbf{Pass@$k$ evolution and iteration-matched comparison on training problems.}  (\textbf{a}) \textit{Uniform pass@1 improvement:} By design, the base LLM places all training problems in the pass@1 bin <0.1. After 200 iterations, PrefixRL yields the most uniform gains across problems, while RL concentrates improvements on a small subset with rare successes; increasing  $n$ partially mitigates this. (\textbf{b}) \emph{New problems solved:} compute-matched pass@32 plots  indicate that \methodname{} steadily expands the set of solvable problems rather than merely converting a fixed pass@$k$ (for small $k$) into higher pass@1, whereas the baselines largely saturate on pass@32. (\textbf{c}) \emph{Fair baselines:} iteration-matched reward curves confirm stable training across methods, so the compute-matched gains are not explained by unstable or degenerate baseline runs.
    }
    \label{fig:compute_matched_llama}
\end{figure}

\begin{figure}[t]
    \centering
    \includegraphics[width=0.42\linewidth]{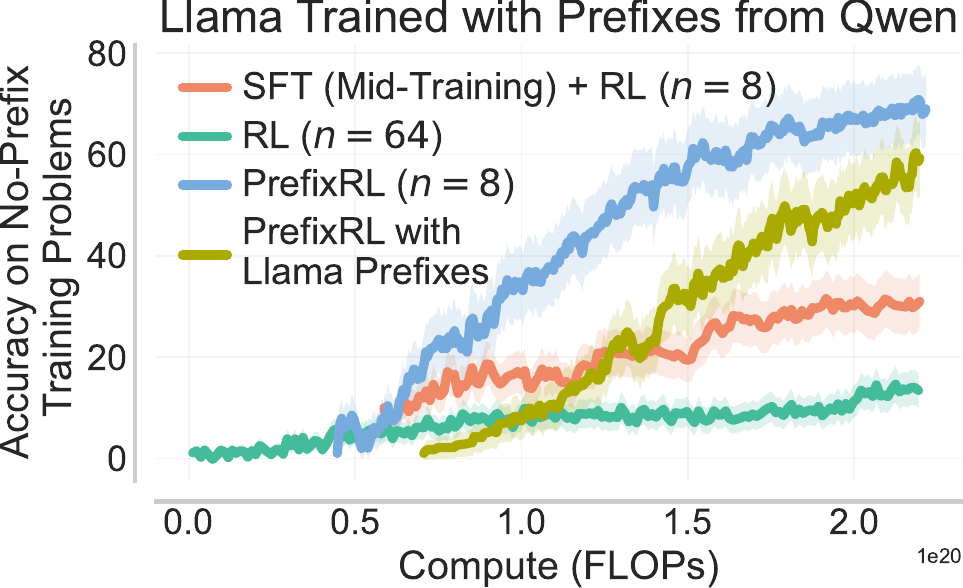}
    \hspace{1em}
    \includegraphics[width=0.42\linewidth]{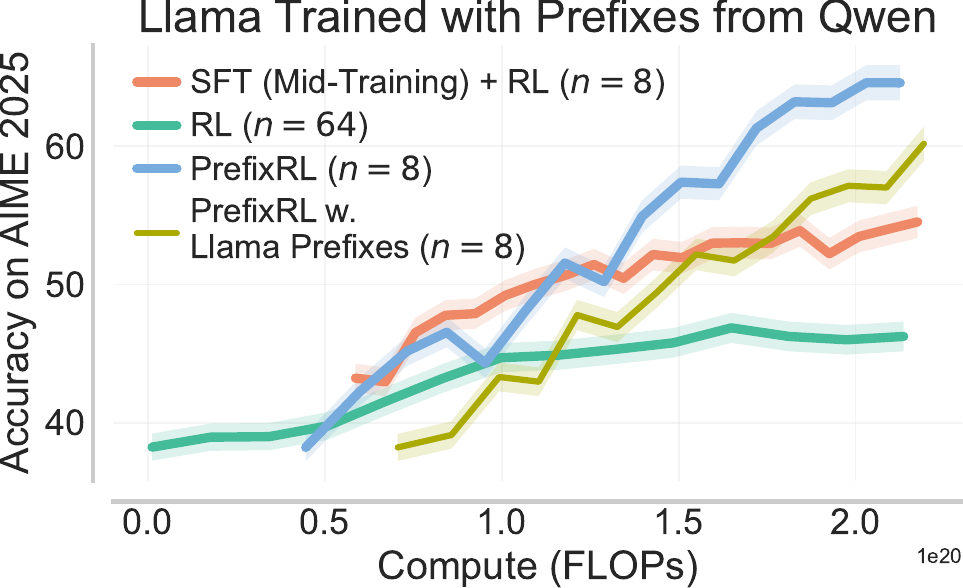}
    \vspace{-0.05in}
    \caption{\textbf{\methodname{} is still effective with off-policy prefixes from a different model family.} 
    We train Llama3.1-8b-instruct on prefixed-problems constructed using prefixes of rejection sampled traces from the Qwen-3-4b-instruct \textit{(left)}. Interestingly, even though prefixes are more out-of-distribution than those rejection sampled from Llama3.1-8b itself, they are still equally effective in improving on hard problems compared to when the prefixes are sampled from Llama3.1-8b (olive green line). We also plot the performance on AIME when we train Llama with prefixes from Qwen  \textit{(right)}.}
    \label{fig:prefix_switch}
\end{figure}

\highlight{\methodname{} is still effective with off-policy prefixes from a different model family.} 
\label{subsec:experiments_switch_prefix}
Up to now, we have used \methodname{} with prefixes from the same base LLM that we ran RL with. In many practical settings, it is easier to source off-policy prefixes from another model that is substantially different in training data or architecture; for example, from the open-source community. 
Motivated by this, we compare \methodname{} with mid-training and standard RL when the off-policy data is sourced from a policy (Qwen3-4b-instruct) that is different from the base LLM (Llama3.1-8B-instruct) in Figure~\ref{fig:prefix_switch}.
Despite the Qwen prefixes being more off-policy due to having a different training dataset and architecture, they end up being similarly effective on both train and test problems as off-policy prefixes from the Llama base model we run RL on. Note that the Llama base model required more initial compute for rejection sampling to collect the off-policy prefixes, which accounts for most of the difference in the curves.
In Appendix~\ref{app:qwen_with_llama_prefixes}, we also find that the reverse direction is effective, where we use Llama to generate off-policy prefixes for Qwen.

\begin{figure*}[!t]
    \centering
    \includegraphics[width=0.32\linewidth]{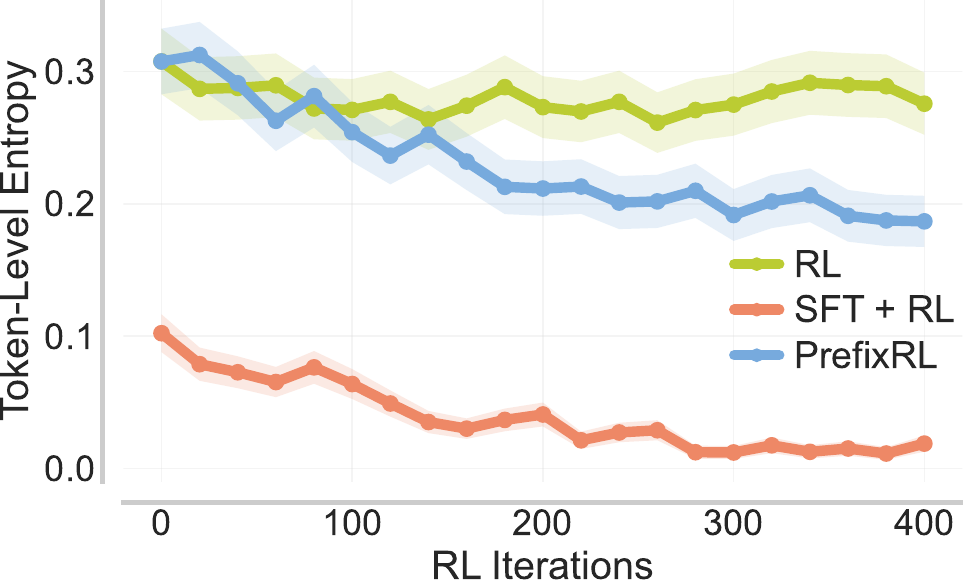}
    \includegraphics[width=0.32\linewidth]{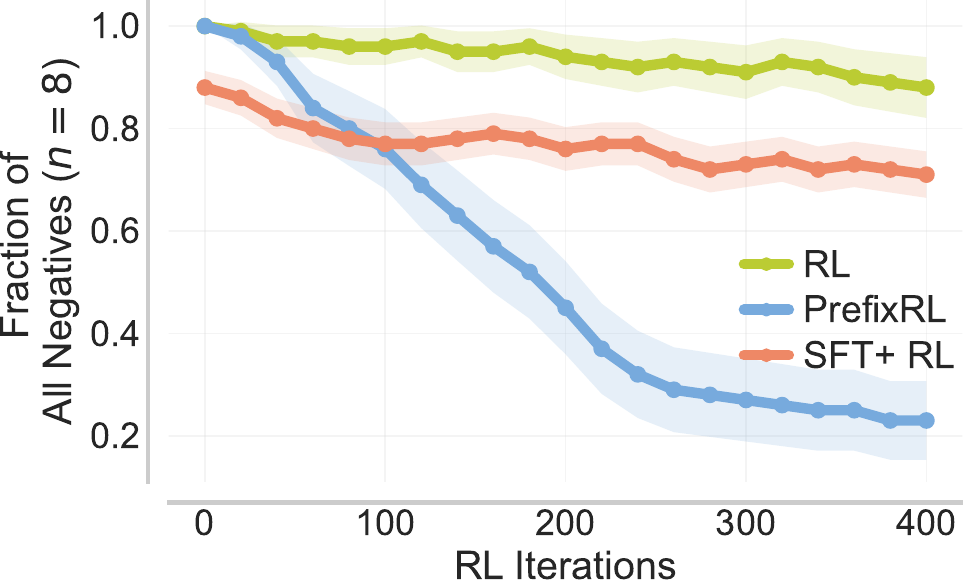}
    \includegraphics[width=0.32\linewidth]{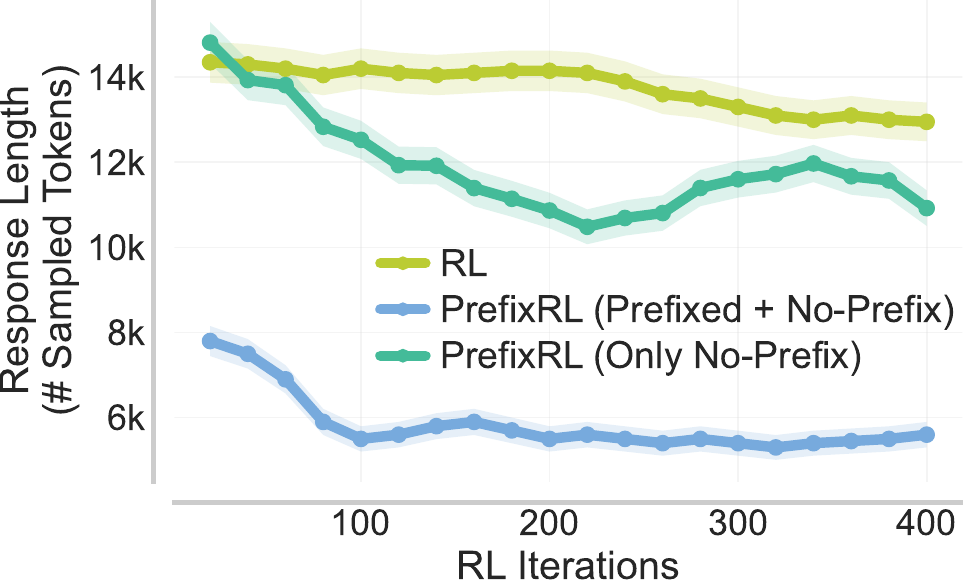}
    \caption{\textbf{RL Training Dynamics.} (\textbf{Left}) Compared to RL and PrefixRL, supervised learning on off-policy data (SFT+RL) leads to much lower token-level entropy during RL, which may hurt exploration. (\textbf{Middle}) \methodname{} has much lower ``all negative ratio'', or the number of prompts with all zero rewards (zero advantage) during training. Note that the ratio is measured only on problems without any prefixes.  (\textbf{Right}) When we look at the average response length on only no-prefix problems (green), we see \methodname{} generates shorter responses, mostly because correct solutions are usually shorter and standard RL has worse accuracy. Blue shows the average length across all problems, which is lower due to prefixing. This shortening is one way that \methodname{} reduces gradient variance and gains on compute efficiency compared to standard RL.  }
    \label{fig:training-dynamics}
    \vspace{-0.2in}
\end{figure*}

\vspace{-0.05in}
\subsection{Training Dynamics of \methodname{} are More Stable Compared to Typical Off-Policy Methods}
\label{subsec:experiments_dynamics}
\vspace{-0.05in}

In this section, we investigate RL training dynamics (like gradient variance) underlying the instabilities of off-policy RL. We also discuss the higher signal-to-noise ratio of policy gradients observed by PrefixRL as a result of the off-policy prefixes of correct traces placing the current RL policy in states with non-zero advantages.

\begin{figure*}[t]
    \centering
    \includegraphics[width=0.38\linewidth]{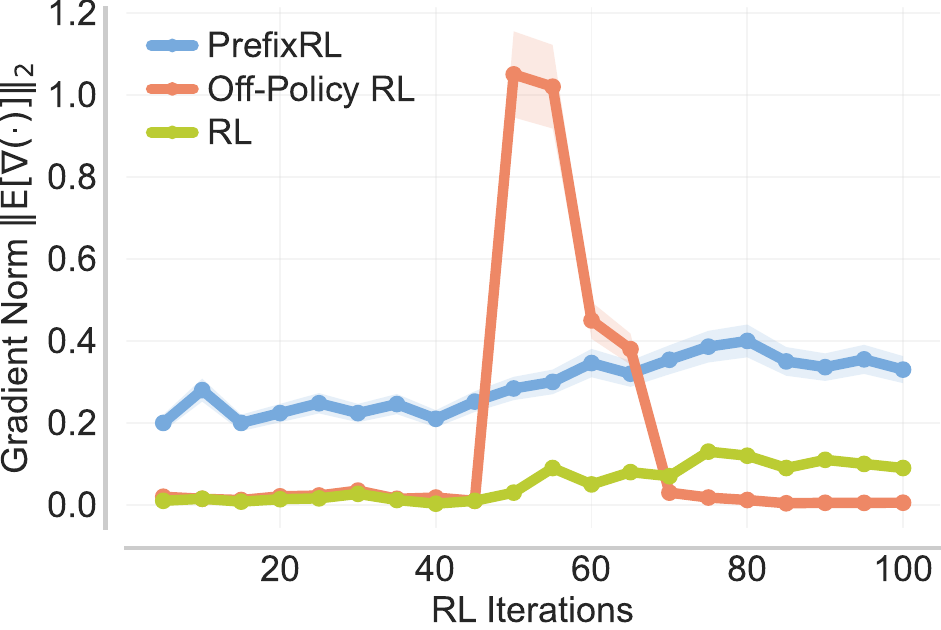}
    \hspace{2em}\includegraphics[width=0.38\linewidth]{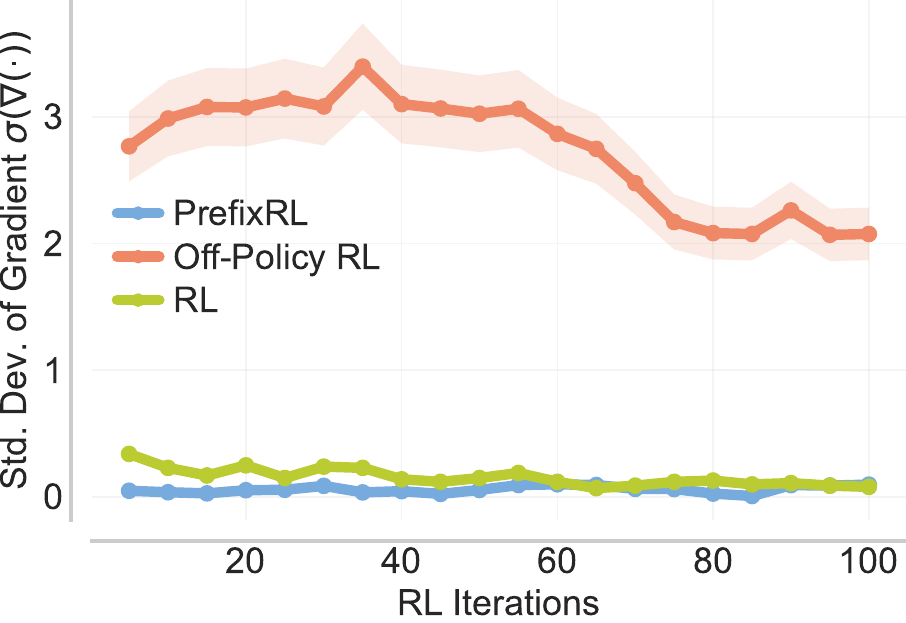}
    \caption{\textbf{\methodname{} signal-to-noise ratio and safely training on off-policy suffixes.} \methodname{} simultaneously has higher gradient norm (left) and lower gradient variance (right) than both standard RL and importance-weighted off-policy RL. For off-policy RL in particular, the importance weighting causes the gradient variance to be much higher and causes a gradient norm spike, which is a sign of training instability. \methodname{} avoids these issues while still leveraging off-policy data.}
    \label{fig:snr}
    \vspace{-0.2cm}
\end{figure*}

\highlight{\methodname{} leverages off-policy data while preserving entropy for RL exploration.} It is common practice to mid-train models on reasoning traces from more capable models or its own traces from base model inference or past RL runs~\citep{zelikman2022star}. This is typically done to prime the RL initialization and improve coverage over high-reward regions. But  this typically comes at the cost of the model's token-level entropy which impacts its ability to explore during RL.  Figure~\ref{fig:training-dynamics} (left) shows the average token-level entropy of the model's next-token distributions during the  RL run. Doing SFT on off-policy data causes the entropy to dramatically decrease and this only continues to drop further to values as low as $0.01$ during RL, suggesting that reward maximization during RL is mostly just sharpening the distribution over correct traces the base model can already sample (after running SFT). In contrast, \methodname{} preserves most of the token-level entropy while still making use of off-policy data. 

\highlight{Fewer all-negative batches on hard problems.}
Figure~\ref{fig:training-dynamics} (middle) plots the fraction of \emph{all-negative} problems (i.e., problems in a training batch for which all $n$ on-policy traces on that problem receive zero reward), measured \emph{only} over no-prefix problems. \methodname{} consistently reduces this ratio relative to on-policy RL. This reduction reveals an underlying shift in the unconditional policy: as training on strategy-revealing states proceeds, the model becomes more likely to \emph{enter} regions where non-zero advantages are attainable (either due to the prefix revealing ``useful'' strategies that are further  reinforced with positive rewards, or revealing likely but incorrect strategies that are quickly  down-weighted and unlearned (Section~\ref{subsec:backgen_synthetic})), thereby breaking the stalling regime (Section~\ref{sec:prelim}).

\highlight{\methodname{} achieves better accuracies with fewer sampled tokens.}
Figure~\ref{fig:training-dynamics} (right) tracks the average sampled tokens per trace across all prompts (with and without prefixes). On the no-prefix problems, \methodname{} eventually maintains a lower sampled token budget per trace while achieving higher reward rates, implying better \emph{iteration efficiency}. This is perhaps expected since correct traces are biased to be shorter and qualitatively, once the model internalizes the strategy, it reaches decisive steps earlier, which reduces ``unproductive wandering'' later in the horizon. Moreover, since PrefixRL trains on 3:1 mixture of prefixed to no-prefix problems, the average number of tokens sampled (across all problem types) per batch is less than $\nicefrac{1}{2}$ of the RL run. As a result of short length RL, the gradient variance for PrefixRL is much lower compared to standard RL, despite the PrefixRL not being biased (Section~\ref{subsec:theory}).

\highlight{\methodname{} policy gradients have a higher signal-to-noise ratio.}
Figure~\ref{fig:snr} reports gradient norm and gradient standard deviation for \methodname{},  off-policy RL, and standard RL. For off-policy RL, in addition to the on-policy samples, we also compute the batch gradient using the correct off-policy traces (in $\Doff$) for the problems in the batch, whenever the problem observes no correct on-policy sample in the $n$ attempts. We correct the sampling policy mismatch via importance weighting: while an unbiased sequence-level, unclipped correction is possible~\citep{tan2025gtpo}, it has prohibitive variance, so we follow prior work and use clipped token-level weighting (biased)~\citep{liuli2025} (Appendix~\ref{app:additional_baselines}). We estimate the expected gradient norm and sampled-gradient standard deviation by maintaining moving averages of coordinate-wise first/second moments and summing across coordinates (equivalently, tracing the gradient covariance; Appendix~\ref{app:grad_norm_std}). Off-policy RL exhibits noisy, biased gradients and even a norm spike (instability). Because rejection-sampled traces are extremely unlikely under $\pi_0$ and later $\pi_t$, token-level weights can be tiny (e.g., $10^{-3}$), shrinking the expected gradient so its norm stays comparable to on-policy RL until the spike (Figure~\ref{fig:snr} (left)). In contrast, PrefixRL samples fewer tokens (lower variance) and its prefixes roll the policy into higher-advantage states (higher expected norm), yielding simultaneously higher gradient norm and lower variance (Figure~\ref{fig:snr} (right)) \textit{i.e.}, a better gradient signal-to-noise ratio.

\vspace{-0.15cm}
\section{Related Work and Discussion}
\label{sec:relwork}

\highlight{Learning from off-policy LLM rollouts.} When on-policy search stalls due to over-sharpening or “over-thinking,” a common approach is to supervise on human or oracle-provided traces~\citep{lightman2023lets,corrado2024guideddataaugmentationoffline}, but teacher-driven methods inherit the teacher’s capacity limit~\citep{agarwal2024onpolicydistillationlanguagemodels} and often require reward shaping~\citep{yan2025learningreasonoffpolicyguidance}, entropy control~\citep{wang2025beyond}, and heavy hyperparameter tuning~\citep{zhang2025onpolicyrlmeetsoffpolicy}; moreover, for hard problems, long model-compatible chains of thought are scarce and mismatches can collapse response diversity~\citep{kang2024unfamiliar}. When off-policy data come from "close enough" (in KL divergence) policies as in Async RL, reuse becomes more efficient~\citep{fu2025areallargescaleasynchronousreinforcement,khatri2025artscalingreinforcementlearning}, yet large importance weights and high gradient variance pose instability risks~\citep{agarwal2021theory}, so practical systems cap behavior-policy staleness to only a few RL iterations~\citep{sheng2024hybridflow}.  
More recent works on use off-policy data~\citep{huang2025blendingsupervisedreinforcementfinetuning,zhang2025rlepreinforcementlearningexperience} also use the entire off-policy data or parts of it  as imitation targets on problems where on-policy correct rollouts are rare.
Different from the above, PrefixRL conditions on off-policy prefixes from long-thinking traces, as opposed to training on them before or during RL. 
Thus, instead of suffering from instabilities caused by using off-policy data as direct  targets  (Sections~\ref{sec:method} and ~\ref{subsec:experiments_dynamics}), PrefixRL benefits from the old inference FLOPs spent on off-policy traces indirectly via back-generalization.

\highlight{Conditioning on hints to improve on-policy RL.} The above concerns motivate approaches that do not treat off-policy trajectories as targets; related directions condition on subgoals or plans~\citep{hong2025planning}, higher-level abstractions~\citep{qu2025rladtrainingllmsdiscover}. 
A related line of work augments prompts with hints or partially revealed human solutions to ``guide'' on-policy RL~\citep{chen2025nudgingboundariesllmreasoning,li2025questa,qu2025pope}.   
AdaBack~\citep{amani2025rlreasoningadaptivelyrevealing} adaptively searches for the minimal hint that improves performance over human-written solutions, but is hard to scale to long-context ``thinking models'' and large datasets. Similarly, QuestA~\citep{li2025questa} uses answer-hinted prompts derived from human solutions.
In general, these methods are only feasible when we have access to solution traces written by a human or a more capable teacher model. In contrast, PrefixRL enables a self-improvement loop by not relying on external sources and instead reusing compute from prior models. Moreover, our work also analyzes the back-generalization phenomenon that may be shared across these methods, showing that it cannot be explained by some of the ``stitching'' arguments made in prior work~\citep{zhang2025bread}.

\highlight{Resetting to off-policy states in RL.} The idea of ``resetting'' current RL policy to off-policy states is not new in RL
\citep{kakade2003samplecomplexity,bagnell2003psdp,nair2018overcoming,salimans2018montezuma,yin2022localplanning,uchendu2023jumpstart,silver2016alphago,silver2016mastering,agarwal2019rltheory,daume2005laso,daume2009searn}.
\cite{chang2024dataset} also applied the resetting idea for post-training LLMs with human feedback. While similar in principle, their work does not study computational gains accounting for the initial compute spent on collecting off-policy traces. In fact, in their case the data is human labeled whereas our work lies more in a self-improvement setting.  
Our contribution is to instantiate this perspective for RL of reasoning LLMs.
We show that a relatively small dataset of \emph{correct} off-policy traces is sufficient to enable effective resets that make hard, low-pass-rate problems trainable even when on-policy rollouts almost never succeed.
We also show that PrefixRL yields a strictly better allocation of compute, even after accounting for the inference cost of collecting the off-policy traces.

\highlight{Improving exploration on hard problems in LLM reasoning.}
Small models fine-tuned with RL can outperform much larger base models~\citep{liu2025prorlprolongedreinforcementlearning,deepscaler2025}, largely by reinforcing long chain-of-thought behaviors like self-correction~\citep{qu2024recursive} and reflection~\citep{gandhi2025cognitivebehaviorsenableselfimproving}. Yet, without careful controls, RL often under-explores and leaves hard instances underprobed; empirically this appears as a drop in pass@$k$ versus the base model~\citep{yue2025doesreinforcementlearningreally,zhao2025echochamberrlposttraining}. One response is to regularize training to curb over-sharpening via intrinsic-motivation bonuses~\citep{gao2025navigateunknownenhancingllm}, entropy~\citep{wang2025reinforcementlearningreasoninglarge}, count-based signals~\citep{song2025outcomebasedexplorationllmreasoning}, or objectives that directly optimize pass@$n$~\citep{chow2024inference,balashankar2025infaligninferenceawarelanguagemodel}, but these still inherit sparse-reward limits and depend on easy problems for signal~\citep{he2024olympiadbenchchallengingbenchmarkpromoting}. A complementary thread~\citep{setlur2025e3learningexploreenables} exploits base-model asymmetries, \textit{e.g.}, the verification–generation gap~\citep{setlur2025scalingtesttimecomputeverification,song2024mind}, and can combine with negative-gradient dynamics to chain such asymmetries across updates~\citep{zhu2025surprisingeffectivenessnegativereinforcement}; nevertheless, models often ``under-think''~\citep{wang2025thoughts}, persisting with wrong high-level plans despite more rollouts. 
In contrast, PrefixRL avoids carefully tuned auxiliary exploration objectives by reshaping the start-state distribution directly. 
Empirically, we do not observe the pass@$k$ regressions often induced by over-sharpening or over exploration with token-level entropy regularizers.
In the worst case, uninformative prefixes recover standard on-policy RL (Section~\ref{subsec:backgen_icl}).

\vspace{-0.1cm}
\section{Conclusion}
\label{sec:conclusion}
\vspace{-0.1cm}
\methodname{} is fundamentally different from typical methods that \emph{imitate} off-policy data, relying instead on back-generalization to incorporate off-policy data as \emph{conditioning} context while keeping updates on-policy.
We expect the design space for such algorithms to be broad since conditioning is highly flexible and the mechanisms (and full potential) of back-generalization aren't fully understood.
Crucially, back-generalization leverages the capabilities of the model itself to incorporate off-policy feedback.
Thus, harnessing back-generalization is a crucial ingredient for bootstrapping prior compute and continuous self-improvement.

\textbf{Acknowledgements.} We would like to thank Aviral Kumar, Rishabh Agarwal, Shengjia Zhao, Hongyu Ren, Sean Bell, Ankur Pai, Saurabh Garg, Wen Sun, Max Simchowitz,  Sergey Levine, Yuxiao Qu, Ian Wu, Rohan Maheshwari, and Yuandong Tian for helpful discussions and thoughtful feedback on our work.

\bibliographystyle{plainnat}
\bibliography{paper}

\clearpage
\newpage
\beginappendix

\section{Full Set of Notations}
\label{sec:additional_notation}

\highlight{Markov decision process.} We use $\bx$ to denote an input problem and $\by=(y_1,\ldots,y_{H})$ for a response of $H$ tokens, and if $\by \sim \pi(\cdot \mid \bx)$, then $\by$ is sampled auto-regressively from the LLM $\pi$ fed with input $\bx$. Each token in this response $\by$ belongs to a set of tokens or actions $\mathcal{A}$.  The state $\bs_h$ at time step $h$ is given by $(\bx, y_1, y_2, \ldots, y_h)$, where the initial state $\bs_0$ is just the problem $\bx$. The set of states across all time steps is denoted by the class $\mathcal{S}$.  We use $d^\pi_h$ to denote the distribution over states $\bs_h$ at time step $h$ by rolling out the policy $\pi$ auto-regressively for $h$ time steps. 
For compactness, we write the trajectory-level log-likelihood $\log \pi(\by\mid \bx)=\sum_{h=1}^{|\by|}\log \pi(y_h\mid \bx,\by_{<h})$. For each problem we have access to outcome reward function $r(\bx_i, \by)$ to check whether the final answer in response $\by$ is correct/incorrect (1/0) for the problem $\bx_i$ (\textit{e.g.}, by matching the boxed answer in the end of $\by$ for math problems).   

\highlight{Dataset of hard problems and off-policy traces.} We use $\cal{D}$ to denote a dataset of $N$ hard problems $\mathcal{D} =: \{\bx_i\}_{i=1}^{N}$. 
We use $\pibase$ to denote the base pre-trained LLM that we initialize the RL algorithm, $\pit$ as the policy after $t$ RL iterations and $\Doff$ as the dataset of off-policy traces. 
Finally, we define the pass rate @$k$ for problem $\bx$ and LLM $\pi$ as $\E_{\by_1, \ldots, \by_k \sim \pi(\cdot \mid \bx)} \max(\{r(\bx, \by_i)\}_{i=1}^k)$. In the main paper, we define the set of hard problems as those with pass@$k$$\approx$$0$ under the base LLM $\pibase$. See Section~\ref{sec:experiments} for how we select these low pass rate hard problems for training.

\section{Using Off-Policy Traces as Supervision Targets}
\label{app:offpolicy_as_supervision}

\begin{figure}[!h]
   \centering
    \includegraphics[width=0.24\linewidth]{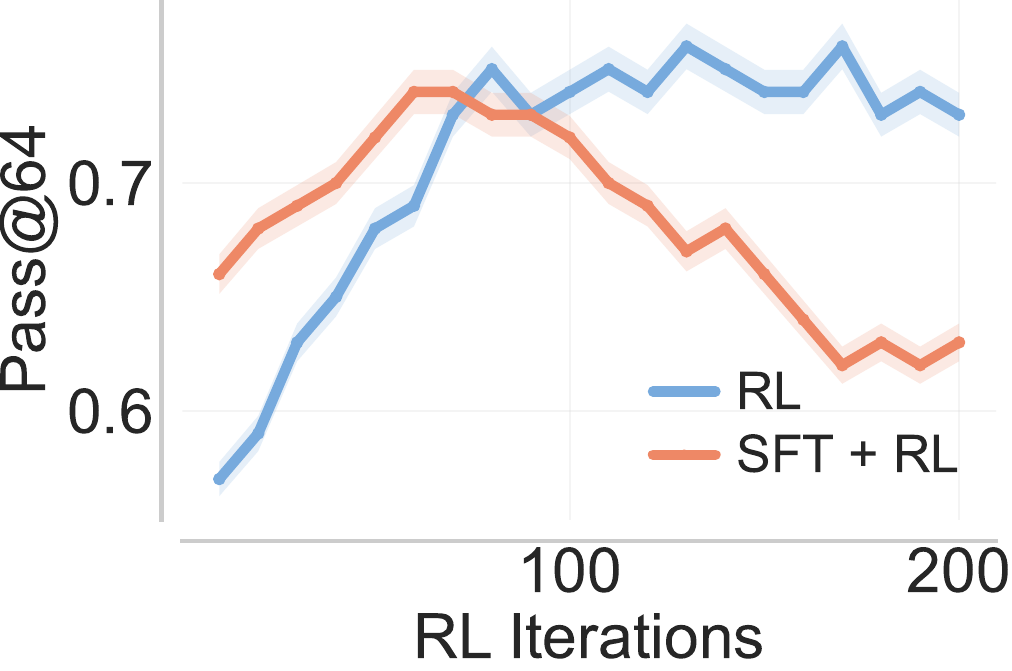}
    \includegraphics[width=0.24\linewidth]{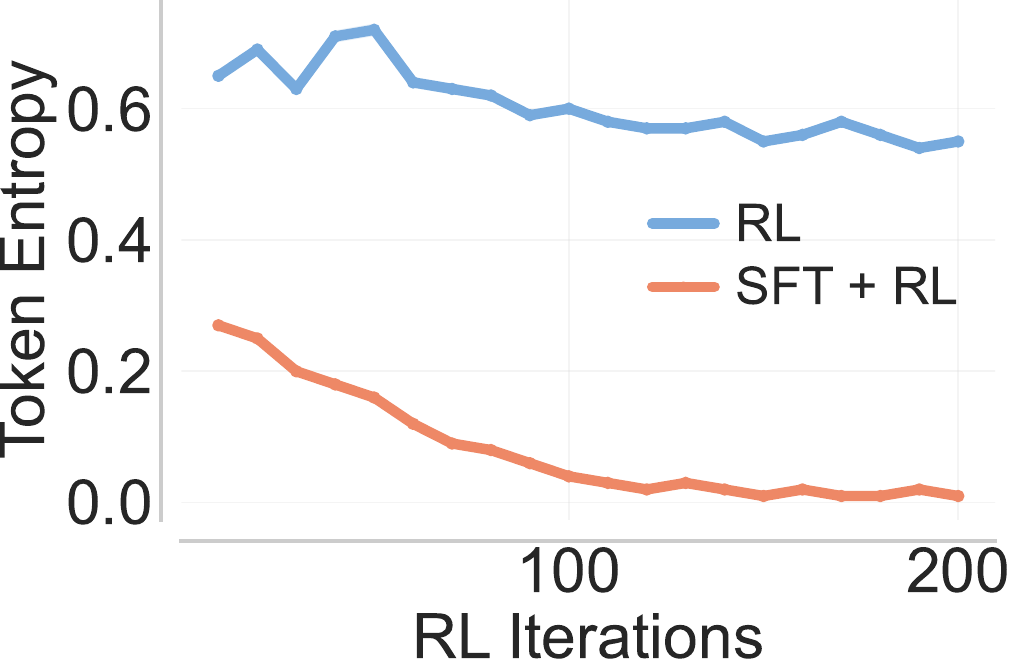}
    \includegraphics[width=0.24\linewidth]{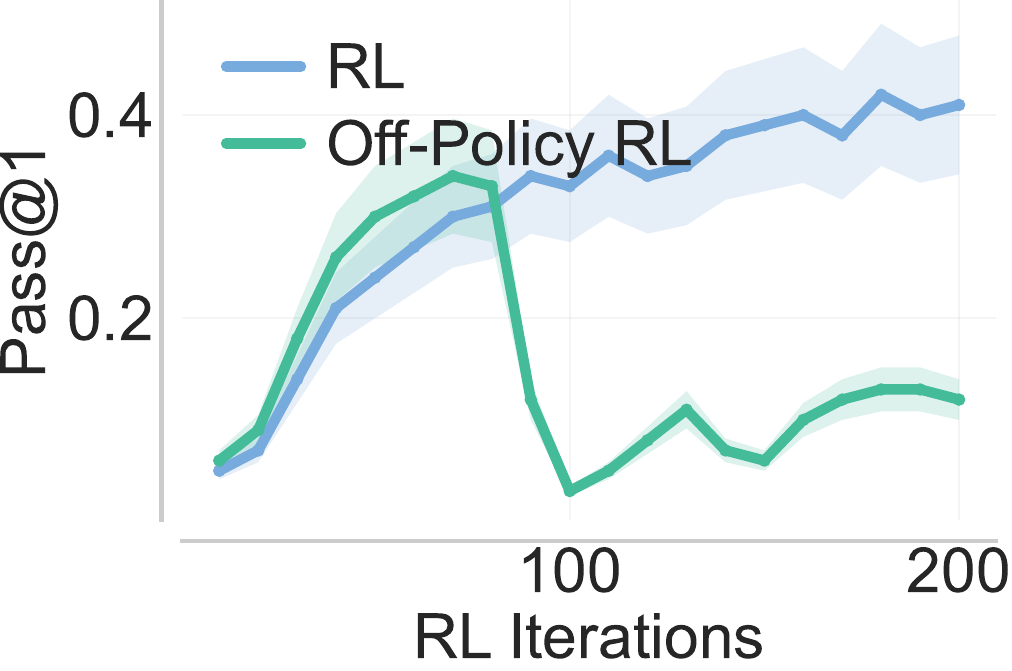}
    \includegraphics[width=0.24\linewidth]{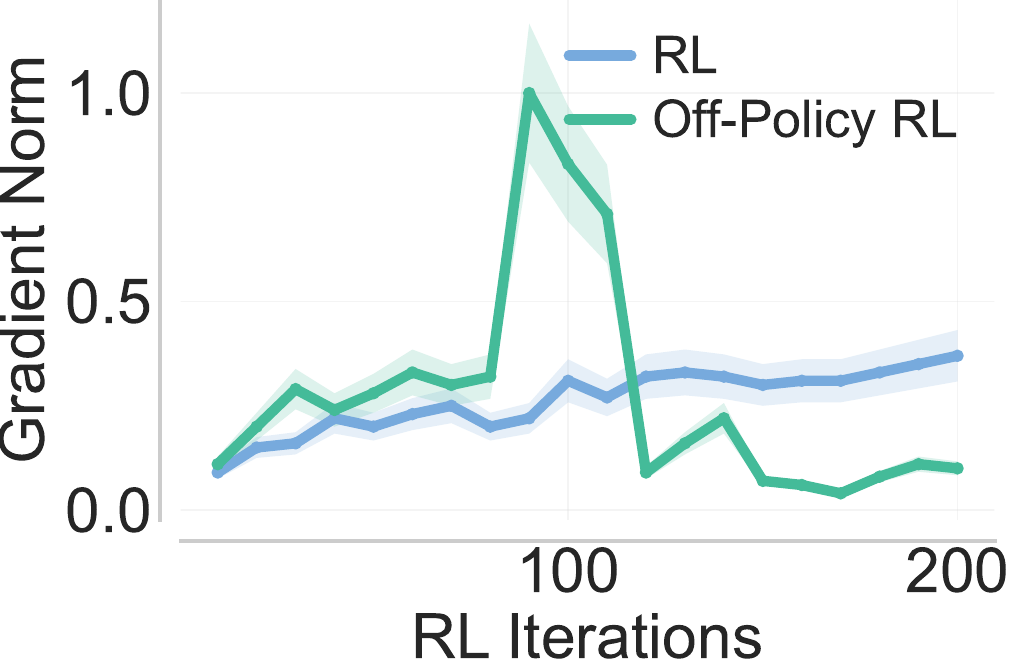}
    \caption{\textbf{Supervising the policy on $\Doff$ can cause diversity collapse or training instabilities during RL:} \textbf{(a, b)} Warm-starting the RL run by running SFT on $\Doff$ before (mid-training) reduces token entropy (a) and hurts exploration during RL (worse Pass@64 performance in (b)). \textbf{(c, d)} Directly using $\Doff$ during online RL by updating the current RL policy importance-weighted off-policy traces (in addition to on-policy traces) leads to training instabilities. We see the gradient norm (clipped at 1.0) blow up during training (d) and this leads to optimization collapse (c).} 
    \label{fig:midtrain_and_offpolicyrl}
\end{figure}

\highlight{SFT on $\Doff$ can hurt exploration during RL.} 
A common way to improve RL on hard problems is to first mid-train (SFT) on traces in $\Doff$ to warm-start RL training. However, while SFT boosts post-RL pass@1, pass@64 drops after about 100 iterations (Figure~\ref{fig:midtrain_and_offpolicyrl}(a)). 
This loss of diversity is driven by a sharp entropy collapse after SFT, which decreases further during RL (Figure~\ref{fig:midtrain_and_offpolicyrl}(b)). This suggests that here, RL mainly sharpens the distribution of responses the model can already produce after SFT. In contrast, an early-stopped SFT checkpoint can underfit and encourage random exploration during RL~\citep{wang2025octothinkermidtrainingincentivizesreinforcement}. One can partially mitigate this by enlarging the SFT dataset ($\Doff$), but doing so increases the upfront cost of the SFT step.

\highlight{Off-policy RL using $\Doff$ leads to training instabilities.}
A more direct way to use the off-policy data $\Doff$ is to do importance-weighted off-policy RL~\citep{degris2012off} (see Appendix~\ref{app:additional_baselines}), which accounts for the distribution shift between the current RL and sampling (rejection sampling on base LLM) policies. However, this suffers from large gradient variance or heavily biased gradients due to clipping and token-level weighting rather than sequence-level~\citep{agarwal2021theory} (Section~\ref{subsec:experiments_dynamics}). This can cause training reward collapse and unstable optimization (Figures~\ref{fig:midtrain_and_offpolicyrl}(c),(d)) as we force updates on very unlikely token sequences under the current RL policy, leading to memorization of $\Doff$~\citep{kang2024learning,setlur2024rl}.

\section{Omitted Proofs}
\label{sec:proofs}

In this section, we present proofs for our theoretical results in Section~\ref{subsec:theory}. We begin with the proof for Theorem~\ref{thm:consistency} which implies that the PrefixRL objective is consistent with standard RL, and any solution for our PrefixRL objective, is also a mazimizer of the standard RL objective which just maximizes $J(\pi)$. Following this, we show the proof for Theorem~\ref{thm:suboptgap} which bounds the suboptimality gap of  an algorithm using natural policy gradient (NPG) to optimize the PrefixRL objective. Note that this is slightly different from the policy gradients we use in practice, but is nevertheless insightful in informing a formal mental model for the gains behind PrefixRL. We also provide a proof for 
Proposition~\ref{prp:worstcase_gap} that lower bounds the performance gap between PrefixRL and standard RL in the worst case.  In the end we list auxiliary lemmas useful for analysis.

\subsection{Proof of Theorem~\ref{thm:consistency}}\label{sec:consistency_proof}

Let the standard (no-prefix) RL objective be
\begin{align}
J(\pi)\;=\;\sum_{\mathbf{x}\in \mathcal{D}}\;\E_{\mathbf{y}\sim \pi(\cdot\mid \mathbf{x})}\bigl[r(\mathbf{x},\mathbf{y})\bigr],
\qquad
J^\star \;=\;\max_{\pi\in\Pi} J(\pi).  
\end{align}
For each $\mathbf{x}\in\mathcal{D}$, Assumption~\ref{assm:realizable} gives a single correct trace
$\mathbf{y}^{\mathbf{x}}$ such that $(\mathbf{x},\mathbf{y}^{\mathbf{x}})\in \mathcal{D}_{\mathrm{off}}$.
Given any cut index $h$, define the prefixed problem
\begin{align}
\mathbf{x}_{\mathrm{pre}} \;=\; \mathrm{concat}(\mathbf{x},(\mathbf{y}^{\mathbf{x}})_{1:h}),
\end{align}
and define its reward by evaluating the full transcript:
\begin{align*}
r(\mathbf{x}_{\mathrm{pre}},\mathbf{z})
\;:=\;
r\!\bigl(\mathbf{x},\,(\mathbf{y}^{\mathbf{x}})_{1:h}\circ \mathbf{z}\bigr).
\end{align*}
The PrefixRL objective is
\begin{align}
J_{\mathrm{pre}}(\pi)
\;=\;
\sum_{\mathbf{x}_{\mathrm{pre}}\in \mathcal{D}_{\mathrm{pre}}}\E_{\mathbf{z}\sim \pi(\cdot\mid \mathbf{x}_{\mathrm{pre}})}\bigl[r(\mathbf{x}_{\mathrm{pre}},\mathbf{z})\bigr]
\;+\;
\sum_{\mathbf{x}\in \mathcal{D}}\E_{\mathbf{y}\sim \pi(\cdot\mid \mathbf{x})}\bigl[r(\mathbf{x},\mathbf{y})\bigr].
\end{align}

\highlight{A uniform upper bound.}
Fix any $\mathbf{x}\in\mathcal{D}$ and any prefix $(\mathbf{y}^{\mathbf{x}})_{1:h}$ used to form a prefixed problem.
For any policy $\pi\in\Pi$ defined on such prefixed problems, construct a policy $\tilde\pi\in\Pi$ for the no-prefix problem $\mathbf{x}$ that deterministically emits the prefix $(\mathbf{y}^{\mathbf{x}})_{1:h}$ and then samples the suffix from $\pi(\cdot\mid \mathrm{concat}(\mathbf{x},(\mathbf{y}^{\mathbf{x}})_{1:h}))$.
By the reward definition,
\begin{align*}
\E_{\mathbf{y}\sim \tilde\pi(\cdot\mid \mathbf{x})}\bigl[r(\mathbf{x},\mathbf{y})\bigr]
\;=\;
\E_{\mathbf{z}\sim \pi(\cdot\mid \mathrm{concat}(\mathbf{x},(\mathbf{y}^{\mathbf{x}})_{1:h}))}\bigl[r(\mathrm{concat}(\mathbf{x},(\mathbf{y}^{\mathbf{x}})_{1:h}),\mathbf{z})\bigr].
\end{align*}
Therefore, for every $\pi$ and every such prefixed problem $\mathbf{x}_{\mathrm{pre}}=\mathrm{concat}(\mathbf{x},(\mathbf{y}^{\mathbf{x}})_{1:h})$,
\begin{align}
\E_{\mathbf{z}\sim \pi(\cdot\mid \mathbf{x}_{\mathrm{pre}})}\bigl[r(\mathbf{x}_{\mathrm{pre}},\mathbf{z})\bigr]
\;\le\;
\max_{\pi'\in\Pi}\E_{\mathbf{y}\sim \pi'(\cdot\mid \mathbf{x})}\bigl[r(\mathbf{x},\mathbf{y})\bigr].
\end{align}
Let $m(\mathbf{x})$ be the number of prefixed problems in $\mathcal{D}_{\mathrm{pre}}$ derived from $\mathbf{x}$.
Summing the above inequality over all prefixed problems and grouping by their originating $\mathbf{x}$ gives
\begin{align}
\sum_{\mathbf{x}_{\mathrm{pre}}\in \mathcal{D}_{\mathrm{pre}}}\E\bigl[r(\mathbf{x}_{\mathrm{pre}},\cdot)\bigr]
\;\le\;
\sum_{\mathbf{x}\in\mathcal{D}} m(\mathbf{x})\,
\max_{\pi'\in\Pi}\E_{\mathbf{y}\sim \pi'(\cdot\mid \mathbf{x})}\bigl[r(\mathbf{x},\mathbf{y})\bigr].
\end{align}
Define the constant
\begin{align}
C \;:=\; \sum_{\mathbf{x}\in\mathcal{D}} m(\mathbf{x})\,
\max_{\pi'\in\Pi}\E_{\mathbf{y}\sim \pi'(\cdot\mid \mathbf{x})}\bigl[r(\mathbf{x},\mathbf{y})\bigr],
\end{align}
which is independent of $\pi$.
Then for every $\pi\in\Pi$ we have the uniform upper bound
\begin{align}
J_{\mathrm{pre}}(\pi) \;\le\; C + J(\pi).
\label{eq:bound}
\end{align}

\highlight{Tightness using Assumption~\ref{assm:realizable}.}
By realizability in Assumption~\ref{assm:realizable}, there exists a policy $\mu\in\Pi$ such that
\begin{align}
\mu(\mathbf{y}^{\mathbf{x}}\mid \mathbf{x}) = 1, \qquad \forall \mathbf{x}\in\mathcal{D}.
\label{eq:mu_det}
\end{align}
In particular, for any cut $h$, conditioning $\mu$ on the prefix $(\mathbf{y}^{\mathbf{x}})_{1:h}$ yields the deterministic continuation:
\begin{align}
\mu\!\bigl(\mathbf{z}\mid \mathrm{concat}(\mathbf{x},(\mathbf{y}^{\mathbf{x}})_{1:h})\bigr)
\;=\;
\mathbf{1}\{\mathbf{z}=(\mathbf{y}^{\mathbf{x}})_{h+1:|\mathbf{y}^{\mathbf{x}}|}\}.
\label{eq:mu_suffix}
\end{align}
Therefore, on every prefixed problem $\mathbf{x}_{\mathrm{pre}}=\mathrm{concat}(\mathbf{x},(\mathbf{y}^{\mathbf{x}})_{1:h})$,
\begin{align}
\E_{\mathbf{z}\sim \mu(\cdot\mid \mathbf{x}_{\mathrm{pre}})}\bigl[r(\mathbf{x}_{\mathrm{pre}},\mathbf{z})\bigr]
\;=\;
r\!\bigl(\mathbf{x},\mathbf{y}^{\mathbf{x}}\bigr)
\;=\;1,
\end{align}
where the last equality uses correctness of $\mathcal{D}_{\mathrm{off}}$ in Assumption~\ref{assm:realizable}.
Hence,
\begin{align}
\sum_{\mathbf{x}_{\mathrm{pre}}\in \mathcal{D}_{\mathrm{pre}}}
\E_{\mathbf{z}\sim \mu(\cdot\mid \mathbf{x}_{\mathrm{pre}})}\bigl[r(\mathbf{x}_{\mathrm{pre}},\mathbf{z})\bigr]
\;=\;
\sum_{\mathbf{x}\in\mathcal{D}} m(\mathbf{x}).
\label{eq:pref_mu}
\end{align}
Moreover, by \eqref{eq:mu_det} and correctness,
\begin{align}
J(\mu)
\;=\;
\sum_{\mathbf{x}\in \mathcal{D}} \E_{\mathbf{y}\sim \mu(\cdot\mid \mathbf{x})}\bigl[r(\mathbf{x},\mathbf{y})\bigr]
\;=\;
\sum_{\mathbf{x}\in\mathcal{D}} r(\mathbf{x},\mathbf{y}^{\mathbf{x}})
\;=\;
|\mathcal{D}|.
\end{align}
Therefore $J^\star = |\mathcal{D}|$ and $J(\mu)=J^\star$.
Since rewards are in $[0,1]$, for each $\mathbf{x}$ we have
$\max_{\pi'\in\Pi}\E_{\mathbf{y}\sim \pi'(\cdot\mid \mathbf{x})}[r(\mathbf{x},\mathbf{y})]=1$,
and thus the constant simplifies to $C=\sum_{\mathbf{x}\in\mathcal{D}} m(\mathbf{x})$.
Combining with \eqref{eq:pref_mu} yields
\begin{align}
J_{\mathrm{pre}}(\mu)\;=\; C + J(\mu)\;=\; C + J^\star.
\label{eq:tight}
\end{align}

\highlight{Step 3: Concluding the consistency of PrefixRL objective.}
Let $\hat\pi\in\arg\max_{\pi\in\Pi}J_{\mathrm{pre}}(\pi)$ be any maximizer of the PrefixRL objective.
By optimality of $\hat\pi$, \eqref{eq:tight}, and the upper bound \eqref{eq:bound},
\begin{align}
C+J(\hat\pi)
\;\ge\;
J_{\mathrm{pre}}(\hat\pi)
\;\ge\;
J_{\mathrm{pre}}(\mu)
\;=\;
C+J^\star.
\end{align}
Cancelling $C$ yields $J(\hat\pi)\ge J^\star$, hence $J(\hat\pi)=J^\star$.
Therefore $\hat\pi\in\arg\max_{\pi\in\Pi}J(\pi)$, proving that any maximizer of the PrefixRL objective also maximizes the standard RL objective.

\subsection{Proof of Theorem~\ref{thm:suboptgap}}\label{sec:sample_complexity_proof}

In this section we present our proof for Theorem~\ref{thm:suboptgap} which bounds the performance suboptimality for PrefixRL. In particaly, we bound thisgap for an  algorithm  that conforms to the PrefixRL workflow (Algorithm~\ref{alg:npgalg}) but  uses natural policy gradient (NPG)~\citep{kakade2001natural} to update the policy iteratively (starting from the base LLM $\pi^{0}$). In our  practical implementation use  REINFORCE to compute on-policy gradients.  Next we introduce the setup and describe the key steps in Algorithm~\ref{alg:npgalg}, some differences with practice and the full proof.  

\highlight{Setup.} We use $\pi^{0}$ to denote the base LLM we start RL training with, and $\mu$ as the policy used to IID sample the dataset of off-policy traces $\Doff$, one trace for each problem in $\bx \in \mathcal{D}$. We assume that $\Doff$ is realizable (Assumption~\ref{assm:realizable}) which implies that it only consists of correct off-policy traces. Let $\mathcal{F}$ denote the class of $Q$-functions induced by all policies in the policy class $\Pi$, and $H$ be the maximum context length or horizon $H$ of the auto-regressive Markov decision process (MDP) induced by the policies in $\Pi$ and reward function $r$.

\highlight{Description of PrefixRL with NPG (Algorithm~\ref{alg:npgalg}).} In each iteration of Algorithm~\ref{alg:npgalg} we \emph{first} collect a dataset of state, action and reward triplets $\mathcal{D}_t$. \emph{Second}, we fit a critic or $Q$ function $\hat{Q}^t$ on this dataset of size $N$ (step 9). \emph{Third}, we use the fitted $Q$ function to perform a state-wise mirror ascent or natural policy update (step 11) in order to get the subsequent RL iterate.
We collect the $N$ traces in $\mathcal{D}_t$ by uniformly sampling an off-policy state $\bs_h$ (prefixed problem) from $\Doff$. Then, we rollout the current RL policy $\pit$ conditioned on state $\bs_{h}$ to sample a single action or token $a_h$ (step 6). To estimate the $Q$ function under the current RL policy at this state-action pair we now complete the rollout till time step $H$ and collect a reward (step 7).

\begin{algorithm}[t]
\caption{PrefixRL with Natural Policy Gradients}
\label{alg:npgalg}
\begin{algorithmic}[1]
\Require Base policy $\pi^0$, off-policy data $\Doff$,  horizon $H$, iterations $T$, step size $\eta$,  $Q$ function class $\mathcal{F}$. 
\State Initialize the iterative algorithm with base policy: $\pi^{1} \gets \pi^{0}$.
\For{$t = 1, \dots, T$}
    \State Initialize dataset $\mathcal{D}_t \gets \{\}$.
    \For{$i=1 \ldots n$}
    \State Sample $(\bs_h, a_h^{\mathrm{off}})$ uniformly across state-action pairs in $\Doff$.  \Comment{sample prefixed problem}
    \State $a_h \gets a_h^{\mathrm{off}}$  with probability \nicefrac{1}{2} and $\sim \pit(\cdot \mid \bs_h)$ otherwise. 
    \State Execute $\pit(\cdot \mid \bs_h, a_h)$ from step $h{+}1$ through $H$ to obtain the full trace with reward $r$.
    \State $\mathcal{D}_t \gets \mathcal{D}_t \cup (\bs_h, a_h, r)$. 
    \State \textbf{Critic fit (regression oracle):}
    \State \hspace{0.6cm} $\hat Q^{t} \gets \arg\min_{f \in \mathcal{F}} \ \sum_{(\bs, a,r)\in \mathcal{D}_t}  (f(\bs,a) - r)^2$. 
    \State \textbf{Natural policy update (mirror ascent):} \Comment{performed state-wise}
    \State \hspace{0.6cm} $\pi^{t+1}(\cdot \mid \bs) \gets \arg\min_{p}\ \langle -\hat Q^{t}(\bs,\cdot), p \rangle + \tfrac{1}{\eta} {\mathrm{KL}} \paren{p \,\|\, \pit(\cdot \mid \bs)}$.
\EndFor
\EndFor
\State \Return $\bar{\pi}_T \gets \tfrac{1}{T}\sum_{t=1}^{T}\pi^{t}$. \Comment{return mixture policy}
\end{algorithmic} 
\end{algorithm}

\highlight{Difference with practice: Algorithm~\ref{alg:npgalg} uses $Q$ functions instead of direct rewards.} The update in NPG is similar to REINFORCE except that we use $N$ on-policy samples to first estimate a $Q$ function (step 10) in the $Q$ function class $\mathcal{F}$) for the current RL iterate $\pit$ and then use the estimated $Q$ function to update the policy and get $\pi_{t+1}$ using mirror ascent (step 12). This is a bit different from REINFORCE where we compute the policy gradient using only  the rewards attaind by the $N$ on-policy traces and perform a step of gradient. 

\highlight{Difference with practice: Algorithm~\ref{alg:npgalg} samples new prefixed problems from $\Doff$.} In practice we construct a prefixed problems from a fixed dataset of off-policy traces $\Doff$ and also use the set of prefixed problems in $\cal{D}_\mathrm{pre}$ are fixed throughout RL training. In contrast,  Algorithm~\ref{alg:npgalg} samples off-policy states (prefixed problems) from the dataset of off-policy traces $\Doff$. This difference is pretty minor but perhaps underscores the  performance improvements driven by back-generalization in being able to improve performance on the original no-prefix problems despite PrefixRL only using a small fraction of all possible off-policy prefixes in $\Doff$.

\highlight{Comparison with \citet{chang2024dataset}.} Our proof technique  follows  \citet{chang2024dataset}, adapting to the setting of verifiable rewards with our different ``reset'' policy (which we refer to as prefix policy $\mu$), and removing the requirement of KL divergence between the current and the reset policy. Since our off-policy dataset consists of only realizable correct traces we will need much weaker assumptions. Following are some key differences compared to Algorithm 3 in \cite{chang2024dataset} that allows us to prove the suboptimality gap with weaker assumptions. \emph{First}, we sample the prefix from the comparator policy (in other words the prefix generating policy is realizable and lies in the class of optimal policies).
This ensures sufficient coverage for the distribution of Q-function regression (ensuring small error in fitting the critic) over states visited by a ``good'' policy even though the current RL iterate is far from it. \emph{Second}, we output the mixture policy (standard in self-play literature~\citep{bai2020near,hofbauer2006best}). \emph{Finally}, unlike \citet{chang2024dataset}, we don't require a  bound on the KL divergence against the SFT policy or the policy trained on the off-policy data.

\highlight{Assumptions needed for Theorem~\ref{thm:suboptgap}.} Now, we list the assumptions we make in our analysis of the suboptimality gap of PrefixRL. In general, they are milder than  the assumptions in \citet{chang2024dataset}. 
\begin{itemize}
    \item Assumption~\ref{assm:realizable_q} is pretty standard in the analysis of actor-critic methods~\citep{haarnoja18sac} and only requires that our critic function class is expressive enough to realize the $Q$ function induced by any policy in $\Pi$. Note that since rewards are binary and terminal the $Q$-value at any state $\in [0, 1]$. 
    \item Assumption~\ref{assm:realizable-app} is a milder form of the typical assumption on the coverage over states visited by the optimal policy. Here, we only assume that there is an optimal policy that can fit the dataset $\Doff$ we collected. Typically the coverage assumption places  a uniform bound on the likelihood ratio over the state distributions of the optimal policy and the current RL policy $(d^{\pi^\star}/d^{\pi})$ as in \cite{chang2024dataset}.
    \item Assumption~\ref{assm:finite_KL} is necessary to ensure that the KL between the prefix generating policy (empirical distribution over $\Doff$) and the base policy $\pi^{0}$ is finite. If the size of the dataset $\Doff \rightarrow \infty$ and the samples in $\Doff$ are all drawn IID from a policy $\nu \in \Pi$, then this assumption requires that cross-entropy between $\nu$ and $\pi^{0}$ is finite. 
\end{itemize}

\begin{assumption}[Realizability of $Q$-function class] \label{assm:realizable_q} There is a finite $Q$-function class $\cF \subseteq \{f: \mathcal{S} \times \mathcal{A} \rightarrow [0,1]$, and that $Q$-function induced by any policy is realized in this class, i.e., $Q^{\pi} \in \cF \; \forall \pi \in \Pi$.
\end{assumption}

\begin{assumption}[Correctness and realizability of $\Doff$]
\label{assm:realizable-app} We say that $\Doff$ is correct if it contains a single correct trace $\by$ for every $\bx \in \mathcal{D}$ and realizable if $\exists$  some policy $\mu \in \Pi$ such that  ${\mu}(\by\mid \bx) = 1,\; \forall (\bx, \by) \in \Doff$.
\end{assumption}

\begin{assumption}[Bounded likelihood of $\Doff$ under $\pi^{0}$]
\label{assm:finite_KL} The KL divergence between base LLM $\pi^{0}$ and the policy $\mu \in \Pi$ that perfectly fits the data is
    $\mathrm{KL}(\mu||\pi^{0}) < \infty$. In other words, this assumes that the samples in $\Doff$ have a bounded likelihood under the base LLM $\pi^{0}$, i.e., $\mathrm{KL}(\mu||\pi^{0}) = \frac{1}{|\Doff|} \sum_{(\bx, \by^{\bx}) \in \Doff} - { \log \pi^{0}(\by^{\bx} \mid \bx)} < \infty$.   
\end{assumption}

\begin{proof}
We prove the guarantee against the comparator policy $\mu$ from Assumption~\ref{assm:realizable-app}.
Since $\Doff$ is correct and realizable by $\mu$ (i.e., for each $\mathbf{x}\in\mathcal{D}$ the unique correct trace
$\mathbf{y}^{\mathbf{x}}$ satisfies $\mu(\mathbf{y}^{\mathbf{x}}\mid \mathbf{x})=1$), we have $J(\mu)=J^\star$.
Thus it suffices to upper bound:
\begin{align}
\label{eq:suffices_mu}
J(\mu)-J(\bar\pi_T).
\end{align}

\highlight{State distribution induced by $\Doff$.}
Let $\mathbf{s}\in\mathcal{S}$ denote an autoregressive prefix-state.
Algorithm~\ref{alg:npgalg} samples prefix-states by drawing $(\mathbf{x},\mathbf{y})$ uniformly from $\Doff$
and then sampling a prefix of $\mathbf{y}$ (according to the algorithm's prefix-selection rule).
Because $\mu$ deterministically generates the same trace $\mathbf{y}$ in $\Doff$ for each $\mathbf{x}$, this state distribution
coincides with the state-visitation distribution of $\mu$; we denote it by $d^\mu_{\mathbf{s}}$.
\begin{align}
\label{eq:doff_state_dist}
d^\mu_{\mathbf{s}}
\;\equiv\;
d^{\mathrm{off}}_{\mathbf{s}}.
\end{align}

Let $\pi^{0}$ be the base LLM and $\{\pi^t\}_{t=1}^T$ be the iterates produced by NPG / mirror descent with stepsize $\eta$
and critic $\widehat Q^{\,t}$, and define the averaged policy:
\begin{align}
\label{eq:avg_policy_def}
\bar\pi_T \;:=\; \frac{1}{T}\sum_{t=1}^T \pi^t.
\end{align}

\highlight{ Performance difference lemma under $d^\mu_{\mathbf{s}}$.}
Applying performance difference Lemma~\ref{lem:pdl} with $(\pi,\pi^\prime)=(\mu,\pi^t)$ yields:
\begin{align}
\label{eq:pdl_mu_pit}
J(\mu)-J(\pi^t)
\;=\;
\E_{\mathbf{s}\sim d^\mu_{\mathbf{s}}}\E_{\mathbf{a}\sim \mu(\cdot\mid \mathbf{s})}
\bigl[A^{\pi^t}(\mathbf{s},\mathbf{a})\bigr].
\end{align}
Using $A^{\pi^t}(\mathbf{s},\mathbf{a}) = Q^{\pi^t}(\mathbf{s},\mathbf{a}) - V^{\pi^t}(\mathbf{s})$
and the identity $\E_{\mathbf{a}\sim \pi^t(\cdot\mid\mathbf{s})}[A^{\pi^t}(\mathbf{s},\mathbf{a})]=0$, \eqref{eq:pdl_mu_pit} can be rewritten as
\begin{align}
\label{eq:pdl_inner_product}
J(\mu)-J(\pi^t)
\;=\;
\E_{\mathbf{s}\sim d^\mu_{\mathbf{s}}}
\Bigl[
\bigl\langle Q^{\pi^t}(\mathbf{s},\cdot),\,\mu(\cdot\mid\mathbf{s})-\pi^t(\cdot\mid\mathbf{s})\bigr\rangle
\Bigr].
\end{align}

\subsubsection{Critic Estimation Error.}

Fix an iteration $t$.
The critic is fit by least squares over a finite class $\mathcal{F}$ (Assumption~\ref{assm:realizable_q})
using $N$ i.i.d.\ samples $(\mathbf{s}_k,\mathbf{a}_k,z_k)$ where $\mathbf{s}_k\sim d^\mu_{\mathbf{s}}$,
$\mathbf{a}_k\sim \rho^t(\cdot\mid\mathbf{s}_k)$ (see discussion below on $\rho^t$), and $z_k$ is an unbiased target for $Q^{\pi^t}(\mathbf{s}_k,\mathbf{a}_k)$.
Because rewards are \emph{terminal and binary} in $\{0,1\}$, we have
\begin{align}
\label{eq:Q_infty_bound}
0 \le Q^{\pi^t}(\mathbf{s},\mathbf{a}) \le 1,
\qquad
0 \le \widehat Q^{\,t}(\mathbf{s},\mathbf{a}) \le 1,
\end{align}
so we may take $R=1$ in Lemma~\ref{lem:C1-song}.
Therefore, setting $\delta_t:=\delta/(2T)$ and applying Lemma~\ref{lem:C1-song} with $\mathcal{H}=\mathcal{F}$,
with probability at least $1-\delta_t$,
\begin{align}
\label{eq:critic_mse_song}
\E_{\mathbf{s}\sim d^\mu_{\mathbf{s}},\,\mathbf{a}\sim \rho^t(\cdot\mid\mathbf{s})}
\Bigl[\bigl(\widehat Q^{\,t}(\mathbf{s},\mathbf{a})-Q^{\pi^t}(\mathbf{s},\mathbf{a})\bigr)^2\Bigr]
\;\le\;
\frac{256}{N}\log\Bigl(\frac{2|\mathcal{F}|}{\delta_t}\Bigr)
\;=\;
\frac{256}{N}\log\Bigl(\frac{4T|\mathcal{F}|}{\delta}\Bigr).
\end{align}

\highlight{Behavior distribution $\rho^t$ and pointwise domination.}
At iteration $t$, Algorithm~\ref{alg:npgalg} forms critic data by first sampling $(\bs_h,a_h^{\mathrm{off}})$ uniformly from $\Doff$ and then sampling
\begin{align}
\label{eq:rho_def}
a_h \;=\;
\begin{cases}
a_h^{\mathrm{off}} & \text{w.p. } \tfrac{1}{2},\\
a_h \sim \pi^t(\cdot\mid \bs_h) & \text{w.p. } \tfrac{1}{2}.
\end{cases}
\end{align}
Let $\mu(\cdot\mid \bs_h)$ denote the (deterministic) conditional action distribution induced by $\Doff$, i.e.,
\begin{align}
\label{eq:mu_def}
\mu(a\mid \bs_h) \;:=\; \mathbf{1}\{a=a_h^{\mathrm{off}}\}.
\end{align}
Then the induced action-sampling (behavior) distribution used for critic fitting is the mixture
\begin{align}
\label{eq:rho_mixture}
\rho^t(\cdot\mid \bs_h)
\;:=\;
\frac{1}{2}\,\mu(\cdot\mid \bs_h)
\;+\;
\frac{1}{2}\,\pi^t(\cdot\mid \bs_h).
\end{align}
Consequently, for every state $\bs$ and action $a$, we have the pointwise lower bounds
\begin{align}
\label{eq:rho_lowerbounds}
\rho^t(a\mid \bs)\;\ge\;\frac{1}{2}\mu(a\mid \bs),
\qquad
\rho^t(a\mid \bs)\;\ge\;\frac{1}{2}\pi^t(a\mid \bs),
\end{align}
and hence the pointwise domination inequalities
\begin{align}
\label{eq:rho_domination}
\mu(a\mid \bs)\;\le\;2\,\rho^t(a\mid \bs),
\qquad
\pi^t(a\mid \bs)\;\le\;2\,\rho^t(a\mid \bs).
\end{align}
In particular, \eqref{eq:rho_lowerbounds} also implies absolute continuity:
if $\rho^t(a\mid \bs)=0$ then $\mu(a\mid \bs)=\pi^t(a\mid \bs)=0$, so $\mu(\cdot\mid \bs)\ll \rho^t(\cdot\mid \bs)$ and
$\pi^t(\cdot\mid \bs)\ll \rho^t(\cdot\mid \bs)$.

Then Cauchy--Schwarz and Jensen applied to \eqref{eq:critic_mse_song} yield, for $\pi\in\{\mu,\pi^t\}$,
\begin{align}
\label{eq:critic_linear}
\Bigl|
\E_{\mathbf{s}\sim d^\mu_{\mathbf{s}},\,\mathbf{a}\sim \pi(\cdot\mid\mathbf{s})}
\bigl[\widehat Q^{\,t}(\mathbf{s},\mathbf{a})-Q^{\pi^t}(\mathbf{s},\mathbf{a})\bigr]
\Bigr|
\;\le\;
\epsilon_{\mathrm{crt}},
\qquad
\epsilon_{\mathrm{crt}}
:= 16\sqrt{2}\,\sqrt{\frac{1}{N}\log\Bigl(\frac{4T|\mathcal{F}|}{\delta}\Bigr)}.
\end{align}
Taking a union bound over $t\in[T]$, with probability at least $1-\delta/2$, \eqref{eq:critic_linear} holds for all $t$.

\subsubsection{Mirror Ascent and NPG Optimization Error.}
The mirror-descent update at state $\mathbf{s}$ is the KL-regularized maximization
\begin{align*}
\pi^{t+1}(\cdot\mid\mathbf{s})
=
\arg\max_{p(\cdot\mid\mathbf{s})}
\Bigl\{\eta\langle \widehat Q^{\,t}(\mathbf{s},\cdot),p(\cdot\mid\mathbf{s})\rangle
-\KL(p(\cdot\mid\mathbf{s})\|\pi^t(\cdot\mid\mathbf{s}))\Bigr\}.
\end{align*}
The first-order optimality condition implies that for any $p(\cdot\mid\mathbf{s})$,
\begin{align}
\label{eq:md_foc}
\Bigl\langle
-\eta\,\widehat Q^{\,t}(\mathbf{s},\cdot)
+
\nabla_r \KL(r(\cdot\mid\mathbf{s})\|\pi^t(\cdot\mid\mathbf{s}))\big|_{r=\pi^{t+1}},
\ p-\pi^{t+1}
\Bigr\rangle
\;\ge\;0.
\end{align}
Set $p=\mu(\cdot\mid\mathbf{s})$ and rearrange \eqref{eq:md_foc} to obtain
\begin{align}
\label{eq:md_rearranged}
\eta\bigl\langle \widehat Q^{\,t}(\mathbf{s},\cdot),\,\mu(\cdot\mid\mathbf{s})-\pi^{t+1}(\cdot\mid\mathbf{s})\bigr\rangle
\;\le\;
\Bigl\langle
\nabla_r \KL(r\|\pi^t)\big|_{r=\pi^{t+1}},
\ \mu-\pi^{t+1}
\Bigr\rangle.
\end{align}
Apply the KL three-point identity (Lemma~\eqref{eq:kl_three_point_grad}) with
$p=\mu(\cdot\mid\mathbf{s})$, $r=\pi^{t+1}(\cdot\mid\mathbf{s})$, and $q=\pi^t(\cdot\mid\mathbf{s})$ to rewrite the right-hand side:
\begin{align}
\label{eq:three_point_applied}
\Bigl\langle
\nabla_r \KL(r\|\pi^t)\big|_{r=\pi^{t+1}},
\ \mu-\pi^{t+1}
\Bigr\rangle
=
\KL(\mu\|\pi^t) - \KL(\mu\|\pi^{t+1}) - \KL(\pi^{t+1}\|\pi^t).
\end{align}
Combining \eqref{eq:md_rearranged} and \eqref{eq:three_point_applied} gives
\begin{align}
\label{eq:md_basic}
\eta\bigl\langle \widehat Q^{\,t}(\mathbf{s},\cdot),\,\mu-\pi^{t+1}\bigr\rangle
\;\le\;
\KL(\mu\|\pi^t)-\KL(\mu\|\pi^{t+1})-\KL(\pi^{t+1}\|\pi^t).
\end{align}
Using \eqref{eq:Q_infty_bound} and Pinsker's inequality, we bound the shift term
\begin{align}
\label{eq:md_shift}
\eta\bigl\langle \widehat Q^{\,t}(\mathbf{s},\cdot),\,\pi^{t+1}-\pi^{t}\bigr\rangle
\;\le\;
\eta\,\|\widehat Q^{\,t}(\mathbf{s},\cdot)\|_\infty\,\|\pi^{t+1}-\pi^t\|_1
\;\le\;
\frac{\eta^2}{2}+\KL(\pi^{t+1}\|\pi^t),
\end{align}
where the last inequality uses $\|\widehat Q^{\,t}\|_\infty\le 1$ and $\|\pi^{t+1}-\pi^t\|_1^2\le 2\KL(\pi^{t+1}\|\pi^t)$.
Adding \eqref{eq:md_basic} and \eqref{eq:md_shift} cancels $\KL(\pi^{t+1}\|\pi^t)$ and yields
\begin{align}
\label{eq:md_final_statewise}
\eta\bigl\langle \widehat Q^{\,t}(\mathbf{s},\cdot),\,\mu-\pi^{t}\bigr\rangle
\;\le\;
\KL(\mu\|\pi^t)-\KL(\mu\|\pi^{t+1})+\frac{\eta^2}{2}.
\end{align}
Taking expectation over $\mathbf{s}\sim d^\mu_{\mathbf{s}}$ and summing over $t=1,\dots,T$ gives
\begin{align}
\label{eq:md_summed}
\frac{1}{T}\sum_{t=1}^T
\E_{\mathbf{s}\sim d^\mu_{\mathbf{s}}}
\Bigl[\bigl\langle \widehat Q^{\,t}(\mathbf{s},\cdot),\,\mu-\pi^{t}\bigr\rangle\Bigr]
\;\le\;
\frac{D_1}{\eta T}+\frac{\eta}{2},
\end{align}
where
\begin{align}
\label{eq:D1_def}
D_1
\;:=\;
\E_{\mathbf{s}\sim d^\mu_{\mathbf{s}}}\Bigl[\KL\bigl(\mu(\cdot\mid\mathbf{s})\,\|\,\pi^1(\cdot\mid\mathbf{s})\bigr)\Bigr].
\end{align}
Using $\pi^1=\pi^{0}$ and the definition of $\KL(\mu\|\pi^{0})$ under the $\Doff$-induced state distribution (Assumption~\ref{assm:finite_KL}),
we identify
\begin{align}
\label{eq:D1_equals_KL}
D_1 \;=\; \KL(\mu\|\pi^{0}).
\end{align}

\subsubsection{Combining Critic Error and Optimization Error.}
Starting from \eqref{eq:pdl_inner_product}, add and subtract $\widehat Q^{\,t}$:
\begin{align}
\label{eq:pdl_split_final}
J(\mu)-J(\pi^t)
&=
\E_{\mathbf{s}\sim d^\mu_{\mathbf{s}}}
\Bigl[\bigl\langle \widehat Q^{\,t}(\mathbf{s},\cdot),\,\mu-\pi^t\bigr\rangle\Bigr]
\nonumber\\
&\quad+
\E_{\mathbf{s}\sim d^\mu_{\mathbf{s}}}
\Bigl[\bigl\langle Q^{\pi^t}(\mathbf{s},\cdot)-\widehat Q^{\,t}(\mathbf{s},\cdot),\,\mu-\pi^t\bigr\rangle\Bigr].
\end{align}
On the high-probability event where \eqref{eq:critic_linear} holds for both $\pi=\mu$ and $\pi=\pi^t$,
the critic-error term is bounded by
\begin{align}
\label{eq:critic_term_noH}
\E_{\mathbf{s}\sim d^\mu_{\mathbf{s}}}
\Bigl[\bigl\langle Q^{\pi^t}-\widehat Q^{\,t},\,\mu-\pi^t\bigr\rangle\Bigr]
\;\le\;
\Bigl|\E_{\mathbf{s}\sim d^\mu_{\mathbf{s}},\,\mathbf{a}\sim \mu}\bigl[\widehat Q^{\,t}-Q^{\pi^t}\bigr]\Bigr|
+
\Bigl|\E_{\mathbf{s}\sim d^\mu_{\mathbf{s}},\,\mathbf{a}\sim \pi^t}\bigl[\widehat Q^{\,t}-Q^{\pi^t}\bigr]\Bigr|
\;\le\; 2\,\epsilon_{\mathrm{crt}},
\end{align}
which introduces no extra factor of $H$.

Averaging \eqref{eq:pdl_split_final} over $t=1,\dots,T$ and using \eqref{eq:avg_policy_def}, we obtain
\begin{align}
\label{eq:combine_avg}
J(\mu)-J(\bar\pi_T)
\;\le\;
\frac{1}{T}\sum_{t=1}^T
\E_{\mathbf{s}\sim d^\mu_{\mathbf{s}}}
\Bigl[\bigl\langle \widehat Q^{\,t}(\mathbf{s},\cdot),\,\mu-\pi^t\bigr\rangle\Bigr]
\;+\; 2\,\epsilon_{\mathrm{crt}}.
\end{align}
Applying \eqref{eq:md_summed} to the first term in \eqref{eq:combine_avg} yields
\begin{align}
\label{eq:pre_eta_final}
J(\mu)-J(\bar\pi_T)
\;\le\;
\frac{D_1}{\eta T} + \frac{\eta}{2} + 2\,\epsilon_{\mathrm{crt}}.
\end{align}
Choose $\eta := \sqrt{\frac{2D_1}{T}}$ to balance the first two terms in \eqref{eq:pre_eta_final}, giving
\begin{align}
\label{eq:opt_choice_eta}
\frac{D_1}{\eta T} + \frac{\eta}{2}
\;=\;
\sqrt{\frac{2D_1}{T}}.
\end{align}
Combining \eqref{eq:D1_equals_KL}, \eqref{eq:critic_linear}, and \eqref{eq:pre_eta_final}, and recalling $J(\mu)=J^\star$,
we conclude that with probability at least $1-\delta$,
\begin{align}
\label{eq:final_bound_rewritten}
J(\pi^\star)-J(\bar\pi_T)
\;=\;
J^\star - J(\bar\pi_T)
\;\le\;
\mathcal{O}\left(
\sqrt{\frac{\KL(\mu\|\pi^{0})}{T}}
\;+\;
\sqrt{\frac{1}{N}\log\Bigl(\frac{T|\mathcal{F}|}{\delta}\Bigr)}
\right).
\end{align}
\end{proof}

\subsection{Proof of Proposition~\ref{prp:worstcase_gap}}
\label{subsec:proof_prp_worstcase_gap}

Now, we prove our separation result in Proposition~\ref{prp:worstcase_gap} that lower bounds the performance gap between standard RL and PrefixRL. Here, standard RL runs Algorithm~\ref{alg:npgalg} but now without any access to the off-policy dataset $\Doff$. In each iteration, the critic training dataset $\mathcal{D}_t$ is now populated with $(\bs,a,r)$ tuples where both $\bs$ and $a$ are sampled from the current policy $\pit$. So, unlike PrefixRL we never sample the state or prefix from $\Doff$.  In this simple worst-case instance we present below there is a single trajectory in $\Doff$ that is also what that the optimal policy samples with probability $1$ and attains performance of $J(\pi^\star)=1$.

\begin{proof}
We present (i) the MDP instance together with a choice of \emph{base policy} that generates the off-policy trace, and then (ii) an exponential lower bound for standard on-policy RL without $\Doff$, and (iii) a horizon-independent (non-exponential) upper bound for PrefixRL with $\Doff$.

\highlight{MDP instance (hidden rewarding binary string) and base policy.}
Fix a horizon $H$ and an unknown binary string $\mathbf{b}=(b_1,\ldots,b_H)\in\{0,1\}^{H}$.
Let the state space be
\begin{align}
\label{eq:prefix_states_v2}
\mathcal{S} \;=\; \{s_0,s_1,\ldots,s_H\},
\end{align}
where $s_{h-1}$ encodes the first $h-1$ actions taken so far ($s_0$ is the start state).
The action space is $\mathcal{A}=\{0,1\}$.
Transitions are deterministic: from $s_{h-1}$, taking action $a_h\in\{0,1\}$ moves to $s_h$.
The episode ends at $s_H$ with terminal reward
\begin{align}
\label{eq:hidden_string_reward_v2}
r \;=\; \mathbf{1}\{(a_1,\ldots,a_H)=(b_1,\ldots,b_H)\}.
\end{align}
Thus, exactly one length-$H$ action sequence earns reward $1$.

Let $\pi^\star$ be the deterministic policy that selects $b_h$ at $s_{h-1}$ for each $h\in[H]$. Then $J(\pi^\star)=1$.
For the PrefixRL part, we also choose a \emph{base policy} $\mu$ and an off-policy dataset $\Doff$:
we set $\mu:=\pi^\star$ and let $\Doff$ contain the unique successful trajectory of $\mu$, equivalently the state--action pairs
\begin{align}
\label{eq:Doff_pairs_v2}
\Doff \;=\;\{(s_{h-1},b_h):h\in[H]\}.
\end{align}

\subsubsection{Exponential lower bound for standard on-policy RL (Algorithm~1 without $\Doff$).}
We analyze an on-policy variant of Algorithm~1 in which \emph{there is no off-policy dataset}: at each iteration $t$, the algorithm samples
$N$ full episodes only from its current policy $\pi^t$, observes only terminal rewards $r^{(t,i)}\in\{0,1\}$, fits a critic, and updates the policy.
Let $\widehat\pi_T$ denote the (possibly randomized) policy output after $T$ iterations (so the total number of full episodes is $TN$).
We prove that for any such algorithm, there exists an instance $\mathbf b$ for which the expected suboptimality gap is at least
$1-(TN+1)2^{-H}$.

\highlight{Yao's minimax setup.}
By Yao's minimax principle~\cite{yaominimax}, it suffices to fix an arbitrary adaptive algorithm and analyze its expected performance when the instance is random:
\begin{align}
\mathbf{b}\sim \mathrm{Unif}(\{0,1\}^H),
\end{align}
and we write $\mathbb{P}_{\mathbf{b}},\mathbb{E}_{\mathbf{b}}$ for probability/expectation over this draw (and over the algorithm's internal randomness).

\highlight{Per-rollout success probability is $2^{-H}$.}
Fix any rollout index $(t,i)$. Condition on the full interaction history up to this rollout and on the algorithm's internal randomness.
Under this conditioning, the action string $\mathbf a^{(t,i)}\in\{0,1\}^H$ is some random element (with an arbitrary distribution induced by the algorithm),
while $\mathbf b$ remains uniform and independent. Therefore,
\begin{align}
\label{eq:hit_prob_one_rollout_v2}
\mathbb{P}_{\mathbf{b}}\!\left[r^{(t,i)}=1 \mid \text{history}\right]
&=
\mathbb{P}_{\mathbf{b}}\!\left[\mathbf{a}^{(t,i)}=\mathbf{b}\mid \text{history}\right]
=
\sum_{\mathbf{a}\in\{0,1\}^H}
\mathbb{P}\!\left[\mathbf{a}^{(t,i)}=\mathbf{a}\mid \text{history}\right]\cdot
\mathbb{P}_{\mathbf{b}}[\mathbf{b}=\mathbf{a}]
=
2^{-H}.
\end{align}
Taking expectation over the history yields the unconditional version:
\begin{align}
\label{eq:hit_prob_one_rollout_uncond_v2}
\mathbb{P}_{\mathbf{b}}[r^{(t,i)}=1] \;=\; 2^{-H}.
\end{align}

\highlight{Probability of ever seeing a reward-1 rollout.}
There are exactly $TN$ rollouts in total. By a union bound and \eqref{eq:hit_prob_one_rollout_uncond_v2} we get the following upper bound on the probability of seeing a reward $1$ rollout across all $T$ steps,
\begin{align}
\label{eq:any_hit_prob_TN_v2}
\mathbb{P}_{\mathbf{b}}\!\left[\exists\,t\le T,\ i\le N:\ r^{(t,i)}=1\right]
\;\le\;
\sum_{t=1}^T\sum_{i=1}^N \mathbb{P}_{\mathbf{b}}[r^{(t,i)}=1]
\;=\;
TN\cdot 2^{-H}.
\end{align}

\highlight{Bound the expected value of the returned policy.}
Let $\mathbf a\sim\widehat\pi_T$ denote the length-$H$ string generated by rolling out $\widehat\pi_T$ from $s_0$.
On instance $\mathbf b$, $J(\widehat\pi_T)=\mathbb{P}[\mathbf a=\mathbf b]$.
We can decompose this reward on two events: whether the training interaction ever produced a reward-1 rollout or not:
\begin{align}
\label{eq:value_split_v2}
\mathbb{E}_{\mathbf{b}}[J(\widehat\pi_T)]
&\le
\mathbb{P}_{\mathbf{b}}\!\left[\exists\,t,i:\ r^{(t,i)}=1\right]\cdot 1
+
\mathbb{P}_{\mathbf{b}}\!\left[\forall\,t,i:\ r^{(t,i)}=0\right]\cdot
\sup_{\widehat\pi_T}\ \mathbb{E}_{\mathbf{b}}\!\left[J(\widehat\pi_T)\mid \forall\,t,i:\ r^{(t,i)}=0\right].
\end{align}

On the event $\{\forall\,t,i:\ r^{(t,i)}=0\}$, the algorithm has never observed the rewarding string.
Each zero-reward rollout rules out \emph{at most one} candidate string, namely the realized action string $\mathbf a^{(t,i)}$.
Hence, conditioning on $\{\forall\,t,i:\ r^{(t,i)}=0\}$ only implies that $\mathbf b$ is not in a set of at most $TN$ excluded strings.
Under the prior $\mathbf b\sim\mathrm{Unif}(\{0,1\}^H)$, the posterior is uniform over the remaining candidates, so
\begin{align}
\label{eq:posterior_max_mass_TN_v2}
\max_{\mathbf a\in\{0,1\}^H}\ 
\mathbb P_{\mathbf b}\!\left[\mathbf b=\mathbf a \mid \forall\,t,i:\ r^{(t,i)}=0\right]
\;\le\;
\frac{1}{2^H-TN},
\end{align}
and therefore
\begin{align}
\label{eq:cond_value_bound_v2_fixed}
\sup_{\widehat\pi_T}\ 
\mathbb{E}_{\mathbf{b}}\!\left[J(\widehat\pi_T)\mid \forall\,t,i:\ r^{(t,i)}=0\right]
\;=\;
\sup_{\widehat\pi_T}\ 
\mathbb{E}_{\mathbf{b}}\!\left[\mathbb P(\mathbf a=\mathbf b\mid \widehat\pi_T,\forall\,t,i:\ r^{(t,i)}=0)\right]
\;\le\;
\frac{1}{2^H-TN}.
\end{align}
Substituting \eqref{eq:any_hit_prob_TN_v2} and \eqref{eq:cond_value_bound_v2_fixed} into \eqref{eq:value_split_v2} yields
\begin{align}
\label{eq:E_value_returned_TN_v2_fixed}
\mathbb{E}_{\mathbf{b}}[J(\widehat\pi_T)]
\;\le\;
\mathbb P_{\mathbf b}\!\left[\exists\,t,i:\ r^{(t,i)}=1\right]\cdot 1
+
\mathbb P_{\mathbf b}\!\left[\forall\,t,i:\ r^{(t,i)}=0\right]\cdot \frac{1}{2^H-TN}
\;\le\;
TN\cdot 2^{-H} + \frac{1}{2^H-TN}.
\end{align}
In particular, whenever $TN\le 2^{H-1}$ we have $\frac{1}{2^H-TN}\le 2^{-(H-1)}$, and thus
\begin{align}
\label{eq:E_value_returned_simple_TN_v2_fixed}
\mathbb{E}_{\mathbf{b}}[J(\widehat\pi_T)]
\;\le\;
TN\cdot 2^{-H} + 2^{-(H-1)}
\;\le\;
(TN+2)\,2^{-(H-1)}.
\end{align}

\highlight{Suboptimality lower bound and fix an instance.}
Since $J(\pi^\star)=1$, we obtain from \eqref{eq:E_value_returned_TN_v2_fixed} the gap bound
\begin{align}
\label{eq:gap_exp_lb_onpolicy_v2_fixed}
\mathbb{E}_{\mathbf{b}}[J(\pi^\star)-J(\widehat\pi_T)]
\;\ge\;
1-(TN+2)\,2^{-(H-1)}.
\end{align}
By Yao's minimax principle~\citep{yaominimax}, there exists a fixed hidden string $\mathbf b$ for which the same bound holds for the algorithm on that instance.
This proves the exponential lower bound for standard on-policy RL.

\subsubsection{Horizon-independent (non-exponential) upper bound for PrefixRL.}

We now analyze Algorithm~\ref{alg:npgalg} on the same instance \emph{with} $\Doff$.
As discussed, the separation mechanism is that Algorithm~\ref{alg:npgalg} explicitly samples states from $\Doff$, which are precisely the prefix states
along the $\pi^\star$ trajectory, thereby forcing visitation of optimal-trajectory states.

\highlight{Choosing $\pibase$.}
Let $\pibase$ denote the initialization policy of Algorithm~\ref{alg:npgalg}, \textit{i.e.} $\pi^1=\pibase$.
We choose $\pibase$ to be the uniform policy on $\{0,1\}$ at every state \textit{(crucially, $\pibase$ is not instance dependent)}:
\begin{align}
\label{eq:pi0_uniform_rewrite}
\pi_0(0\mid s)=\pi_0(1\mid s)=\tfrac12
\qquad\text{for all } s\in\{s_0,\ldots,s_{H-1}\}.
\end{align}
Since $\mu=\pi^\star$ is deterministic on the states in $\Doff$, the KL term in Theorem~3.1 is not exponential in $H$.
Under the state-averaged convention used in Theorem~3.1,
\begin{align}
\label{eq:KL_mu_pi0_const_rewrite}
\KL(\mu\|\pi_0)
:=\E_{s\sim d^\mu_s}\Big[\KL\big(\mu(\cdot\mid s)\,\|\,\pi_0(\cdot\mid s)\big)\Big]
\;=\; \log 2,
\end{align}
while under the summed convention it is $H\log 2$; in either case it is not exponential in $H$.

\highlight{Invoking PrefixRL guarantee in Theorem~\ref{thm:suboptgap}).}
All assumptions required by Theorem~\ref{thm:suboptgap} hold on this instance with $\mu=\pi^\star$ and the corresponding $\Doff=\{\mathbf{b}\}$.
Therefore, applying Theorem~\ref{thm:suboptgap} to Algorithm~\ref{alg:npgalg} yields (with probability at least $1-\delta$) the bound
\begin{align}
\label{eq:prefixrl_invoke_rewrite}
J(\mu)-J(\bar\pi_T)
\;\le\;
\sqrt{\frac{2\KL(\mu\|\pi_0)}{T}}
\;+\;
\mathcal{O}\!\left(
\sqrt{\frac{1}{N}\log\!\Bigl(\frac{T|\mathcal{F}|}{\delta}\Bigr)}
\right),
\end{align}
where $\bar\pi_T$ is the iterate-averaged policy output by Algorithm~1.
Since $\mu=\pi^\star$ on this instance, $J(\mu)=J(\pi^\star)=1$, and \eqref{eq:prefixrl_invoke_rewrite} implies a non-exponential
suboptimality bound for PrefixRL. In particular, with the choice \eqref{eq:pi0_uniform_rewrite} we have $\KL(\mu\|\pi_0)=\log 2$ (or $H\log 2$
under the summed convention), so the bound has no $2^{-H}$ term.

\highlight{Separation mechanism.}
The on-policy exponential lower bound arises because, without $\Doff$, an algorithm only observes nonzero reward if it guesses the entire length-$H$
string correctly in a single episode.
In contrast, Algorithm~1 with $\Doff$ repeatedly samples \emph{states along the $\pi^\star$ trajectory} via $\sim\Doff$ and
trains at those states using the mixture distribution of Algorithm~\ref{alg:npgalg}.
Overall, this yields a non-exponential sample complexity, establishing a worst-case separation.

\subsubsection{Worst-Case Separation Result Between Standard RL and PrefixRL.}

The analysis in the above subsection can be viewed either for a fixed hidden string $\mathbf b$, or under a randomized instance distribution.
In particular, let$
\mathbf b \sim \mathrm{Unif}(\{0,1\}^H)$,
and note that under this randomization the off-policy dataset $\Doff=\Doff(\mathbf b)$ also changes with $\mathbf b$ since it contains the unique
successful trajectory $(s_{h-1},b_h)_{h=1}^H$.

\highlight{Lower bound for standard on-policy RL under random $\mathbf b$.}
For any on-policy algorithm that runs for $T$ iterations with $N$ full episodes per iteration (so $TN$ total episodes) and does not have access to $\Doff$,
the expected value of the suboptimality gap satisfies
\begin{align}
\label{eq:standardRL_random_b_lb}
\mathbb E_{\mathbf b}\!\left[J(\pi^\star_{\mathbf b})-J(\widehat\pi_T)\right]
\;\ge\;
1-(TN+2)\,2^{-(H-1)},
\end{align}
where $\pi^\star_{\mathbf b}$ denotes the optimal policy for instance $\mathbf b$ and $TN \leq 2^{H-1}$.

\highlight{Upper bound for PrefixRL under random $\mathbf b$.}
Now consider Algorithm~1 (PrefixRL) run on the same randomized instance, where $\Doff=\Doff(\mathbf b)$ is provided to the algorithm.
Choose the initialization $\pi_0$ to be the uniform policy (so that $\KL(\mu_{\mathbf b}\|\pi_0)$ is not exponential in $H$), with
$\mu_{\mathbf b}:=\pi^\star_{\mathbf b}$ as the base policy that generates $\Doff(\mathbf b)$.
Invoking the previously proved Theorem~3.1 yields (with probability at least $1-\delta$ over the algorithm's sampling)
\begin{align}
\label{eq:prefixRL_random_b_ub}
\mathbb E_{\mathbf b}\!\left[J(\pi^\star_{\mathbf b})-J(\bar\pi_T)\right]
\;\le\;
\sqrt{\frac{2\,\mathbb E_{\mathbf b}\!\left[\KL(\mu_{\mathbf b}\|\pi_0)\right]}{T}}
\;+\;
\mathcal{O}\!\left(
\sqrt{\frac{1}{N}\log\!\Bigl(\frac{T|\mathcal{F}|}{\delta}\Bigr)}
\right),
\end{align}
and for the uniform initialization $\pi_0$ we have $\mathbb E_{\mathbf b}[\KL(\mu_{\mathbf b}\|\pi_0)]=\log 2$ under the state-averaged convention
(or $H\log 2$ under the summed convention), which is not exponential in $H$.

\highlight{Applying Yao's minimax principle.}
Equations \eqref{eq:standardRL_random_b_lb} and \eqref{eq:prefixRL_random_b_ub} establish an \emph{average-case} separation under the uniform distribution
over instances $\mathbf b$ (with $\Doff$ coupled to $\mathbf b$ in the PrefixRL case).
By Yao's minimax principle, this implies that there exists a fixed instance $\mathbf b$ (and the above choice of initialization $\pi_0$, \textit{e.g.}\ uniform)
for which the same separation holds for any randomized standard on-policy algorithm without $\Doff$ versus Algorithm~\ref{alg:npgalg} with $\Doff$.

\end{proof}

\subsection{Auxiliary Lemmas}
\begin{lemma}
[Performance difference lemma;~\citep{kakade2002approximately}]
\label{lem:pdl}
For all policies $\pi, \pi^\prime$ and initial state distribution 
$\rho$,
\begin{align}
      \E_{\bs_0 \sim \rho} \brck{ V^\pi(\bs_0) -  V^{\pi^\prime}(\bs_0)}
= \E_{\bs_h \sim d_{\bs}^\pi }\E_{a_h\sim \pi(\cdot\mid \bs_h) }
\left[A^{\pi^\prime}(\bs_h,a_h)\right].
\end{align}
\end{lemma}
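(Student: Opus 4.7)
The plan is to establish the identity by the classical Kakade--Langford telescoping argument, adapted to the finite-horizon autoregressive setting used in this paper. First, I would take a trajectory $\tau=(\bs_0,a_0,\bs_1,a_1,\ldots,\bs_{H-1},a_{H-1},\bs_H)$ rolled out under $\pi$ starting from $\bs_0$, and write the left-hand side as the expected return along $\tau$ minus $V^{\pi^\prime}(\bs_0)$. The key trick is to add and subtract the intermediate values $V^{\pi^\prime}(\bs_{h})$ for $h=1,\ldots,H-1$ inside the expectation, so that the $-V^{\pi^\prime}(\bs_0)$ term joins a telescoping sum:
\begin{align*}
V^\pi(\bs_0) - V^{\pi^\prime}(\bs_0)
\;=\; \E_{\tau\sim\pi\mid\bs_0}\!\left[\sum_{h=0}^{H-1}\bigl(r(\bs_h,a_h) + V^{\pi^\prime}(\bs_{h+1}) - V^{\pi^\prime}(\bs_h)\bigr)\right],
\end{align*}
using the boundary condition $V^{\pi^\prime}(\bs_H)=0$ at the terminal state.

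Second, I would apply the Bellman consistency equation for $\pi^\prime$: conditioned on $(\bs_h,a_h)$, the expectation of $r(\bs_h,a_h)+V^{\pi^\prime}(\bs_{h+1})$ over the next-state transition equals $Q^{\pi^\prime}(\bs_h,a_h)$. Because the trajectory is generated by $\pi$, the inner randomness over $\bs_{h+1}\mid \bs_h,a_h$ is exactly the MDP transition, so taking conditional expectation term-by-term yields
\begin{align*}
V^\pi(\bs_0) - V^{\pi^\prime}(\bs_0)
\;=\; \E_{\tau\sim\pi\mid\bs_0}\!\left[\sum_{h=0}^{H-1}\bigl(Q^{\pi^\prime}(\bs_h,a_h) - V^{\pi^\prime}(\bs_h)\bigr)\right]
\;=\; \E_{\tau\sim\pi\mid\bs_0}\!\left[\sum_{h=0}^{H-1} A^{\pi^\prime}(\bs_h,a_h)\right].
\end{align*}

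Third, I would take the outer expectation over $\bs_0\sim\rho$, swap the sum and expectation, and recognize the per-step marginals of $(\bs_h,a_h)$ under $\pi$ as slices of the state-visitation occupancy $d^\pi_{\bs}$ (with $a_h\sim\pi(\cdot\mid\bs_h)$). Under the paper's convention in which $d^\pi_{\bs}$ denotes the aggregated (unnormalized) occupancy $d^\pi_{\bs}(\bs)=\sum_{h=0}^{H-1}\Pr_\pi[\bs_h=\bs\mid \bs_0\sim\rho]$, the sum-over-$h$ collapses into a single expectation against $d^\pi_{\bs}$, which matches the stated right-hand side exactly.

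The main obstacle is not analytic but notational: one must pin down the exact convention for $d^\pi_{\bs}$ (aggregated versus per-step versus normalized), since a wrong convention introduces a stray factor of $H$ or a stray average. Once the convention is fixed consistently with how $d^\pi_{\bs}$ is used elsewhere in the paper (for instance, in the mirror-descent step of Theorem on suboptimality gap, where state expectations are taken under $d^\mu_{\bs}$), the telescoping and Bellman substitutions are entirely routine, and the boundary term $V^{\pi^\prime}(\bs_H)=0$ handles the end of the horizon cleanly.
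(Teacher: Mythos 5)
Your proposal is correct and is exactly the classical Kakade--Langford telescoping argument that the paper itself defers to (its ``proof'' is just a citation to Lemma 6.1 of that reference), so you are taking the same route as the source the authors invoke. Your remark about pinning down the normalization convention for $d^\pi_{\bs}$ is well taken, since the paper's statement leaves that ambiguous, but it does not affect the validity of the argument.
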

\begin{proof}
See proof of Lemma 6.1 in \citet{kakade2002approximately}.
\end{proof}

\begin{lemma}[Three-point identity for KL]
Let $p,q,r$ be distributions on a common measurable space such that $p \ll r \ll q$ and all quantities below are finite. Then
\begin{align}
\KL(p\|q)
=
\KL(p\|r)
+
\KL(r\|q)
+
\left\langle p-r,\ \nabla_r \KL(r\|q)\right\rangle,
\label{eq:kl_three_point_grad}
\end{align}.
For discrete distributions $p, q, r$ we have:
\begin{align}
\nabla_r \KL(r\|q)(x) \;=\; \log\frac{r(x)}{q(x)} + 1 .
\label{eq:grad_kl_rq}
\end{align}
Equivalently we can state this the three-point identity for discrete distributions as:
\begin{align}
\KL(p\|q)
=
\KL(p\|r)
+
\KL(r\|q)
+
\left\langle p-r,\ \log\frac{r}{q}\right\rangle,
\label{eq:kl_three_point_log}
\end{align}
since $\langle p-r,1\rangle = \int (p-r)=0$ (or $\sum_i (p_i-r_i)=0$).
\end{lemma}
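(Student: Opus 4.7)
The identity is purely algebraic once the Radon--Nikodym chain rule is applied, so the plan is to prove the log form \eqref{eq:kl_three_point_log} directly and then derive the gradient form \eqref{eq:kl_three_point_grad} as an equivalent restatement. I will first handle the discrete case (which is what is actually used in \eqref{eq:md_final_statewise}) and only indicate the measure-theoretic extension in passing, since $p \ll r \ll q$ is assumed precisely so the chain rule applies.

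\textbf{Step 1: Log-ratio decomposition.} Starting from $\KL(p\|q) = \sum_x p(x)\log\!\tfrac{p(x)}{q(x)}$, I split the logarithm using $p \ll r \ll q$ (so the ratios are well-defined on the support of $p$) as $\log\tfrac{p(x)}{q(x)} = \log\tfrac{p(x)}{r(x)} + \log\tfrac{r(x)}{q(x)}$. Summing against $p$ gives $\KL(p\|q) = \KL(p\|r) + \sum_x p(x)\log\tfrac{r(x)}{q(x)}$. This is essentially the chain rule $\tfrac{dp}{dq} = \tfrac{dp}{dr}\cdot\tfrac{dr}{dq}$ in the general case.

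\textbf{Step 2: Adding and subtracting $r$.} I write $p(x) = r(x) + (p(x)-r(x))$ inside the remaining sum, producing $\sum_x p(x)\log\tfrac{r(x)}{q(x)} = \KL(r\|q) + \langle p-r,\log\tfrac{r}{q}\rangle$. Combining with Step~1 yields exactly \eqref{eq:kl_three_point_log}. This is the main content of the proof; the remaining step is a bookkeeping observation.

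\textbf{Step 3: Gradient form and equivalence.} To get \eqref{eq:grad_kl_rq}, I treat $\KL(r\|q) = \sum_x r(x)\log r(x) - \sum_x r(x)\log q(x)$ as a function of the vector $r$ and differentiate coordinate-wise, obtaining $\partial_{r(x)}\KL(r\|q) = \log r(x) + 1 - \log q(x)$. To conclude \eqref{eq:kl_three_point_grad} from \eqref{eq:kl_three_point_log}, I note that $\langle p-r,\nabla_r\KL(r\|q)\rangle = \langle p-r,\log\tfrac{r}{q}\rangle + \langle p-r,1\rangle$, and the second term vanishes because $p$ and $r$ are both probability distributions, so $\sum_x(p(x)-r(x)) = 0$. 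This establishes equivalence of the two forms.

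\textbf{Expected obstacle.} There is really no conceptual obstacle; the only point requiring a bit of care is ensuring that all the individual sums in Step~2 are finite so that splitting $\sum_x p(x)\log\tfrac{r(x)}{q(x)}$ into $\KL(r\|q)$ plus a remainder is valid. Under the standing hypothesis ``all quantities below are finite,'' this is immediate. In the general measure-theoretic setting one simply replaces sums by integrals and invokes $p\ll r\ll q$ to justify the Radon--Nikodym chain rule; the algebra of Steps~1--3 is unchanged.
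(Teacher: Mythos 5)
Your proof is correct and is essentially the same argument as the paper's: both rest on splitting $\log\tfrac{p}{q}=\log\tfrac{p}{r}+\log\tfrac{r}{q}$, regrouping to isolate $\langle p-r,\log\tfrac{r}{q}\rangle$, and then converting to the gradient form via $\nabla_r\KL(r\|q)=\log\tfrac{r}{q}+1$ together with $\langle p-r,1\rangle=0$. The only cosmetic difference is that the paper computes the difference $\KL(p\|q)-\KL(p\|r)-\KL(r\|q)$ directly while you build up from $\KL(p\|q)$; the algebra is identical.
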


\begin{proof}
We work in the continuous case; the discrete case is identical with integrals replaced by sums.
Recall
\begin{align*}
\KL(a\|b)=\int a(x)\log\frac{a(x)}{b(x)}\,dx.
\end{align*}
Consider the difference between the left-hand side and the first two KL terms:
\begin{align*}
\KL(p\|q) - \KL(p\|r) - \KL(r\|q)
&=
\int p \log\frac{p}{q}\,dx
-
\int p \log\frac{p}{r}\,dx
-
\int r \log\frac{r}{q}\,dx \\
&=
\int p\left(\log\frac{p}{q}-\log\frac{p}{r}\right)\,dx
-
\int r \log\frac{r}{q}\,dx \\
&=
\int p \log\frac{r}{q}\,dx
-
\int r \log\frac{r}{q}\,dx \\
&=
\int (p-r)\log\frac{r}{q}\,dx \\
&=
\left\langle p-r,\ \log\frac{r}{q}\right\rangle.
\end{align*}
This proves \eqref{eq:kl_three_point_log}. To obtain the gradient form \eqref{eq:kl_three_point_grad}, note that for $\KL(r\|q)=\int r\log(r/q)\,dx$ the pointwise functional derivative with respect to $r$ is
\begin{align*}
\nabla_r \KL(r\|q)(x)=\log\frac{r(x)}{q(x)}+1,
\end{align*}
so
\begin{align*}
\left\langle p-r,\ \nabla_r \KL(r\|q)\right\rangle
=
\left\langle p-r,\ \log\frac{r}{q}\right\rangle
+
\left\langle p-r,1\right\rangle.
\end{align*}
Finally, $\langle p-r,1\rangle=\int(p-r)\,dx = 1-1=0$, yielding \eqref{eq:kl_three_point_grad}.
\end{proof}

\begin{lemma}[Lemma 15 in \citet{song2022hybrid}]
\label{lem:C1-song}
Fix any $R>0$, $\delta \in (0,1)$, and assume we have a class of real-valued functions
$\mathcal{H}:\mathcal{X}\to[-R,R]$.
Suppose we have $K$ i.i.d.\ samples $\{(x_k,y_k)\}_{k=1}^K$ where $x_k\sim \rho$ and $y_k$ is sampled via
the conditional probability $p(\cdot\mid x_k)$:
\begin{align*}
y_k \sim p(\cdot \mid x_k) := h^\star(x_k) + \epsilon_k,
\end{align*}
where $h^\star \in \mathcal{H}$ and $\{\epsilon_k\}_{k=1}^K$ are independent random variables such that
$\mathbb{E}[y_k \mid x_k] = h^\star(x_k)$.
Additionally, suppose that $\max_k |y_k| \le R$ and $\max_x |h^\star(x)| \le R$.
Then the least squares solution
\begin{align*}
\hat h \leftarrow \arg\min_{h\in \mathcal{H}} \sum_{k=1}^K \bigl(h(x_k) - y_k\bigr)^2
\end{align*}
satisfies, with probability at least $1-\delta$,
\begin{align*}
\mathbb{E}_{x\sim\rho}\!\left[\bigl(\hat h(x) - h^\star(x)\bigr)^2\right]
\;\le\;
\frac{256 R^2 \log\!\bigl(2|\mathcal{H}|/\delta\bigr)}{K}.
\end{align*}
\end{lemma}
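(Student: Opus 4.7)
The plan is to derive this finite-class least-squares generalization bound by combining Bernstein-style concentration for excess-squared-loss random variables with a union bound over $\mathcal{H}$. This is the standard recipe whenever the target $h^\star$ lies in the class, which is exactly the realizability hypothesis we are given.

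First, for each $h \in \mathcal{H}$, I would introduce the shifted random variable
\begin{align*}
Z_i(h) \;:=\; (h(x_i) - y_i)^2 - (h^\star(x_i) - y_i)^2.
\end{align*}
Substituting $y_i = h^\star(x_i) + \epsilon_i$ with $\mathbb{E}[\epsilon_i \mid x_i] = 0$ gives $\mathbb{E}[Z_i(h)] = \mathcal{L}(h) := \mathbb{E}_{x\sim\rho}[(h(x) - h^\star(x))^2]$, and the pointwise identity
\begin{align*}
Z_i(h) \;=\; (h(x_i) - h^\star(x_i))^2 - 2\epsilon_i\,(h(x_i) - h^\star(x_i)).
\end{align*}
The crucial quantitative input is a \emph{self-bounding} variance estimate $\mathrm{Var}(Z_i(h)) \leq c_0 R^2\, \mathcal{L}(h)$ together with the pointwise bound $|Z_i(h)| \leq c_1 R^2$, both of which follow directly from $|h - h^\star| \leq 2R$ and $|\epsilon_i| \leq 2R$. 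This self-bounding property is what lets Bernstein convert a $1/\sqrt{K}$ rate into the $1/K$ fast rate.

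Next, I would apply Bernstein's inequality to $\frac{1}{K}\sum_i Z_i(h)$ and take a union bound over the $|\mathcal{H}|$ elements of the class. The result is that, with probability at least $1 - \delta$ simultaneously for every $h \in \mathcal{H}$,
\begin{align*}
\mathcal{L}(h) \;\leq\; \frac{1}{K}\sum_{i=1}^K Z_i(h) \;+\; c_2 \sqrt{\frac{R^2\,\mathcal{L}(h)\,\log(2|\mathcal{H}|/\delta)}{K}} \;+\; \frac{c_3\, R^2\,\log(2|\mathcal{H}|/\delta)}{K}.
\end{align*}
Now I would instantiate this at $h = \hat h$. By the ERM inequality $\sum_i (\hat h(x_i) - y_i)^2 \leq \sum_i (h^\star(x_i) - y_i)^2$, the empirical term is nonpositive, leaving a quadratic inequality in $\sqrt{\mathcal{L}(\hat h)}$ of the form $a \leq b\sqrt{a} + c$. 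Solving via the elementary implication $a \leq b\sqrt{a} + c \implies a \leq 2(b^2 + c)$ collapses this to the claimed $\mathcal{L}(\hat h) = O(R^2 \log(|\mathcal{H}|/\delta)/K)$ rate.

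The main obstacle is quantitative rather than structural: the explicit constant $256$ in the statement is not conceptually meaningful but emerges from the interplay between the Bernstein constants, the pointwise range bound, and the final quadratic-solution step, so faithfully recovering it requires careful constant-chasing (or appealing instead to a sharper one-sided variant such as Audibert--Catoni). The realizability assumption $h^\star \in \mathcal{H}$ is essential in two places and cannot be relaxed without adding an approximation-error term: it is what makes $\mathbb{E}[Z_i(h)] = \mathcal{L}(h)$ exactly the quantity of interest, and it is what makes the empirical term vanish (not merely concentrate) at $\hat h$.
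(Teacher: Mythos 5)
Your proposal is correct and follows essentially the same route as the proof the paper defers to: the paper omits this proof entirely, citing Lemma 15 of \citet{song2022hybrid}, whose argument is exactly the one you outline (excess-loss variables $Z_i(h)$ with self-bounding variance $\mathrm{Var}(Z_i(h)) \le 16R^2\,\mathcal{L}(h)$, Bernstein plus a union bound over the finite class, the ERM inequality killing the empirical term via realizability, and solving the resulting quadratic in $\sqrt{\mathcal{L}(\hat h)}$). Your constants ($|h-h^\star|\le 2R$, $|\epsilon_i|\le 2R$, hence $|Z_i|\le 8R^2$) land comfortably inside the stated factor of $256$, so no further constant-chasing is needed.
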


\noindent
The proof is the same as in \citet{song2022hybrid} and thus is omitted here.

\section{Additional Experiment Details and Results on Back-Generalization}
\label{app:additional_backgen}

This appendix provides some additional discussion on the back-generalization phenomenon with more results and full details for our experiments in Section~\ref{sec:back_generalization}.

\subsection{Back-generalization from prefixed to no-prefix problems goes beyond stitching.}
\label{app:backgen_is_not_stitching}

Consider a straightforward \emph{stitching} argument as an explanation for back-generalization~\citep{zhang2025bread,qu2025pope}. According to this, the model simply learns to complete from the off-policy intermediate states better without updating the next-token distributions on no-prefix problems, but still improves performance on no-prefix problems when the model happens to sample the same intermediate states on its own. Note that this argument still holds in the tabular policy setting. In contrast, we find that back-generalization indeed influences next-token distributions on untrained states (no-prefix problems), which can only occur through favorable function approximation in LLMs.

\subsection{Hard problems used in the in-context back-generalization experiment}
\label{app:backgen_icl_problems}

Section~\ref{sec:back_generalization} studies a meta-learning style setting where the policy is trained on a \emph{prefixed problem} consisting of an in-context example (a full solved hard problem) followed by a target hard problem. We compare transfer when the in-context hard problem is structurally similar to the target problem versus when it is unrelated (Figure~\ref{fig:icl_expt}). We use the following three problems:

\begin{AIbox}{Hard Problem P1 (Pass@16 for base model is 0.119 and is  different from P3)}
A sphere tangent to the $xy$-plane has center having $z$-coordinate $>0$. If it is projected from $P=(0,b,a)$ to the $xy$-plane, it gives the conic section $y=x^2$. If $a=\nicefrac{p}{q}$ for integers $p,q$ what is $p+q$? Answer: 3.
\end{AIbox}

\begin{AIbox}{Hard Problem P2 (Pass@16 for base model is 0.074 and is similar to P3)}
League has 30 teams (East 16, West 14). Inside each division everyone has played others once. If we add interleague games, what is smallest
$k$ for which every team gets exactly 
$k$ games? Answer: 29.
\end{AIbox}

\begin{AIbox}{Hard Problem P3 (Pass@16 for base model is 0.063)}
Amongst 300 people, no one has more than $k-1$ friends. What is the smallest $k$ for which it might be impossible to create some new friendships so that everyone has exactly $k$ friends? Answer: 151.
\end{AIbox}

\highlight{Relatedness criterion (P2 is similar to P3; P1 is different from P3).}
P2 and P3 are both naturally expressed as graph feasibility problems with degree constraints, and their solutions rely on reasoning about global constraints induced by local degree requirements (regularity, parity, and obstruction arguments). In contrast, P1 is a geometry and conic projection problem whose solution structure does not share this graph-theoretic scaffold. Figure~\ref{fig:icl_expt} uses this controlled notion of similarity to separate two effects: when the prefix and suffix share a compatible representation (P2$\rightarrow$P3), training on prefixed problems yields substantially stronger transfer than when they do not (P1$\rightarrow$P3).

\subsection{Back-generalization under model-family mismatch}
\label{app:backgen_model_mismatch}

\begin{figure*}[!h]
    \centering
    \begin{subfigure}[t]{0.32\linewidth}
        \centering
        \includegraphics[width=\linewidth]{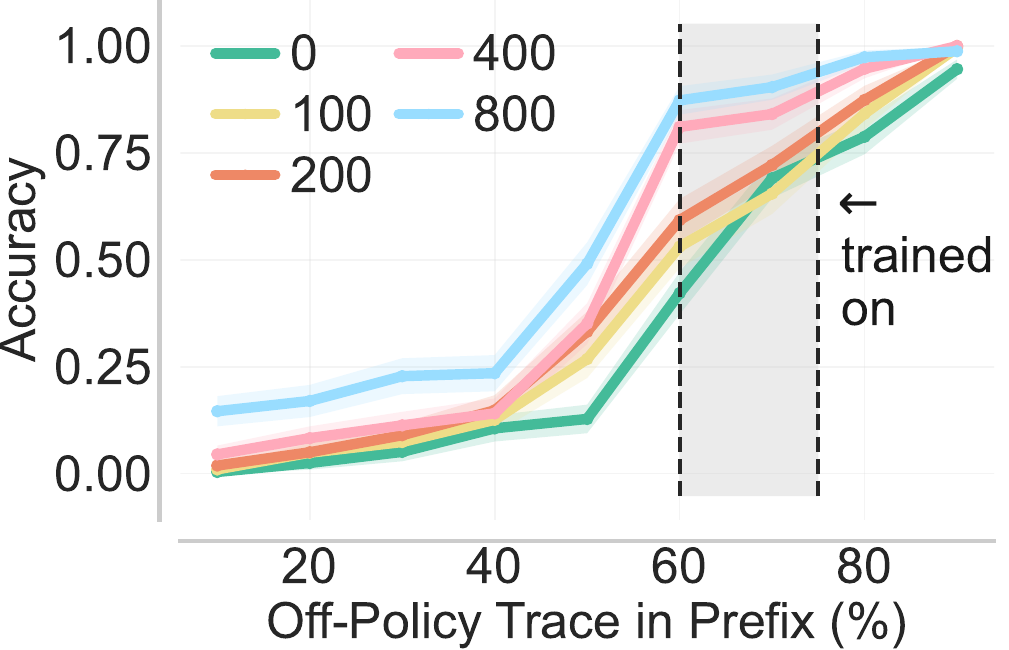}
        \caption*{(a)}
    \end{subfigure}\hfill
    \begin{subfigure}[t]{0.32\linewidth}
        \centering
        \includegraphics[width=\linewidth]{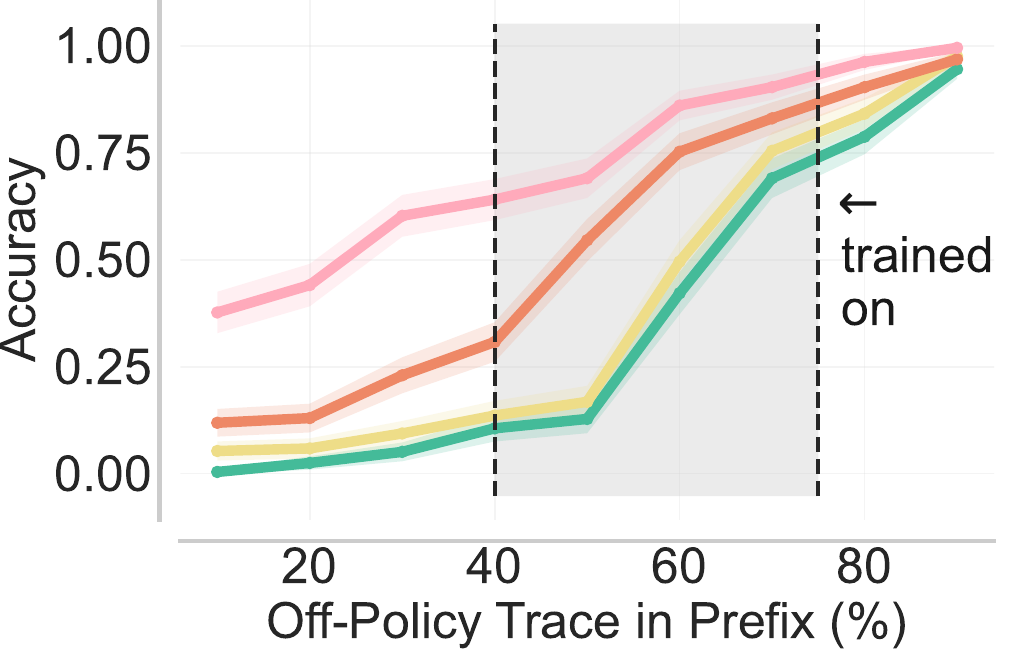}
        \caption*{(b)}
    \end{subfigure}\hfill
    \begin{subfigure}[t]{0.32\linewidth}
        \centering
        \includegraphics[width=\linewidth]{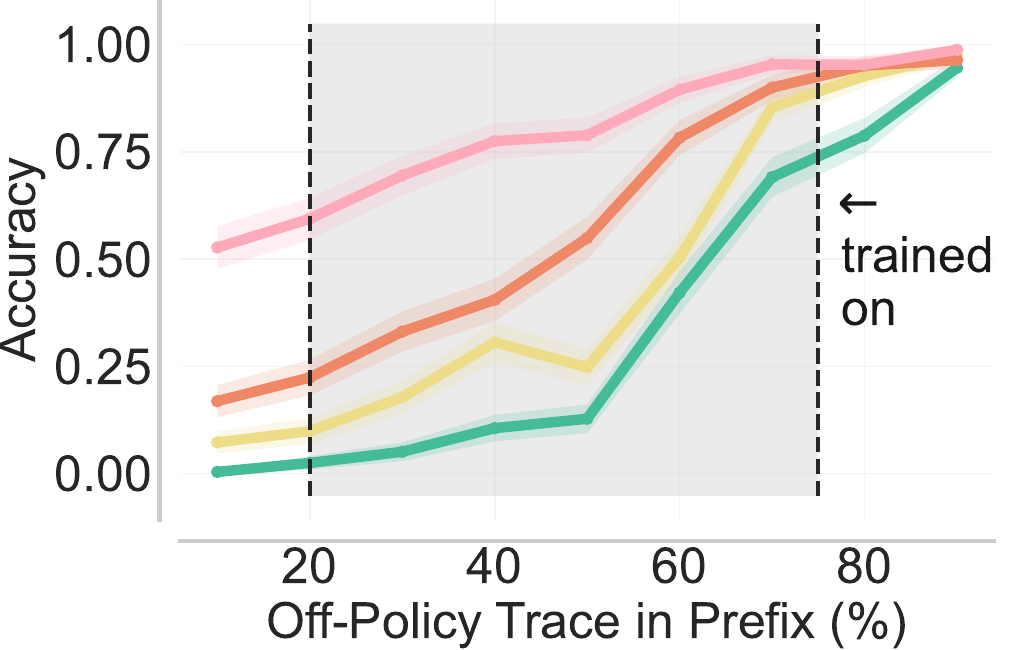}
        \caption*{(c)}
    \end{subfigure}
    \vspace{-0.1in}
    \caption{\textbf{Back-generalization (train-test mismatch):} On Llama3.1-8b-instruct we run RL only on prefixed problems sourced from Qwen3-4b-instruct whose prefix length (percent of the off-policy trace) lies in the shaded interval. At evaluation, we test across the full range of prefix lengths, including no-prefix problems (0\% prefix). The performance at different RL training iterations (0, 100, 200, 400 and 800) is represented with different colors. Similar to Figure~\ref{fig:back-gen}, where the prefixes are also sourced from Llama-3.1-8b When the mismatch is moderate, training on longer prefixes improves performance on shorter prefixes and can eventually improve no-prefix, indicating back-generalization (b, c). But different from Figure~\ref{fig:back-gen}, we find that when the prefix distribution is skewed towards long prefixes back-generalization is weaker despite running RL for 800 iterations.}
    \label{fig:back-gen-qwen}
    \vspace{-0.1in}
\end{figure*}

This section reports an additional train test mismatch experiment analogous to Figure~\ref{fig:back-gen}, but where the off-policy prefixes used to form prefixed problems are sourced from a different model family. Concretely, we train Llama3.1-8B-Instruct with PrefixRL while constructing prefixed problems from correct off-policy traces generated by Qwen3-4B-Instruct.

\highlight{Setup.}
We first collect a pool of correct off-policy traces from Qwen3-4B-Instruct. Each trace induces a family of \emph{prefix states} by truncating the trace to a specified prefix length (reported as a percentage of the full trace). A prefixed problem is formed by conditioning the training policy on such a prefix state and then asking it to complete the solution for the same underlying problem. During training, we restrict prefix lengths to lie in a band (the shaded interval in Figure~\ref{fig:back-gen-qwen}), and we run on-policy RL \emph{only} on these prefixed problems. At evaluation, we sweep prefix lengths across the full range, including the no-prefix endpoint (0\% prefix), and report accuracy at multiple RL training iterations.

\highlight{Cross-family prefixes can still induce back-generalization.}
Figure~\ref{fig:back-gen-qwen} shows that cross-family prefixes can still induce back-generalization, but the effect is less robust than in the same-family setting of Figure~\ref{fig:back-gen}. When the mismatch is moderate (training includes prefixes that are not concentrated exclusively at the very end of traces), improvements appear near the trained prefix region and then propagate to shorter prefixes, eventually improving no-prefix performance (Figure~\ref{fig:back-gen-qwen}b,c). In contrast, when the training prefix distribution is skewed toward long prefixes, back-generalization is weaker, and the transfer to shorter prefixes and to no-prefix remains limited even after long training (800 iterations) (Figure~\ref{fig:back-gen-qwen}a).

\highlight{Choosing the right prefix distribution.}
A key difficulty in this setting is that the training prefixes are not sampled from the current policy. They are injected from an external model (Qwen), and therefore correspond to states that can be very unlikely under the evolving Llama policy. When training concentrates on very late prefixes, the policy can improve primarily on a narrow slice of the prefixed state distribution without sufficiently shaping behavior on earlier states that dominate no-prefix rollouts. This makes cross-family prefixing most effective when the training mixture covers a sufficiently broad range of prefix lengths, rather than concentrating only on long, heavily conditioned prefixes.

 \subsection{Mental Model: Function Approximation and Back-Generalization.} 
\label{app:mental_model}

\begin{figure*}[!t]
    \centering
    \begin{subfigure}[t]{0.45\linewidth}
        \centering
        \includegraphics[width=\linewidth]{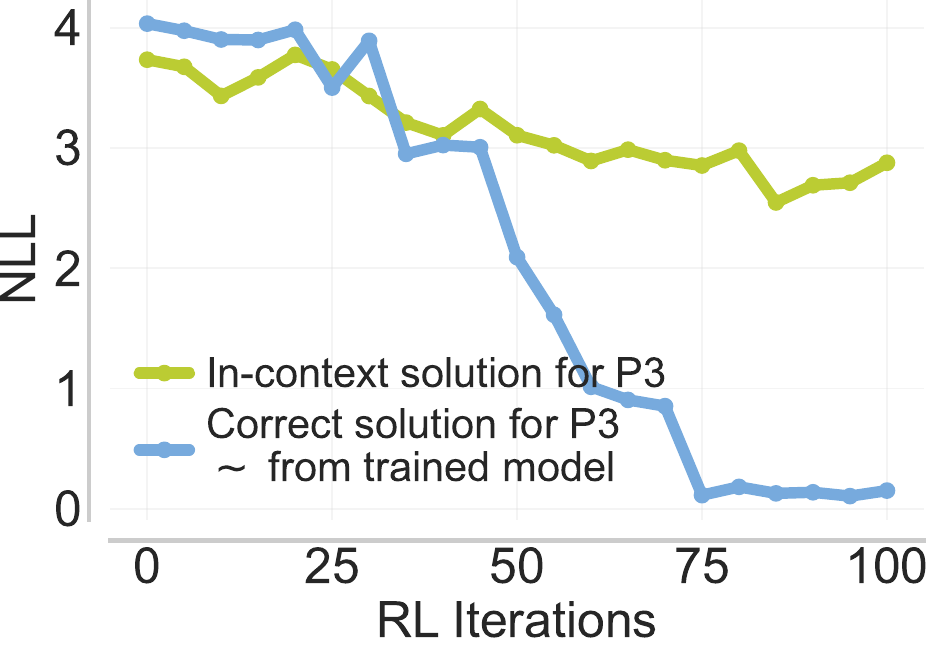}
        \caption*{\vspace{-0.05in}(a)}
    \end{subfigure}
    \vspace{-0.1in}
    \caption{\textbf{(a) Likelihood of the in-context solution:} When asked to solve problem P3, we measure the negative log-likelihood (NLL) of the in-context solution for P3 provided in the context for P2, \textit{i.e.}, when we run PrefixRL on P2 $\mid$ P3 (setup in Section~\ref{subsec:backgen_icl}). Surprisingly, we find that the likelihood of the in-context trace drops less compared to a correct trace for P3 we sample from the final checkpoint. This indicates that back-generalization does not exactly clone the in-context prefix to improve performance on the no-prefix counterpart.}
    \label{fig:nll}
    \vspace{-0.2in}
\end{figure*}

 \highlight{NLL of the in-context solution changes little.} A natural hypothesis is that improvements on the untrained but in-context problem P3 come from memorizing the in-context trace and replaying it at test time. Instead, because this trace is extremely unlikely and never directly trained on, the model does not learn to imitate it (we also saw an example of this in Figure~\ref{fig:synthetic_expt}(b)). In Figure~\ref{fig:nll}, the negative log-likelihood of the in-context P3 solution barely decreases under RL on P2$\mid$P3; the final policy instead prefers a different token sequence that still yields the correct answer. Together with the correlation in Figure~\ref{fig:synthetic_expt}, this suggests strong similarity between prefixed and no-prefix solutions, but weak similarity to the specific off-policy prefix used in the prefixed problem.
 
 Although the model does not clone the in-context off-policy trace, performance still transfers to the in-context problem. Our speculation is that for long chain-of-thought rollouts, “state” is better viewed as the model’s internal representation induced by the prefix: because the model self-corrects and backtracks, many distinct token sequences can map to similar latent states. Thus, while solving the prefixed problem, the policy can backtrack into representations close to those encountered when solving the original problem directly, but now with positive reward. If the history-conditioned and non-history-conditioned representations are close, rewards observed in the former will shift the next-token distribution in the latter. This explains the overlap between prefixed and no-prefix responses (Figure~\ref{fig:synthetic_expt}), why transfer is stronger for related pairs like P2$\mid$P3 than unrelated ones like P2$\mid$P1 (Figure~\ref{fig:icl_expt}), and why the model still may not learn to reproduce the literal off-policy prefix when its rephrased representation is far from the original context (Figure~\ref{fig:back-gen-qwen}(a)).

\section{Additional Experiments and Details for Results in Section~\ref{sec:experiments}}
\label{app:additional_experiments}

\subsection{Implementation details for PrefixRL and Baselines}
\label{app:additional_baselines}

\subsubsection{Off-policy RL Baselines}

In our work, we use CISPO~\citep{chen2025minimax} to compute the policy gradient following \citet{khatri2025artscalingreinforcementlearning} which found it to work best on long RL runs. Moreover, CISPO can also handle off-policy updates, \textit{i.e.}, update $\pit$ on a trajectory $\tau$ sampled from $\mu \neq \pit$ with an importance weighting term common in off-policy RL~\citep{fujimoto2018off}.
For each $\bx$ in batch $\cal{B}$, CISPO samples $k$ reponses $\{ \yix\}_{i=1}^k$ where $\yix \sim \mu(\cdot \mid \bx)$, then the CISPO policy gradient is given by: 
{
\setlength{\abovedisplayskip}{6pt}
\setlength{\belowdisplayskip}{6pt}
\begin{align}
\text{\textbf{\textcolor{lightblue}{(CISPO)}}} \quad & \frac{1}{\text{token-sum}}  \sum_{\bx\in \mathcal{B}}\sum_{i=1}^k
\sum_{h=1}^{|\yix|} \big(\text{stop-grad} \left(\max\left(w(\bx,y^{\bx}_{i,h}), \varepsilon_\mathrm{high}\right)\right) \cdot A(\bx,\yix) \cdot \nabla_{\pi}\log \pit(y^{\bx}_{i,h} \mid \bx, \by_{i,h}^{\bx})\big), \nonumber\\
&\quad\quad\;\;\;  \text{where} \quad w(\bx, y^{\bx}_{i,h}) = \frac{\pit(y^{\bx}_{i,h} \mid \bx,\by^{\bx}_{i,<h})}{\mu(y^{\bx}_{i, h} \mid \bx,\by^{\bx}_{i,<h})} \quad \text{and} \quad \text{token-sum} = \sum_{\bx \in \cal{B}} \sum_{i \in [k]} |\yix|.
\label{eq:cispo-grad}
\end{align}
}In \eqref{eq:cispo-grad}, the advantage $A(\bx, \yix)$ is computed by subtracting the baseline $\bar r(x)=\tfrac{1}{k}\sum_{i=1}^k r(\bx, \yix)$ from $r(\bx, \yix)$. The per-token importance weight $w(\bx, \by^{\bx}_{i,h})$ accounts for the distribution shift between the current policy $\pit$ and the sampler $\mu$ and is $1$ for on-policy traces where $\pit = \mu$. To reduce gradient variance from importance weights, it is clipped at $\varepsilon_\mathrm{high}$ and in practice we set it to a value of $0.01$.  

In our setup, the off-policy dataset $\Doff$ is constructed by \emph{rejection sampling} the base policy $\pi^0$: for each prompt $\bx$, we repeatedly sample $\by \sim \pi^0(\cdot \mid \bx)$ until we obtain one \emph{correct} trajectory (according to the verifier), and store that successful trajectory in $\Doff$.
This procedure induces an \emph{accepted} (conditional) behavior distribution
\begin{align}
\mu_{\mathrm{acc}}(\by \mid \bx)
\;\;=\;\;
\pi^0(\by \mid \bx,\; r(\bx,\by)=1)
\;\;=\;\;
\frac{\pi^0(\by \mid \bx)\,\mathbf{1}\{r(\bx,\by)=1\}}{a(\bx)},
\qquad
a(\bx)\;:=\;\Pr_{\by \sim \pi^0(\cdot \mid \bx)}\!\big[r(\bx,\by)=1\big].
\label{eq:mu_acc}
\end{align}
Thus, when we treat accepted trajectories as “samples from $\mu$” in \eqref{eq:cispo-grad}, the correct sequence-level importance ratio for an accepted trajectory $\by$ is
\begin{align}
\frac{\pi^t(\by \mid \bx)}{\mu_{\mathrm{acc}}(\by \mid \bx)}
\;=\;
a(\bx)\cdot \frac{\pi^t(\by \mid \bx)}{\pi^0(\by \mid \bx)}
\qquad
\text{(since $\mathbf{1}\{r(\bx,\by)=1\}=1$ for accepted $\by$).}
\label{eq:seq_level_ratio}
\end{align}
The key point is that the acceptance-probability correction $a(\bx)$ is a \emph{trajectory-level} factor: it appears once per accepted sequence, not once per token.
In practice, we estimate $a(\bx)$ directly from the rejection-sampling effort.
Let $R(\bx)$ be the number of rollout attempts required to obtain one correct trace for $\bx$ during dataset construction; then $\widehat a(\bx) \approx 1/R(\bx)$. CISPO, however, uses per-token importance weights $w(\bx,y^{\bx}_{i,h})$ (Eq.~\eqref{eq:cispo-grad}) and aggregates gradients across tokens.
A simple way to incorporate the acceptance correction is to multiply \emph{each token} in an accepted trajectory by $\widehat a(\bx)$:
\begin{align}
\widetilde w(\bx, y^{\bx}_{i,h})
\;:=\;
\widehat a(\bx)\cdot
\frac{\pi^t(y^{\bx}_{i,h}\mid \bx,\by^{\bx}_{i,<h})}{\pi^0(y^{\bx}_{i,h}\mid \bx,\by^{\bx}_{i,<h})}.
\label{eq:token_weight_with_accept}
\end{align}
This heuristic is slightly biased relative to the sequence-level ratio in \eqref{eq:seq_level_ratio}: multiplying every token effectively scales an accepted trajectory’s total contribution by approximately $\widehat a(\bx)\cdot |\by|$ (modulo the global token normalization), whereas the exact correction would apply $\widehat a(\bx)$ once at the trajectory level.
When accepted trajectories have heterogeneous lengths, this introduces a mild length-dependent bias.
Empirically, we found this approximation to be stable, and it preserves the intended qualitative effect: prompts that are harder under $\pi^0$ (larger $R(\bx)$, smaller $\widehat a(\bx)$) receive smaller off-policy gradient mass, reflecting the fact that an accepted sample from $\mu_{\mathrm{acc}}$ is “more selective” for those prompts.

\highlight{LUFFY baseline: mixed-policy GRPO with policy shaping.}
We also compare against LUFFY~\citep{yan2025learningreasonoffpolicyguidance}, which incorporates off-policy reasoning traces by mixing them with on-policy rollouts inside a GRPO-style objective.
LUFFY computes advantages using \emph{group computation} over a mixed set of rollouts: for each prompt it combines $N_{\mathrm{on}}$ on-policy samples with $N_{\mathrm{off}}$ off-policy traces, and normalizes rewards using the mean and standard deviation over the union of the two groups.
In our reproduction, we follow LUFFY’s “fair” setting by using $8$ total samples per prompt with a $1$-off-policy / $7$-on-policy split, rollout batch size $128$, update batch size $64$, and rollout temperature $1.0$.
For optimization, LUFFY uses a constant learning rate of $10^{-6}$ and trains for $500$ steps. We also removes KL regularization (setting $\beta=0$) and uses an entropy-loss coefficient of $0.01$.
Finally, LUFFY introduces \emph{policy shaping} via a regularized importance-sampling transformation controlled by a parameter $\gamma$, and for this we use $\gamma=0.1$, chosen after sweeoing over $\{0.01, 0.1, 0.2\}$.

\subsubsection{Hyperparameter Details for PrefixRL}

We use the REINFORCE~\citep{ahmadian2024basicsrevisitingreinforcestyle} on-policy algorithm for PrefixRL and standard RL. For this, we set the training batch size of 128 with, a constant learning rate of $1\times10^{-6}$. We turn off any KL regularization and also disable entropy regularization (entropy coefficient 0). We also use a gradient clipping of 1.0.  We set the sampling temperature for training at 1.0 for Llama3.1-8b-instruct and 0.7 for Qwen-3-4b-instruct. 
At test-time we sample with a temperature of 0.7 for both models, including the inference to collect data for rejection sampling. 
For all our PrefixRL runs we use $n=8$ rollouts per prompt in the batch. We use the same for standard RL, and off-policy RL, except specified otherwise. In all our runs in Section~\ref{sec:experiments} and Section~\ref{sec:back_generalization} we run RL training for 400 iterations, except for our RL runs in Section~\ref{subsec:backgen_icl} and Section~\ref{subsec:backgen_synthetic} where run the training for 100 iterations.   We set the maximum context length $H$ of the output trace to be $16384$ for both the off-policy traces and during the RL run. We do not change this maximum limit for the prefixed problems, depsite the prefix itself possibly being 12k-14k tokens long in some cases. To account for such cases, we set the maximum sequence length to be 36000 tokens, though in practice the completions for the prefixed-problems with longer prefixes are typically much shorter and this limit of 36000 is hit very rarely.    

Before we run RL, we finetune the Llama3.1-8b-instruct model on OpenThoughtsV3~\citep{guha2025openthoughtsdatarecipesreasoning}. For this, we first filter the dataset to only retain responses of token length $<24192$. Then, we run SFT for 5 epochs on this data at peak learning rate of $8e-5$. We use a batch size of 512 traces per batch and a cosine learning rate schedule that has a linear warm up (for 10\% of the total iterations) followed by a cosine decay to a learning rate of 8e-6. We use a hold out validation set to measure the negative log-likelihood loss during training, and pick the earliest checkpoint with the least validation loss as the final distilled model.

\subsection{Evaluation Protocols and FLOPs accounting.}
\label{app:additional_compute_accounting}

All evaluation results in Section~\ref{sec:experiments} are on the original no-prefix problems. For the plots where we report pass@$k$, we estimate it by drawing 256 samples per problem and using the bootstrapped estimate in~\cite{chen2021evaluating}.  Where possible, we include 95\% confidence intervals across evaluated problems. Details on FLOPs accounting for our compute-matched plots are as follows. 
We compute FLOPs using the standard Transformer approximation: processing $D$ tokens with a model of $N$ trainable parameters costs $\approx 2ND$ FLOPs for a \emph{forward-only} pass (sampling/inference) and
$\approx 6ND$ FLOPs for a \emph{training update} (forward + backward + gradient computation)~\citep{snell2024scalingllmtesttimecompute}.

\highlight{Definitions.} We use $N$ to denote the number of trainable parameters of the model whose compute is being measured and $D$ to denote the total number of tokens processed by that model in the relevant stage, summed over all sequences (prompt + generated tokens).

\highlight{Per-iteration compute.}
At RL iteration $t$, let $D^{(t)}_{\mathrm{samp}}$ be the total number of tokens generated/evaluated during rollout sampling, and let $D^{(t)}_{\mathrm{upd}}$ be the total number of tokens used in gradient-based optimization.
We estimate FLOPs as
\begin{align}
\mathrm{FLOPs}^{(t)} \;\;=\;\; 2N\,D^{(t)}_{\mathrm{samp}} \;+\; 6N\,D^{(t)}_{\mathrm{upd}}.
\label{eq:flops_iter}
\end{align}

\highlight{Cumulative compute.} The x-axis in our compute-matched plots reports cumulative FLOPs after $T$ iterations:
\begin{align}
\mathrm{FLOPs}_{\le T}
\;\;=\;\;
\sum_{t=1}^{T} \;\; \mathrm{FLOPs}^{(t)}.
\label{eq:flops_total}
\end{align}
If a method includes up-front rejection sampling to construct $\mathcal{D}_{\mathrm{off}}$, we add that forward-only cost (also using $2ND$) to \eqref{eq:flops_total}. So, if we need to sample $R$ times before we accept a correct trace for a problem $\bx$, then the total upfront compute spent on the problem is $2RND$.

\subsection{Qwen with Llama Prefixes}
\label{app:qwen_with_llama_prefixes}

\begin{figure}[!h]
    \centering
    \includegraphics[width=0.43\linewidth]{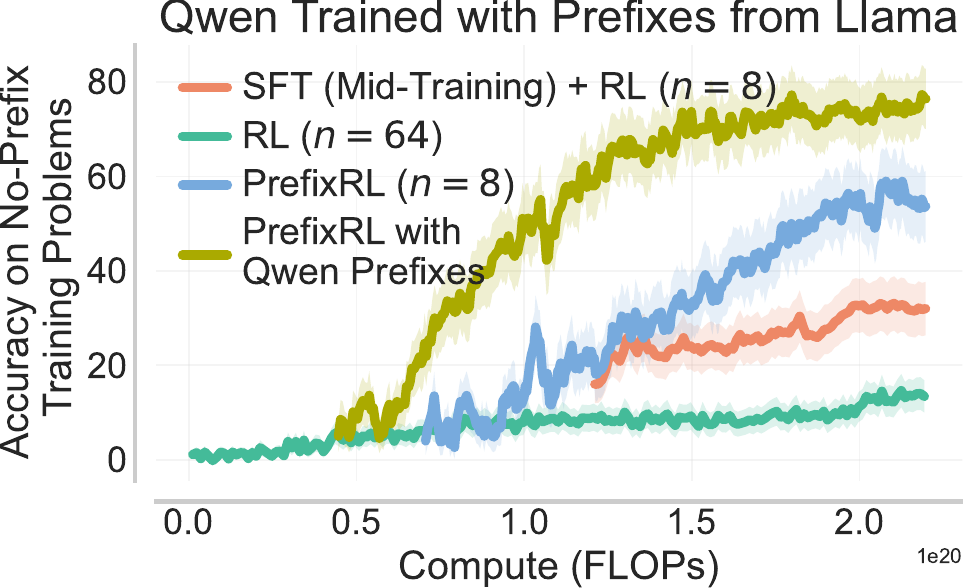}
    \hspace{2em}
    \includegraphics[width=0.43\linewidth]{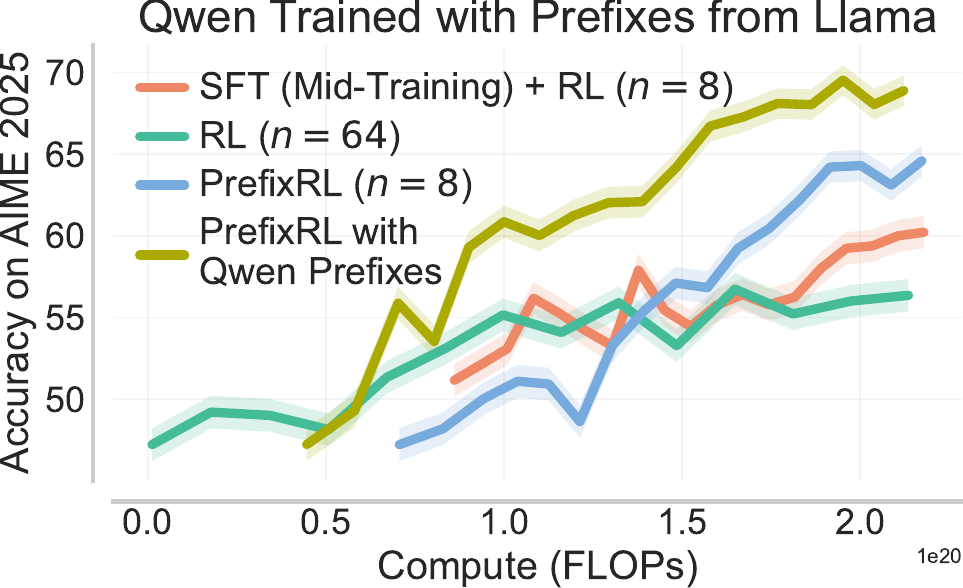}
    \caption{\textbf{Training Qwen model with prefixes from Llama.} Opposite of our experiment in Figure~\ref{fig:prefix_switch}, here we train Qwen3-4b-instruct base LLM using off-policy traces sourced by rejection sampling Llama-3.1-8b-instruct model. We find that while off-policy prefixes from Llama are not as effective at improving Qwen as the other way around. Moreover, we also end up spending more compute upfront on rejection sampling the traces with the less capable Llama base model. We show accuracy on no-prefix training problems \textit{(left)} and AIME 2025 \textit{(right)}.}
    \label{fig:prefix_switch_2}
\end{figure}

Figure~\ref{fig:prefix_switch_2} complements the cross-family back-generalization results in
Figure~\ref{fig:back-gen-qwen} by reversing the direction of prefix transfer: instead of training Llama
using Qwen-sourced off-policy prefixes, we train Qwen using prefixes sourced from Llama.
The key takeaway is asymmetric transfer. While Qwen prefixes can still drive back-generalization when
optimizing Llama (Section~\ref{app:backgen_model_mismatch}), Llama prefixes are noticeably less effective
for improving Qwen in both training accuracy (no-prefix training problems) and standardized evaluation
(AIME 2025).

This asymmetry is consistent with the back-generalization discussion: PrefixRL relies on prefix states that
are injected from an external distribution, and the degree to which learning transfers to no-prefix behavior
depends on how informative and ``compatible'' those prefix states are with the target model's internal
representations and solution strategies. When prefixes are sourced from a more capable model family (here,
Qwen), they tend to encode higher-quality intermediate reasoning states, and RL can more readily propagate
improvements from prefixed states to earlier states and ultimately to the no-prefix setting. In contrast,
prefixes sourced from a less capable model (here, Llama) are both (i) harder to obtain via rejection sampling
and (ii) less likely to contain strategy-revealing intermediate states that meaningfully constrain the
continuation policy, resulting in weaker transfer when training Qwen.

A second practical implication is compute efficiency. Since PrefixRL amortizes training over a fixed pool of
off-policy traces, the total compute depends not only on the on-policy RL phase but also on the \emph{upfront}
cost of harvesting correct traces. Rejection sampling from the weaker base model increases this upfront cost,
and Figure~\ref{fig:prefix_switch_2} shows that even after paying that cost, the resulting prefixes yield smaller
downstream gains for Qwen. Together with Figure~\ref{fig:back-gen-qwen}, these results suggest that cross-family
prefix sourcing is most attractive when (a) the source model is strong enough to produce diverse correct traces
at reasonable cost, and (b) the injected prefix states align with the target model sufficiently well to allow
back-generalization to propagate to the no-prefix distribution.

\subsection{Computing the gradient norm and standard deviation metrics}
\label{app:grad_norm_std}

To quantify training signal-to-noise, we track (i) the \emph{norm of the expected gradient} and (ii) the
\emph{standard deviation of the sampled gradient} throughout RL training, using the same procedure described at
the end of Section~\ref{sec:experiments}. 

Let $g_t \in \mathbb{R}^N$ denote the (flattened) stochastic policy gradient computed at iteration $t$ from the
current minibatch (including any on-policy and/or off-policy contributions, depending on the method).
We maintain exponential moving averages (EMA) of the first and second moments \emph{coordinate-wise}:
\begin{align}
m_t &= \beta m_{t-1} + (1-\beta)\, g_t, \\
v_t &= \beta v_{t-1} + (1-\beta)\, (g_t \odot g_t),
\end{align}
where $\odot$ denotes elementwise multiplication and $\beta \in (0,1)$ is a fixed smoothing constant.

\highlight{Gradient norm.}
We report the norm of the mean gradient estimate as
\begin{align}
\textsc{GradNorm}_t \;=\; \lVert m_t \rVert_2.
\end{align}

\highlight{Gradient standard deviation.}
We estimate the (coordinate-wise) variance as $s_t = v_t - m_t \odot m_t$ and report
\begin{align}
\textsc{GradStd}_t
\;=\;
\sqrt{\sum_{i=1}^{N} \max\{(s_t)_i,\,0\}}
\;=\;
\sqrt{\mathrm{tr}\!\left(\widehat{\mathrm{Cov}}(g_t)\right)}.
\end{align}
Equivalently, this standard-deviation metric corresponds to estimating the trace of the gradient covariance
matrix via first/second moments.

\subsection{Constructing Prefixed Problems: An Example}
\label{subsec:example_prefixed}

\begin{promptbox}{Original Problem (No-Prefix)}
\small 
$<|\text{im}\_\text{start}|>$user

Consider the cube whose vertices are the eight points $(x, y, z)$ for which each of $x, y$, and $z$ is either 0 or 1 . How many ways are there to color its vertices black or white such that, for any vertex, if all of its neighbors are the same color then it is also that color? Two vertices are neighbors if they are the two endpoints of some edge of the cube. Please reason step by step, and put your final answer within boxed.

$<|\text{im}\_\text{end}|>$
\end{promptbox}

\begin{promptbox}{Prefixed Problem}
\small 
$<|\text{im}\_\text{start}|>$user

Consider the cube whose vertices are the eight points $(x, y, z)$ for which each of $x, y$, and $z$ is either 0 or 1 . How many ways are there to color its vertices black or white such that, for any vertex, if all of its neighbors are the same color then it is also that color? Two vertices are neighbors if they are the two endpoints of some edge of the cube. Please reason step by step, and put your final answer within boxed. 
$<|\text{im}\_\text{end}|>$
$<|\text{im}\_\text{start}|>$assistant

<think>
This is a complex or challenging question, and it is difficult to provide a direct and correct answer. I need to think about it.
Well, so we have a cube with vertices at all 0-1 triples, so 8 vertices total. Each vertex has degree 3, right? Because in a cube, each vertex is connected to three others (differing in one coordinate). The problem says: color vertices black or white such that for any vertex, if all its neighbors are the same color, then it's also that color. So let's parse that condition.

Let's rephrase the condition as a logical implication to make it clearer. For a vertex v, let N(v) be its neighbors (3 neighbors). Let c(v) be color, say 0=white, 1=black for simplicity (though labels might not matter, just two colors). The condition is: if c(u) = c(w) for all u, w in N(v), then c(v) = c(u). So in other words, it's not allowed for a vertex to be a different color from all its neighbors when all neighbors are the same color. Wait, actually, "if all neighbors are same color, then it's that color" – so the only way this fails is if there's a vertex where all neighbors are, say, black, but the vertex is white. Similarly, all neighbors white but vertex black is bad.

So forbidden configurations: a vertex with 3 black neighbors being white, or 3 white neighbors being black. All other configurations are allowed? Let's check: if a vertex has neighbors of different colors, then the condition is vacuously true? Wait, no: the condition is a conditional. The antecedent is "all neighbors same color", so if the antecedent is false (i.e., neighbors are not all same color), then the implication is true regardless of the vertex color. Yes! That's key. So the only forbidden things are vertices where all three neighbors are white and the vertex is black, or all three neighbors are black and the vertex is white. For any other coloring (where for every vertex, either it has mixed neighbors or if it has monochromatic neighbors, it's the same color as those neighbors), it's okay.

So maybe another way: for each vertex, when we look at its neighborhood (the three adjacent vertices), if the neighborhood is monochromatic (all one color), then the vertex must be that color. If the neighborhood is not monochromatic (has both colors), no restriction on the vertex color (well, it's already colored, but the condition doesn't impose anything here).

So our goal is to count all 256 colorings satisfying this condition for every vertex.

Maybe it's helpful to model this as a graph. The cube graph Q3, 8 vertices, each degree 3. We need colorings f: V(Q3) → {0,1} such that for all v in V, if f(N(v)) is constant, then f(v) = f(N(v)). So equivalently, for all v, if f(N(v)) not in f(neighbor) for some neighbor (i.e., N(v) not monochromatic), then no problem; if N(v) monochromatic, then f(v) must equal that color.

Let me try to list all possible colorings or find a pattern. Maybe start with colorings that are "monochromatic": all black or all white. Let's check if these work.

All black: take any vertex, all neighbors are black, so condition says it must be black – which it is. Good. Similarly all white: same thing, all neighbors white, so vertex must be white – works. So 2 colorings here.

Now what about colorings with some black, some white. Let's think about smaller cubes? Wait, maybe not. Let's consider linear extensions or something, but maybe better to think about the structure of the cube.

Another idea: in graph theory, such conditions can sometimes be related to "fixed points" under some operation, but maybe not. Alternatively, let's consider the possible colorings by their "stability" or by what constraints they impose.

Wait, let's take an example. Suppose we have a single black vertex, say the origin (0,0,0). Then its neighbors are (1,0,0), (0,1,0), (0,0,1), all white (by assumption). Now check w = (1,0,0), which is white. What are its neighbors? (0,0,0) [black], (1,1,0), (1,0,1). So w has one black neighbor, two unknowns so far. Let's say we color (1,1,0) and (1,0,1) white. Then w has neighbors: black, white, white – so not monochromatic, so condition is satisfied for w. Now check the other endpoints: (1,1,0) is white, neighbors: (1,0,0) [white], (0,1,0) [white], (1,1,1) [unknown]. If we color (1,1,1) black, then (1,1,0) has neighbors: white, white, black – not monochromatic, okay. (1,0,1) similar: neighbors white, white, black (at (1,1,1)), so okay. Now (1,1,1) is black, neighbors: (1,1,0) [white], (1,0,1) [white], (0,1,1) [unknown]. If we color (0,1,1) white, then (1,1,1) has neighbors white, white, white – all white, but it's black! Uh-oh, contradiction! 
\end{promptbox}

\end{document}